\newcites{s}{References}
\theoremstyle{plain}
\newtheorem{definition}{Definition}
\newtheorem{remark}{Remark}
\newtheorem{theorem}{Theorem}
\newtheorem{lemma}{Lemma}
\newtheorem{assumption}{A}
\newcommand{\mat}[1]{\ensuremath{\boldsymbol{\mathbf{#1}}}}
\newcommand{\tensor}[1]{\ensuremath{\boldsymbol{\mathscr{{#1}}}}}
\newcommand{\vectorization}[1]{\ensuremath{\text{vec}\left(#1\right)}}
\newcommand{\trace}[1]{\ensuremath{\text{Tr}(#1)}}
\newcommand{\trans}[1]{\ensuremath{{#1}^{\top}}}
\newcommand{\hide}[1]{}
\newcommand{\Po}[1]{ \ensuremath{\mathfrak{X}{(#1)} }}
\newcommand{\Poo}[1]{ \ensuremath{\mathfrak{X}^*{(#1)} }}
\newcommand{\errorT}{\ensuremath{\tensor{E}}}
\newcommand{\errorM}{\ensuremath{\mat{E}_{(k,s)} }}
\newcommand{\hatT}{\ensuremath{\hat{\tensor{T}}}}
\newcommand{\starT}{\ensuremath{\tensor{T}^*}}
\newcommand{\mTk}{\ensuremath{\tensor{M}^k} }
\newcommand{\mMk}{\ensuremath{\mat{M}_{(k,s)}^k }}
\newcommand{\tT}{\ensuremath{\tensor{T}}}
\newcommand{\tM}{\ensuremath{\mat{T}_{(k,s)} }}
\newcommand{\trnn}[1]{\ensuremath{\|#1 \|_{\text{trnn}}}}
\newcommand{\trop}[1]{\ensuremath{\|#1 \|_{\text{{trnn}}}^*}}
\newcommand{\tropL}[1]{\ensuremath{\|#1 \|_{\emph{trnn}}^*}}
\newcommand{\frakX}[1]{\ensuremath{\mathfrak{X}(#1)}}
\newcommand{\frakXt}[1]{\ensuremath{\mathfrak{X}^*(#1)}}
\newcommand{\starTm}{\ensuremath{\mat{T}_{(k,s)}^*}}
\newcommand{\nn}[1]{\ensuremath{\|#1 \|_*}}
\newcommand{\fro}[1]{\ensuremath{\|#1 \|_{F}}}
\newcommand{\fros}[1]{\ensuremath{\|#1 \|_{F}^2 }}
\newcommand{\expectation}[1]{\ensuremath{\mathbb{E}[#1] }}
\newcommand{\expectations}[1]{\ensuremath{ \mathbb{E}^{2}[#1] }}
\newcommand{\tTt}{\ensuremath{\tilde{\tensor{T}}}}
\newcommand{\tMt}{\ensuremath{\tilde{\mat{T}}_{(k,s)}}}
\begin{document}	
	\title{Noisy Tensor Completion via Low-Rank Tensor Ring }
	\author{Yuning~Qiu, Guoxu~Zhou, Qibin~Zhao, \IEEEmembership{Senior Member, IEEE}, and Shengli~Xie, \IEEEmembership{Fellow, IEEE}
		 \thanks{This work is supported in part by Natural Science Foundation of China under Grant 61673124, Grant 61903095, Grant 61727810, and Grant 61973090, in part by Guangdong  Natural Science Foundation under Grant 2020A151501671. (\emph{Corresponding authors: Guoxu Zhou.})}
		\thanks{Yuning Qiu is with the School of Automation, Guangdong University of Technology, Guangzhou 510006, China (e-mail: yuning.qiu.gd@gmail.com).}
		\thanks{Guoxu Zhou is with the School of Automation, Guangdong University of Technology, Guangzhou 510006, China and also with the Key Laboratory of Intelligent Detection and The Internet of Things in Manufacturing, Ministry of Education, Guangdong University of Technology, Guangzhou 510006, China (e-mail: gx.zhou@gdut.edu.cn).}
		\thanks{Qibin Zhao is with the Center for Advanced Intelligence Project (AIP), RIKEN, Tokyo, 103-0027, Japan (e-mail: qibin.zhao@riken.jp).}
		\thanks{Shengli Xie is with the School of Automation, Guangdong University of Technology, Guangzhou 510006, China, and also with the Guangdong- Hong Kong-Macao Joint Laboratory for Smart Discrete Manufacturing, Guangdong University of Technology, Guangzhou 510006, China (e-mail: shlxie@gdut.edu.cn).}
	}
	\markboth{Journal of \LaTeX\ Class Files,~Vol.~14, No.~8, August~2015}%
	{Shell \MakeLowercase{\textit{et al.}}: Bare Demo of IEEEtran.cls for IEEE Journals}
	 \maketitle
	\begin{abstract}
	Tensor completion is a fundamental tool for incomplete data analysis, where the goal is to predict missing entries from partial observations. However, existing methods often make the explicit or implicit assumption that the observed entries are noise-free to provide a theoretical guarantee of exact recovery of missing entries, which is quite restrictive in practice. To remedy such drawback, this paper proposes a novel noisy tensor completion model, which complements the incompetence of existing works in handling the degeneration of high-order and noisy observations. Specifically, the tensor ring nuclear norm (TRNN) and least-squares estimator are adopted to regularize the underlying tensor and the observed entries, respectively. In addition, a non-asymptotic upper bound of estimation error is provided to depict the statistical performance of the proposed estimator. Two efficient algorithms are developed  to solve the optimization problem with convergence guarantee, one of which is specially tailored to handle large-scale tensors by replacing the minimization of TRNN of the original tensor equivalently with that of a much smaller one in a heterogeneous tensor decomposition framework. Experimental results on both synthetic and real-world data demonstrate the effectiveness and efficiency of the proposed model in recovering noisy incomplete tensor data compared with state-of-the-art tensor completion models.
	\end{abstract}
	
	\begin{IEEEkeywords}
		Tensor completion, tensor ring decomposition, low-rank tensor recovery, image/video inpainting
	\end{IEEEkeywords}
 
\section{Introduction}
\IEEEPARstart{A}{tensor} is an  array of numbers, giving a faithful and effective representation to maintain the intrinsic structure of multi-dimensional data \cite{TamaraG.KoldaBrettW.Bader}. Many data collected in real-world applications can be naturally expressed as high-order tensors. For example, a color video sequence can be viewed as a fourth-order tensor due to its spatial, color  and  temporal variables; a light field image can be formulated as a fifth-order tensor indexed by one color, two spatial, and two angular variables. For this reason, numerous theoretical and numerical tools for tensor data analysis have been  developed and applied in many fields, including computer vision \cite{hu2016moving}, multi-view clustering \cite{xie2018unifying,zhang2018generalized},  blind source separation \cite{cichocki2015tensor}, bio-informatics \cite{zhou2016linked},  pattern recognition \cite{guo2016support}, etc.
Among them, tensor completion is one of the most practical and significant problems, which aims at predicting the missing entries of a low-rank tensor with partial observations. Recent works have demonstrated that the incomplete low-rank tensor can be exactly or approximately recovered under some appropriate assumptions, and have been successfully implemented to broad tensor completion applications \cite{Ee2014,Yuan2016,Huang2020a}, such as, images and videos inpainting \cite{Hu2017, Zhao2015b, Xue2019},  multi-relational link prediction \cite{ermics2015link,nickel2013tensor},  and achieve promising performance. 

Low-rank tensor completion (LRTC)   can be viewed as a multi-dimensional extension of low-rank matrix completion  (LRMC)  which aims at recovering the intrinsic low-rank matrix with incomplete observations \cite{candes2009exact,candes2010power,recht2010guaranteed,Koltchinskii2000,Klopp2014}. 
However, this extension is rather nontrivial since it is difficult to find a well-defined  tensor rank for complex multilinear structure. Two most popular tensor rank definitions are CANDECOMP/PARAFAC (CP)  rank \cite{kiers2000towards} and Tucker rank \cite{tucker1966some}. For a given tensor, CP decomposition factorizes a tensor into the sum of rank-1 components, and the minimum number of these rank-1 components is defined as the CP rank. However, CP decomposition often suffers from the issue of ill-posedness, i.e., the optimal low-rank approximation does not always exist, which may lead to a poor fit in practical applications \cite{haastad1990tensor,de2008tensor}.
Compared with CP rank, Tucker rank is a more promising alternative, which is defined on the rank of  unfolding matrices  along each mode \cite{TamaraG.KoldaBrettW.Bader}. 
Such treatment is often convenient and reasonable, since it can obtain multi-dimensional low-rank structure maintained in high-order tensors  \cite{JiLiu2010b,Liu2016,Xu2015a}. 
However, this unbalanced (i.e., one versus the others) unfolding  scheme often yields fat matrices that have been blamed for their poor restoration performance in LRMC problem. As a result, the performance of Tucker rank based tensor completion methods often tends to degrade due to this inevitable issue \cite{Bengua2017a,Ee2014}.

\begin{figure*}[t]
	\centering
	\includegraphics[width=18cm]{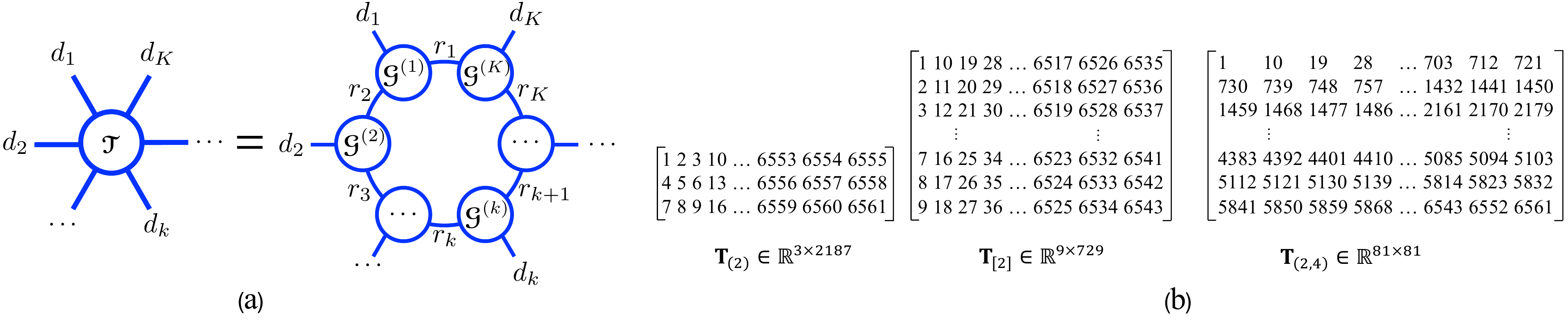}
	\caption{Illustration of TR decomposition and different unfolding matrices. (a) A graphical representation of TR decomposition; (b) comparisons of different unfolding matrices  on an eighth-order tensor $\tT\in \mathbb{R}^{3\times3\times3\times3\times3\times3\times3\times3} $ where the rank of unfolding matrices is associated with its Tucker, TT and TR rank, respectively. The synthetic tensor $\tT$ is generated by MATLAB command: $ \tT =\mcode{reshape}(1:6561, [3,3,\cdots,3])$.}
	\label{fig-trd}
\end{figure*}

In recent years, several novel tensor rank definitions have been proposed to achieve state-of-the-art tensor completion performance, e.g., tensor train (TT) rank \cite{Wen2008}, tensor tree rank \cite{Liu2019b}, and tensor ring (TR) rank \cite{zhao2016tensor}. Particularly, TR rank is one of the most promising methods. According to the circular structure of TR decomposition, given a high-order tensor,  the TR rank minimization problem can be squared to multiple rank minimization subproblems of square unfolding matrix \cite{Yu2019c}, which can conceptually solve the existing issues of Tucker and TT rank. Fig. \ref{fig-trd} shows the intuitive representation of TR decomposition and differences between three unfolding schemes. In \cite{Wang2017}, Wang et al. first introduced a low-rank TR completion model by alternatively minimizing the core tensors. To avoid manually adjusting TR rank, Yuan et al. \cite{Yuan2019a} established a new TR rank minimization scheme by minimizing nuclear norm on all core tensors. 
In \cite{Yu2019c}, Yu et al. proposed a tensor ring nuclear norm (TRNN) minimization model for LRTC. To guarantee the completion performance of TRNN, Huang et al. \cite{Huang2020a} further demonstrated that given a tensor satisfied strong TR incoherence condition, it can be exactly recovered with high probability. In \cite{Yu2020b}, Yu et al. proposed a parallel matrix factorization based method for low-TR-rank tensor completion.

Most LRTC methods mentioned above assume that the observed entries are noiseless. 
However, in real-world applications, data with missing issue caused by sensor failures, storage errors, or other mistakes also usually suffers from noise corruptions. 
Thus, it is rather desirable to predict tensor data from both noisy and incomplete observations. Although noisy tensor completion problem is more practical and worthwhile, there is very limited literature and only a few related works on this aspect. 
Tensor decomposition or total variation regularization based models have been proposed to alleviate the noisy completion problems \cite{acar2011scalable,Zhao2015b,yokota2019simultaneous}, however the statistical performance of all these methods is still unclear.  For this reason, Wang et. al introduced several tensor tubal rank based methods to solve third-order noisy tensor completion problem with theoretical guarantee \cite{wang2017near,Wang2018,Wang2019}.
Albeit interesting and promising, tensor tubal rank is very sensitive to the choice of the third-mode, and particularly difficult to capture  complex intra-mode and inter-mode correlations for high-order tensors.

In this paper, we propose to recover tensor data from  incomplete noisy observations, which substantially generalize noisy matrix completion problem by not only deriving theoretical results and practical applications  of noisy completion to high-order tensors, but also providing more scalable and efficient algorithms.  The main contributions of this paper can be summarized as following.
\begin{enumerate}
	\item We propose a novel tensor completion model to predict the missing entries with noisy observations based on low-rank TR model, which complements the incompetence of existing works in handling the degeneration of high-order and noisy observations. We analyze a non-asymptotic upper bound of the estimation error to reveal the statistical performance of the proposed model, which has been further proved to be optimal up to a logarithm factor in a minimax sense.
	\item We derive two algorithms based on alternating direction method of multipliers (ADMM)  to solve the optimization problem with convergence guarantee, namely, noisy tensor ring completion (NTRC) and Fast NTRC (FaNTRC). For FaNTRC, we minimize the TRNN equivalently on a much smaller tensor in a heterogeneous tensor decomposition framework, which has been shown to significantly improve the computational efficiency, especially dealing with large-scale low-TR-rank tensor data.
	\item The proposed NTRC and FaNTRC are successfully implemented in various noisy tensor data completion problems, the experimental results demonstrate their superiority compared with state-of-the-art LRTC methods.
\end{enumerate}

The remainder of this paper is organized as follows.
Section \ref{section-notations} presents a brief review on some notations and preliminaries. In Section \ref{section-proposed-method}, we introduce the noisy tensor ring completion model,  and analyze its non-asymptotic upper bound on Frobenius norm,  which has been further proved to be near-optimal in a minimax sense. In Section \ref{section-optmization-algorithm}, we develop two efficient algorithms to solve the optimization problem with convergence guarantee. Section \ref{section-experiments} gives experimental results of noisy tensor completion tasks on both synthetic and real-world tensor data.

\begin{table}[]
	\caption{Notations Used in This Paper}
	\label{table-Notations}
	\centering
	\begin{tabular}{lp{6cm}p{6cm}} \hline \hline
		Notations & Descriptions \\ \hline 
		$\tensor{A}$, \ensuremath{\mat{A}}, \ensuremath{\mat{A}(:,r)}, \ensuremath{\mat{A}(i,r)}      & A tensor, matrix, the $r$th column and $(i,r)$th entry of $\mat{A}$, respectively.  \\
		$\mat{I}_{R_k}$& An identity matrix of size $R_k \times R_k$.  \\
		$\left \langle \tensor{A}, \tensor{B} \right \rangle$ & Inner product of two tensor $\tensor{A}$ and $\tensor{B}$: $\langle \tensor{A} , \tensor{B} \rangle = \sum_{i_1} \sum_{i_2} \cdots \sum_{i_K} \tensor{A}({i_1i_2 \cdots i_K})  \tensor{B} ({i_1 i_2 \cdots i_K})$. \\
		vec(\ensuremath{\mat{A}})                         & Vectorizing a matrix \ensuremath{\mat{A}} into a vector.  \\
		$\| \mat{A} \|$ & Spectral norm of $\mat{A}$.\\
		$\| \mat{A} \|_*$ & Nuclear norm of $\mat{A}$.\\
		$\| \tensor{A} \|_{\infty}$ & $\ell_{\infty}$ norm of  \tensor{A}: $\| \tensor{A} \|_{\infty} = \|\text{vec}(\tensor{{A}})\|_{\infty}$. \\
		$\| \tensor{A} \|_F$ & Frobenius norm of $\tensor{A}$: $\| \tensor{A}\|_F = \sqrt{\langle \tensor{A}, \tensor{A} \rangle}$.\\
		$\trnn{\tensor{A}}$ & Tensor ring nuclear norm of $\tensor{A}$. \\ 
		$\trnn{\tensor{A}}^*$ & The dual norm of tensor ring nuclear norm.  \\
		$\text{rank}_{\text{tr}}(\tensor{A})$ & TR rank of   $\tensor{A}$.  \\
		$[K]$ & The set of all positive integer less than $K$, i.e., $[K]=\{1,2,\cdots, K\}$.  \\
		\ensuremath{\otimes}                              & The Kronecker product. \\
		\hline \hline 
	\end{tabular}
\end{table}

\section{Notations and Preliminaries} \label{section-notations}
\subsection{Notations}
We list some notations and their corresponding abbreviations in Table \ref{table-Notations}. 
Let $\bm{\varepsilon} \in \mathbb{R}^{N} $ be i.i.d. Rademacher sequences. Let $c$ and its derivatives like $c_0$, $c_1$, etc., be the generic absolute constants. Let $D$ be the number of total entries of the given tensor, i.e., $D=\prod_{k=1}^{K} d_k$. For any $a, b \in \mathbb{R}$, we let $a\wedge b = \min \{a, b\}$, and $a\vee b = \max \{a,b\}$. For the circular mode-$(k,s)$  unfolding matrix $\mat{T}_{(k,s)}$ of size $d_{1,k} \times d_{2,k}$, we let $\tilde{d}_k = d_{1,k} + d_{2,k}$,  $\hat{d}_{k} =   {d}_{1,k} \wedge  {d}_{2,k} $, $\check{d}_{k} =  d_{1,k} \vee d_{2,k}$, and $k^* = \arg\min_{k\in[K]} (d_{1,k} \wedge d_{2,k} )$. 

\subsection{Tensor Preliminaries}

\begin{definition} [TR decomposition \cite{zhao2016tensor}]
	The tensor ring (TR) decomposition represents a $K$th-order tensor $\tT \in  \mathbb{R}^{d_1 \times \cdots \times d_K}$ by the circular multilinear product over a sequence of third-order core tensors, i.e., $\tensor{T} = \emph{TR}( \tensor{G}^{(1)}, \cdots, \tensor{G}^{(K)} )$, where $\tensor{G}^{(k)} \in \mathbb{R}^{r_k \times d_k \times r_{k+1}}$, $k \in [K]$, and $r_{K+1} = r_{1}$. Element-wisely, it can be represented as
	\begin{equation}
		\tensor{T}({i_1, i_2, \cdots, i_K}) = \sum_{\upsilon_1, \cdots , \upsilon_K }^{r_1, \cdots, r_K} \prod_{k=1}^{K} \tensor{G}^{(k)} ( \upsilon_{k},  i_k,\upsilon_{k+1} ).
	\end{equation}
	The size of cores, $r_k, k=1,\cdots, K$, denoted by a vector $[r_1, \cdots, r_K]$, is called {TR rank}. 
\end{definition}

\begin{remark}[Multiple states in TR decomposition] \label{remark-states-of-TR} 
	TR decomposition can be divided into three states, i.e., supercritical ($d_k < r_k r_{k+1} $), critical ($d_k = r_k r_{k+1} $), and subcritical ($d_k > r_k r_{k+1}$) states \cite{ye2018tensor}. Previous work \cite{Wang2017} has also experimentally found this fact in some real-world tensor data in the sense that letting $r_k r_{k+1} > d_k$ to achieve more favorable recovery performance.  
\end{remark}

\begin{lemma} \label{lemma-multi-state}
	Tensor data with TR supercritical or critical states is  full-Tucker-rank. 
\end{lemma}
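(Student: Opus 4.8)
The plan is to reduce the statement to elementary facts about the ranks of the mode-$k$ unfoldings $\tenmat[k]{T}$, $k\in[K]$, and then to exploit the defining inequality $d_k\le r_k r_{k+1}$ of the supercritical/critical regime. First I would recall from \cite{zhao2016tensor} that if $\tT=\text{TR}(\tensor{G}^{(1)},\dots,\tensor{G}^{(K)})$ then each mode-$k$ unfolding factorizes as
\[
\tenmat[k]{T}\;=\;\mat{G}^{(k)}_{(2)}\,\big(\mat{G}^{(\neq k)}_{(2)}\big)^{\!\top},
\]
where $\mat{G}^{(k)}_{(2)}\in\mathbb{R}^{d_k\times r_k r_{k+1}}$ is the mode-$2$ unfolding of the $k$-th core, and $\mat{G}^{(\neq k)}_{(2)}\in\mathbb{R}^{(\prod_{j\neq k}d_j)\times r_k r_{k+1}}$ is the mode-$2$ unfolding of the subchain tensor obtained by contracting all cores except $\tensor{G}^{(k)}$. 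This factorization already yields the upper bound $\rank{\tenmat[k]{T}}\le\min\{d_k,\,r_k r_{k+1},\,\textstyle\prod_{j\neq k}d_j\}$.

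Next, in the supercritical or critical state one has $d_k\le r_k r_{k+1}$, so this minimum collapses to $\min\{d_k,\prod_{j\neq k}d_j\}$, which equals $d_k$ in the regime of interest (i.e.\ $d_k\le\prod_{j\neq k}d_j$, as holds for $K\ge 3$ with non-degenerate sizes). It then suffices to show the bound is tight. Because $\mat{G}^{(k)}_{(2)}$ has $d_k$ rows and at least $d_k$ columns, for cores in general position it has full row rank $d_k$; since in addition the two factors of $\tenmat[k]{T}$ sit in generic position relative to one another, the product attains the largest rank permitted by the factor ranks, namely $d_k$. (In the balanced case $r_k r_{k+1}\le\prod_{j\neq k}d_j$ both factors have full rank and this is immediate from Sylvester's rank inequality; the remaining case is handled by the same genericity consideration applied to $\ker\mat{G}^{(k)}_{(2)}$ and the column space of the second factor.) Hence $\rank{\tenmat[k]{T}}=d_k$.

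Finally, since $k\in[K]$ was arbitrary, every mode-$k$ unfolding of $\tT$ has the maximal rank $d_k=\min\{d_k,\prod_{j\neq k}d_j\}$; equivalently, the Tucker rank of $\tT$ is $(d_1,\dots,d_K)$, i.e.\ $\tT$ is full-Tucker-rank. The one substantive step — everything else being a dimension count on the factorization — is the lower bound $\rank{\tenmat[k]{T}}\ge d_k$: it rests on the $k$-th core being non-degenerate, i.e.\ $\mat{G}^{(k)}_{(2)}$ having full row rank, which is exactly what the count $d_k\le r_k r_{k+1}$ makes possible (and which fails outright in the subcritical state $d_k>r_k r_{k+1}$, where $\rank{\tenmat[k]{T}}\le r_k r_{k+1}<d_k$).
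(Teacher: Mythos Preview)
Your argument is correct and rests on the same core observation as the paper's, but you take a more direct route and are more explicit about what is actually being proved. The paper establishes Lemma~\ref{lemma-multi-state} inside the proof of Lemma~\ref{lemma-tr-tucker}: it builds a Tucker representation $\tT=\tilde{\tT}\times_1\mat{U}_1\cdots\times_K\mat{U}_K$ from the TR format by taking the skinny SVD of each core unfolding $\mat{G}^{(k)}_{(2)}\in\mathbb{R}^{d_k\times r_k r_{k+1}}$, and simply observes that whenever $d_k\le r_k r_{k+1}$ this SVD yields no compression, so one sets $\mat{U}_k=\mat{I}_{d_k}$ and $R_k=d_k$; from ``$\mat{U}_k$ is the identity'' the paper declares the tensor full-Tucker-rank. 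You instead work directly with the factorization $\tenmat[k]{T}=\mat{G}^{(k)}_{(2)}\big(\mat{G}^{(\neq k)}_{(2)}\big)^{\top}$ and bound $\rank{\tenmat[k]{T}}$ from both sides. The payoff of your route is that it makes explicit the non-degeneracy needed for the lower bound $\rank{\tenmat[k]{T}}=d_k$: strictly speaking, the paper's step only shows that the constructed Tucker format uses a full-size factor in mode~$k$, not that no smaller Tucker representation exists, whereas your Sylvester-type argument actually pins down the rank for cores in general position. Both proofs ultimately reduce to the single dimension count $d_k\le r_k r_{k+1}$ on $\mat{G}^{(k)}_{(2)}$; yours is the more careful of the two.
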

Proof of Lemma \ref{lemma-multi-state} is a part of proof of Lemma \ref{lemma-tr-tucker}, it can be found in Appendix \ref{proof-tr-tucker} of supplementary material. According to Lemma \ref{lemma-multi-state},  incomplete tensor with TR supercritical or critical states may not be recovered by Tucker rank minimization-based methods, since it is a full-Tucker-rank approximation problem. 

Next, we give three tensor-matrix unfolding schemes, and show their relationship with tensor ranks.

\begin{definition}[Multi-index operation \cite{cichocki2014era}]
The multi-index operation is given by
	\begin{equation}
		\begin{split}
					\overline{i_1,i_2,\cdots,i_K}  = & i_1 + (i_2 -1) d_1 + (i_3 - 1 ) d_1d_2 + \cdots  \\
					& + (i_K-1)d_1\cdots d_{K-1},
		\end{split}
	\end{equation}
where $i_k \in [d_k], k \in [K]$.
\end{definition}

\begin{definition} [Canonical mode-$k$ unfolding  \cite{TamaraG.KoldaBrettW.Bader}] \label{def-canonical-mode-k}
	Let $\tT$ be a $K$th-order tensor, its canonical mode-$k$ unfolding is denoted by $\mat{T}_{(k)}$ of size $d_k \times \prod_{j\neq k}^{K} d_j$, whose elements are given by 
	\begin{equation}
	\mat{T}_{(k)} (i_k, \overline{i_1 \cdots i_{k-1} i_{k+1} \cdots i_{K}}) = \tT (i_1, i_2,\cdots, i_K).
	\end{equation}
\end{definition}

\begin{definition} [First $k$-modes unfolding \cite{Wen2008}]  \label{def-first-mode-k}
	Let $\tT$ be a $K$th-order tensor, its first $k$-modes unfolding is denoted by $\mat{T}_{[k]}$ of size $\prod_{i=1}^k d_i \times \prod_{j=k+1}^{K} d_j$, whose elements are given by 
	\begin{equation}
	\mat{T}_{[k]} (\overline{i_1 i_2 \cdots i_k}, \overline{i_{k+1} \cdots  i_{K}}) = \tT (i_1, i_2,\cdots, i_K).
	\end{equation}
\end{definition}

\begin{definition} [Circular mode-$(k,s)$ unfolding \cite{Yu2019c}]
	Let $\tT$ be a $K$th-order tensor, its circular mode-$(k,s)$  unfolding is a matrix, denoted by $\mat{T}_{(k,s)}$ of size $d_{1,k} \times d_{2,k}$, where $d_{1,k} = \prod_{u=l+1}^{k-1} d_{u}$, $d_{2,k} = \prod_{v=k}^{l} d_{v}  $, and $s$ is the number of indexes included in $d_{2,k}$, and 
	\begin{equation} \label{eq-definition-circular-k}
	l = \left\{\begin{split}
	&k+s-1, ~~~~~~~~~ k+s \leq K, \\
	&k+s-1-K,~~~ \emph{ otherwise}.
	\end{split} \right.
	\end{equation}
	Alternatively, its element-wise form is given by $\mat{T}_{(k,s)}( \overline{i_{l+1} \cdots i_{k-1}}, \overline{\underbrace{i_{k} \cdots i_{l}}_{s \emph{ indexes}} }  ) = \tT(i_1,i_2,\cdots,i_K )$.
\end{definition} 
 Given an arbitrary tensor $\tT \in \mathbb{R}^{d_1\times d_2 \times \cdots \times d_K}$, the circular mode-$(k,s)$ unfolding of $\tT$ can be easily implemented by functions \mcode{reshape} and \mcode{permute} in MATLAB, i.e.,
\begin{equation*}
	\begin{split}
			\mat{T}_{(k,s)} = \mcode{reshape}(\mcode{permute}&(\tT, [l+1, \cdots, k-1, \underbrace{k, \cdots, l}_{s \text{ indexes}}]), \\
		& \prod_{u=l+1}^{k-1} d_u, \prod_{v=k}^{l}d_v).
	\end{split}
\end{equation*}

\begin{remark} [{ Relationship between different tensor ranks and unfolding schemes}] \label{remark-1}
	{Given an arbitrary $K$th-order tensor with TR rank $[r_1, \cdots, r_K]$,  the rank of each circular mode-$(k,s)$ unfolding matrix is bounded by $r_k r_{k+s}$ \cite{Yu2019c}. 
	This relationship can also be found in Tucker (or TT) rank between canonical mode-$k$ (or first $k$-modes) unfolding \cite{Wen2008,TamaraG.KoldaBrettW.Bader}. Therefore, the complex tensor rank minimization problem can be equivalently  reduced to a series of matrix rank minimization subproblems.}
\end{remark}
{ 
According to Remark \ref{remark-1}, to minimize TR rank, a natural option is to consider the sum of rank of unfolding matrices:
\begin{equation} \label{eq-tr-rank-minimization}
	\min_{\tT} ~\sum_{k=1}^{K} \alpha_k \text{rank}(\mat{T}_{(k,s)} ).
\end{equation}
However, problem (\ref{eq-tr-rank-minimization}) is computational intractable in general. Motivated by the proxy of rank function, the sum of nuclear norm  has been adopted as a convex surrogate of (\ref{eq-tr-rank-minimization}), and revealed in the following definition. }
\begin{definition}[Tensor ring nuclear norm (TRNN) \cite{Yu2019c}]
Let $\tT$ be a tensor with TR rank $[r_1, \cdots, r_K]$, then its TRNN is given by:
	\begin{equation} \label{eq-trnn-definition}
		\|{\tT}\|_{\emph{trnn}} = \sum_{k=1}^{K} \alpha_k \| \tM \|_*,
	\end{equation}
	where $\alpha_{k} \in [0,1]$ and $\sum_{k=1}^{K} \alpha_{k} = 1$  corresponds to the weight of mode-$(k,s)$ unfolding.
\end{definition}

	{
 
%
TRNN is defined by the sum of nuclear norm on circular mode-$(k,s)$ unfolding matrices, which is similar with Tucker nuclear norm (TcNN) \cite{JiLiu2010b} and tensor train nuclear norm (TTNN) \cite{Bengua2017a}.  
TcNN always minimizes the nuclear norm on multiple fat matrices due to its one versus the others unfolding (see Definition \ref{def-canonical-mode-k}), while the unfolding matrices of TTNN are usually fat or thin matrices when $k$ approaches to one or $K$ (see Definition \ref{def-first-mode-k}). 
Instead, by simply setting $s=\lceil K/2 \rceil$, TRNN is defined on a series of square matrices. 
Interestingly,  minimizing nuclear norm on these square matrices often tends to obtain exact recovery guarantee with small number of observations in LRMC problem \cite{Candes2012a}, since it is easier to achieve $\text{rank} (\mat{T})\ll d_1 \wedge d_2$ than the fat or thin matrices with $d_1 \wedge d_2 \ll d_1 \vee d_2 $. 
Additionally, according to Lemma \ref{lemma-multi-state}, incomplete tensor with TR supercritical state and TR critical state may not be recovered via TcNN. However, TRNN can avoid such issue, since the number of column and row of the circular unfolding matrix is of order $\mathcal{O}(d^{\lceil K/2 \rceil} )$ and $\mathcal{O}(d^{\lfloor K/2\rfloor})$, respectively, while the rank is merely of order $\mathcal{O}(r^2)$, indicating the sufficient low-rank structure in circular unfolding matrix. Moreover, due to the square unfolding on each order,  the weights $\alpha_k, k\in [K]$ of TRNN can be simply set to $1/K$ rather than carefully adjusting the optimal value as TcNN and TTNN. 
}

Next, we give the dual norm of TRNN in the following Lemma, which will play a significant role in statistical analysis of the proposed estimator. 
\begin{lemma} \label{lemma-dual-norm}
The  dual norm of TRNN is defined by
	\begin{equation}
	\| \tensor{T} \|_{\emph{trnn}}^* = \inf_{\tensor{Y}^1 + \cdots + \tensor{Y}^K =\tensor{T}} \max_{k=1,\cdots, K} {\alpha_k}^{-1} \| \mat{Y}_{(k,s)}^{k} \|,
	\end{equation}
	where $\tensor{Y}^k$ denotes the $k$th latent component.
\end{lemma}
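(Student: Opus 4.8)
The plan is to derive the dual norm from the definition of TRNN as an infimal convolution of scaled nuclear norms. Recall that $\|\tT\|_{\mathrm{trnn}} = \sum_{k=1}^K \alpha_k \|\tM\|_*$, and that for a fixed $k$ the map $\tT \mapsto \alpha_k \|\tM\|_*$ is a norm on the space of tensors (it is a seminorm in general, but on the relevant subspace where the unfolding is injective it is a genuine norm; one can also treat it as a seminorm and the computation goes through with the convention that the dual is $+\infty$ off the appropriate subspace). The key structural fact is that a sum of norms $f = f_1 + \cdots + f_K$ has conjugate/dual given by the infimal convolution of the conjugates: $f^*(\tensor{Y}) = \inf_{\tensor{Y}^1 + \cdots + \tensor{Y}^K = \tensor{Y}} \max_k f_k^*(\tensor{Y}^k)$, where for a norm the dual norm is the support function of the unit ball and the conjugate of a norm is the indicator of the dual unit ball. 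So the first step is to invoke this Fenchel-duality identity for sums of norms.

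Second, I would compute the dual norm of each summand $f_k(\tT) = \alpha_k \|\tM\|_*$. Since circular mode-$(k,s)$ unfolding is a linear isometry from the tensor (with Frobenius inner product) onto matrices (with the trace inner product) — the entries are merely rearranged — the dual of $\|\cdot\|_*$ under this identification is the spectral norm $\|\cdot\|$. Scaling: if $g(\mathbf{x}) = \alpha_k \|\mathbf{x}\|_*$ then $g^*(\mathbf{z}) = \alpha_k^{-1}\|\mathbf{z}\|$, because $\sup_{\mathbf{x}} \langle \mathbf{z}, \mathbf{x}\rangle - \alpha_k\|\mathbf{x}\|_*$ is $0$ when $\|\mathbf{z}\| \le \alpha_k$ and $+\infty$ otherwise, i.e. the indicator of the ball $\{\|\mathbf{z}\|\le\alpha_k\}$, whose support function is $\alpha_k^{-1}\|\cdot\|$ — equivalently, the dual norm of $\alpha_k\|\cdot\|_*$ is $\alpha_k^{-1}\|\cdot\|$. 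Thus $f_k^*(\tensor{Y}^k) = \alpha_k^{-1}\|\mat{Y}_{(k,s)}^k\|$ (as the dual norm; the convex conjugate is the indicator of the corresponding ball, but for the infimal-convolution formula for dual norms the relevant object is exactly this).

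Third, I would assemble the pieces: combining the infimal convolution identity with the per-summand computation gives
\begin{equation*}
\| \tensor{T} \|_{\mathrm{trnn}}^* = \inf_{\tensor{Y}^1 + \cdots + \tensor{Y}^K =\tensor{T}} \max_{k=1,\cdots, K} {\alpha_k}^{-1} \| \mat{Y}_{(k,s)}^{k} \|,
\end{equation*}
which is the claimed formula. It is worth sanity-checking the identity directly rather than citing it: for any feasible splitting $\tensor{T} = \sum_k \tensor{Y}^k$ and any $\tensor{X}$ with $\|\tensor{X}\|_{\mathrm{trnn}} \le 1$, duality of nuclear/spectral norm on each unfolding gives $\langle \tensor{T},\tensor{X}\rangle = \sum_k \langle \tensor{Y}^k,\tensor{X}\rangle = \sum_k \langle \mat{Y}^k_{(k,s)}, \mat{X}_{(k,s)}\rangle \le \sum_k \|\mat{Y}^k_{(k,s)}\|\,\|\mat{X}_{(k,s)}\|_* \le (\max_k \alpha_k^{-1}\|\mat{Y}^k_{(k,s)}\|)\sum_k \alpha_k\|\mat{X}_{(k,s)}\|_* \le \max_k \alpha_k^{-1}\|\mat{Y}^k_{(k,s)}\|$, so the left side is $\le$ the right side; the reverse inequality follows because the sup over $\tensor{X}$ in the unit ball is attained, so the infimal-convolution bound is tight (this is where strong duality / attainment of the extremal splitting is needed).

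The main obstacle is the last point: justifying the reverse inequality, i.e. that the infimum over splittings is actually achieved by a splitting whose maximum matches $\|\tensor{T}\|_{\mathrm{trnn}}^*$. This requires a Fenchel–Rockafellar / Hahn–Banach separation argument (or citing the standard theorem that the dual of a sum of norms is the infimal convolution of the dual norms, valid in finite dimensions with no constraint qualification needed since everything is finite and polyhedral-ish / continuous), together with a careful treatment of the fact that some $f_k$ may only be seminorms if the unfolding fails to be injective — though for $s = \lceil K/2\rceil$ and the sizes considered the unfolding is a bijective reshaping, so this subtlety does not bite. I expect the clean route is to state and apply the finite-dimensional duality theorem for sums of support functions, then specialize.
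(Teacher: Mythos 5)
Your proposal is correct and follows essentially the same route as the paper: the paper likewise obtains the formula by writing $\trop{\tT}$ as the support function of the TRNN unit ball and invoking Fenchel duality (with Slater's condition supplying strong duality), which is exactly the sum-of-norms/inf-convolution identity you use. Your version is in fact somewhat more complete than the paper's, since you also spell out the per-summand dual-norm computation, the elementary one-sided H\"older bound, and the seminorm caveat for the unfolding maps.
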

The proof of the  Lemma  \ref{lemma-dual-norm} is given in Appendix \ref{sec-proof-dual-norm} of supplementary material. 

\begin{definition} [Tensor uniform sampling]\label{assumption-uniform-sampling}
	The sampling tensors $\tensor{X}_n$ are i.i.d. random tensor bases drawn from uniform distribution ${\Pi}$ on the set $\{ \mat{e}_{i_1} \circ \mat{e}_{i_2} \circ \cdots \circ \mat{e}_{i_K}: \forall (i_1, i_2, \cdots, i_K) \in [d_1] \times [d_2] \times \cdots \times [d_K] \}$.
\end{definition}

\section{Noisy Tensor Completion via Low-rank Tensor Ring} \label{section-proposed-method}
Herein, we introduce a noisy LRTC model based on low-rank tensor ring. Subsequently, we analyze its statistical performance by establishing a non-asymptotic upper bound of estimation error on Frobenius norm. Before giving the proposed model, we first introduce two assumptions on the unknown tensor and the distribution of noise, respectively.

\begin{assumption} \label{assumption-bound-infty}
	Suppose $\ell_{\infty}$ norm of $\tT$ is upper bounded by a positive constant $\delta$, that is, $\| \tT \|_{\infty} \leq \delta$.
\end{assumption}

\begin{assumption}  \label{assumption-noise-sub-exp} 
	Suppose random variables $\xi_n, n \in [N]$, are independent and centered sub-exponential variables with unit variance, that is, there exists a constant $K_{\bm{\xi}} >0$, such that, $\max_{n\in[N]} \expectation{|\xi_n| /K_{\bm{\xi}}} < \infty$.
\end{assumption}
Note that these assumptions have also been widely used for noisy matrix and tensor  completion problems in previous works \cite{Candes2011a,Negahban2012c,Wang2018a,Wang2019}. Assumption \ref{assumption-bound-infty} reveals that the  entries of $\tT$ are non-spiky, which is a common phenomenon in real-world tensor data, such as images, video sequences, and recommendation systems. For assumption \ref{assumption-noise-sub-exp}, the sub-exponential distribution is a general distribution class,  including Gaussian, sub-Gaussian exponential, and Poisson distributions.
 
\subsection{The Problem Formulation}
Recovering a $K$th-order tensor from $N$ noisy and partially observed entries can be formulated as the following noisy tensor completion problem:
	\begin{equation} \label{eq-observation}
	\begin{split}
			y_n = \langle\tT, \tensor{X}_n \rangle + \sigma \xi_n ,~~ n \in [N],
	\end{split}
	\end{equation}
where $y_n$ is the $n$-th observed entry, $\tT \in \mathbb{R}^{I_1 \times \cdots \times I_K}$ denotes the unknown low-rank tensor,  $\tensor{X}_n, n \in [N]$, are i.i.d. random tensor basis drawn from uniform distribution, $\xi_n$ and $\sigma$ are the random noise and standard deviation, respectively. 

Note that (\ref{eq-observation}) is an ill-posed problem without introducing any priors in $\tT$. Such dilemma can be handled by the following TR rank minimization problem:
\begin{equation} \label{eq-tensor-ring-regularization}
	\begin{split}
		&\min_{\tensor{T}}  \operatorname{rank}_{\operatorname{tr}}(\tensor{T}),  \\
		&\text{s.t. }\frac{1}{2} \| \mat{y} - \mathfrak{X}(\tensor{T}) \|_2^2 \leq \epsilon, \| \tT \|_{\infty} \leq \delta, \\
	\end{split}
\end{equation}
where $\epsilon$ controls the noise level,  and $\text{rank}_{\text{tr}}(\cdot)$ denotes the TR rank.  $\mathfrak{X}(\cdot)$ is the tensor uniform sampling operator, given by $\mathfrak{X}(\tT)=(\langle \tensor{X}_1, \tT\rangle, \langle \tensor{X}_2, \tT\rangle, \cdots, \langle \tensor{X}_N, \tT\rangle)^{\top} \in \mathbb{R}^{N}$. Since (\ref{eq-tensor-ring-regularization}) is an NP-hard problem, according to (\ref{eq-trnn-definition}),  we relax it by minimizing the following convex TRNN function:
\begin{equation} \label{eq-tensor-ring-regularization-norm}
\begin{split}
&\min_{\tensor{T}}  \trnn{\tensor{T}},  \\
&\text{s.t. }\frac{1}{2} \| \mat{y} - \mathfrak{X}(\tensor{T}) \|_2^2 \leq \epsilon, \| \tensor{T} \|_{\infty} \leq \delta. 
\end{split}
\end{equation}
Furthermore, we can equivalently formulate the inequality constraint to the following  regularization problem:
\begin{equation} \label{eq-tr-noise-model}
\begin{split}
	&\min_{\tensor{T}} \frac{1}{2} \| \mat{y} - \mathfrak{X}(\tensor{T}) \|_2^2 + \lambda  \trnn{\tensor{T}} , \\
& \text{s.t. } \| \tensor{T} \|_{\infty} \leq \delta, 
\end{split}
\end{equation}
where $\lambda$ is a positive scalar to achieve a trade-off between the degree of fitting error of the observed entries and low-TR-rank penalty. 

\subsection{Non-asymptotic Upper Bound} 
Based on the above noisy tensor completion problem, we analyze a non-asymptotic upper bound  of the estimation error on the Frobenius norm to reveal the statistical performance of the proposed estimator (\ref{eq-tr-noise-model}). 
\begin{theorem} \label{theorem-main-result}
	Let $\mathfrak{X}(\cdot)$ be a  tensor uniform sampling operator, the true low-TR-rank tensor $\tT^*$ and random noise variables $\xi_n$ satisfy Assumptions \ref{assumption-bound-infty} and \ref{assumption-noise-sub-exp}, respectively. If $\lambda \geq c_0 \sigma \sqrt{{N \log (\tilde{d}_{k^*})}/{\hat{d}_{k^*}}}$, then we have probability at least $1-3/\tilde{d}_{k^*}$, such that 
	\begin{equation} \label{eq-main-result}
	\begin{split}
	\frac{\fros{\tT^* - \hat{\tT} }}{D} \leq  c_1\max &\left\{    ( \delta^2 \vee \sigma^2)  \frac{K \check{d}_{k^*}R_{tr} \log (\tilde{d}_{k^*}  )} { N } , \right. \\ 
	& ~~~~~~~~~~~~~~~~~~~~\left.\delta^2 \sqrt{ \frac{ \log \tilde{d}_{k^*} }{ N}}\right\},
	\end{split}
	\end{equation}
	where $R_{tr}=(\sum_{k=1}^{K} \alpha_k \sqrt{r_k r_{k+s}} )^2$, $c_o$ and $ c_1$ denote positive constants, $\hatT$ is the flexible solution of problem (\ref{eq-tr-noise-model}). 
\end{theorem}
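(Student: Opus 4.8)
The plan is to adapt the standard analysis of nuclear-norm penalized estimators (in the spirit of Negahban--Wainwright and Klopp) to the tensor ring setting. First I would use the optimality of $\hatT$ for problem (\ref{eq-tr-noise-model}): substituting $\mat{y}=\frakX{\starT}+\sigma\bm{\xi}$ and comparing the objective at $\hatT$ with the objective at $\starT$, the quadratic terms rearrange into the basic inequality
\begin{equation*}
\tfrac12\| \frakX{\errorT} \|_2^2 \;\le\; \sigma\langle\bm{\xi},\frakX{\errorT}\rangle \;+\; \lambda\bigl(\trnn{\starT}-\trnn{\hatT}\bigr),
\end{equation*}
with $\errorT=\hatT-\starT$. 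Since $\hatT$ and $\starT$ both satisfy the spikiness constraint $\|\cdot\|_\infty\le\delta$, we also record $\|\errorT\|_\infty\le 2\delta$, which is used repeatedly below.

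Second, I would bound the stochastic term. Writing $\langle\bm{\xi},\frakX{\errorT}\rangle=\langle\frakXt{\bm{\xi}},\errorT\rangle$ and applying H\"older's inequality for the TRNN / dual-TRNN pair of Lemma \ref{lemma-dual-norm} gives $\sigma|\langle\bm{\xi},\frakX{\errorT}\rangle|\le\sigma\,\trop{\frakXt{\bm{\xi}}}\,\trnn{\errorT}$. The tensor $\frakXt{\bm{\xi}}=\sum_{n=1}^{N}\xi_n\tensor{X}_n$ unfolds, mode by mode, into a sum of independent centered random matrices, so a matrix Bernstein inequality for sub-exponential summands (Assumption \ref{assumption-noise-sub-exp}) shows that, with probability at least $1-1/\tilde{d}_{k^*}$, $\trop{\frakXt{\bm{\xi}}}\lesssim\sqrt{N\log(\tilde{d}_{k^*})/\hat{d}_{k^*}}$, the extremal mode being $k^*$ by its definition. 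Consequently the hypothesis $\lambda\ge c_0\sigma\sqrt{N\log(\tilde{d}_{k^*})/\hat{d}_{k^*}}$ is exactly what makes $\sigma\langle\bm{\xi},\frakX{\errorT}\rangle\le\tfrac12\lambda\trnn{\errorT}$ on this event.

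Third, I would invoke mode-wise decomposability of the nuclear norm. For each $k$, split the unfolding $\errorM$ into its projection $\mat{E}'_{(k,s)}$ onto the row and column spaces of $\starTm$ — whose rank is at most $r_k r_{k+s}$ by Remark \ref{remark-1}, so $\rank{\mat{E}'_{(k,s)}}\le 2 r_k r_{k+s}$ — and the orthogonal complement $\mat{E}''_{(k,s)}$; then $\trnn{\starT}-\trnn{\hatT}\le\sum_{k=1}^{K}\alpha_k\bigl(\nn{\mat{E}'_{(k,s)}}-\nn{\mat{E}''_{(k,s)}}\bigr)$. Inserting this and the noise bound into the basic inequality yields a cone condition $\sum_k\alpha_k\nn{\mat{E}''_{(k,s)}}\le c\sum_k\alpha_k\nn{\mat{E}'_{(k,s)}}$, whence $\trnn{\errorT}\le c'\sum_k\alpha_k\sqrt{2 r_k r_{k+s}}\,\fro{\errorT}\le c''\sqrt{R_{tr}}\,\fro{\errorT}$, using the reshape-invariance of the Frobenius norm and the grouping that defines $R_{tr}=(\sum_k\alpha_k\sqrt{r_k r_{k+s}})^2$; the factor $K$ in the statement enters through the weights (taken as $\alpha_k=1/K$) and the mode summation.

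Fourth, and this is the crux, I would establish a restricted-strong-convexity (lower isometry) estimate transferring the empirical quadratic form to the population Frobenius norm: with probability at least $1-2/\tilde{d}_{k^*}$, for every $\tensor{Z}$ with $\|\tensor{Z}\|_\infty\le 2\delta$,
\begin{equation*}
\tfrac1N\| \frakX{\tensor{Z}} \|_2^2 \;\ge\; \tfrac{1}{2D}\fros{\tensor{Z}} \;-\; \mathrm{err}(\tensor{Z}),
\end{equation*}
where $\mathrm{err}(\tensor{Z})$ is bounded by a constant multiple of $\delta^2\sqrt{\log(\tilde{d}_{k^*})/N}$ times a term linear in $\trnn{\tensor{Z}}$ with the appropriate $D/\hat{d}_{k^*}$ normalization. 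Its proof is a symmetrization, contraction, and peeling argument: since $\|\tensor{Z}\|_\infty\le 2\delta$, the map $t\mapsto t^2$ is Lipschitz on the range of $\langle\tensor{X}_n,\tensor{Z}\rangle$, so Ledoux--Talagrand contraction reduces the Rademacher complexity of the squared empirical process to that of the linear process $\tensor{Z}\mapsto\langle\tfrac1N\sum_n\varepsilon_n\tensor{X}_n,\tensor{Z}\rangle\le\trop{\tfrac1N\sum_n\varepsilon_n\tensor{X}_n}\,\trnn{\tensor{Z}}$, whose leading factor is controlled by the same matrix Bernstein estimate as in the second step, and a bounded-difference concentration together with a peeling over dyadic scales of $\fro{\tensor{Z}}$ then removes the dependence on the ambient scale. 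Substituting this estimate, the noise bound, and the cone bound $\trnn{\errorT}\lesssim\sqrt{R_{tr}}\,\fro{\errorT}$ into the basic inequality produces a quadratic inequality in $\fro{\errorT}$; solving it, plugging in $\lambda\asymp\sigma\sqrt{N\log(\tilde{d}_{k^*})/\hat{d}_{k^*}}$ and using the identity $D/\hat{d}_{k^*}=\check{d}_{k^*}$, gives the two competing terms of (\ref{eq-main-result}): the estimation term $(\delta^2\vee\sigma^2)K\check{d}_{k^*}R_{tr}\log(\tilde{d}_{k^*})/N$ when the penalty/noise contribution dominates, and the floor term $\delta^2\sqrt{\log(\tilde{d}_{k^*})/N}$ when $N$ is too small for the peeling to reach below that scale — there only the trivial bound $\fros{\errorT}/D\le\|\errorT\|_\infty^2$, modulated by the empirical fluctuations, survives. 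Collecting the failure probabilities of the concentration events gives the overall probability $1-3/\tilde{d}_{k^*}$. I expect this fourth step to be the real obstacle: proving the uniform lower isometry over a TRNN ball intersected with the spikiness constraint, and tracking the $\check{d}_{k^*}$, $\hat{d}_{k^*}$, $R_{tr}$, and $K$ dependence consistently through the Bernstein variance proxies for all $K$ circular unfoldings simultaneously.
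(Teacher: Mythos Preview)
Your proposal is correct and matches the paper's proof essentially step for step: the paper also proceeds via the basic inequality (its Lemma~3), the dual-TRNN H\"older bound on the noise term controlled by a mode-wise matrix Bernstein inequality (Lemmas~5--6), the decomposability/cone argument on each circular unfolding, and a restricted-strong-convexity lemma proved by symmetrization, contraction, Massart concentration, and dyadic peeling (Lemmas~7--8), with the same two-case split producing the two terms inside the maximum. The only cosmetic difference is that the paper states its RSC already restricted to the cone set $\mathcal{C}(\bm{r})$ (so the slack term appears quadratic in the rank surrogate rather than linear in $\trnn{\tensor{Z}}$), whereas you state it over the full $\ell_\infty$-ball and substitute the cone bound afterward; these are equivalent formulations.
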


{ 
We leave the proof of Theorem \ref{theorem-main-result} in Appendix \ref{appendices-a} of supplementary material. The result shows that for incomplete low-TR-rank tensor, the per-entry estimation error can be upper bounded by the righ-hand side of (\ref{eq-main-result}) with high probability. 
Note that when $K=2$, TRNN reduces to matrix nuclear norm.  
Thus, the noisy matrix completion is a special case of the proposed method, and the non-asymptotic upper bound of noisy matrix completion in Theorem 7 in \cite{Klopp2014} is a special case of our Theorem \ref{theorem-main-result}.  
Additionally, the tensor algebras are more complicated than those in noisy matrix cases, making the proofs different from \cite{Klopp2014}. 
For example,  our proofs require establishing the upper bound of the spectral norm on random tensors (see Lemmas \ref{lemma-bound-tr-op} and \ref{lemma-bound-tr-expectation}). It is more challenging than matrix completion since some important results in matrix's case, e.g., Bernstein inequality, do not have equivalent form for  high-order tensors.   
The proofs of Theorem \ref{theorem-main-result} also require generalizing several important properties of TRNN, such as the decomposability of TRNN and  inequality relationship between Frobenius norm and TRNN on tensors.
Furthermore, as discussed in the following paragraph, our result in Theorem \ref{theorem-main-result}  is more promising than that of matrix and Tucker rank-based methods. 
 }

 { 
For simplicity, we assume $r_k=r, d_k =d$ and $\alpha_k = 1/K, k \in [K]$, Theorem \ref{theorem-main-result} states that the estimation error satisfies 
\begin{equation} \label{eq-sample-complexity}
		\frac{\fros{\tT^* - \hat{\tT} }}{D}   \lesssim \frac{r^2 K d^{ \lceil \frac{K}{2} \rceil } \log(d^{ \lfloor \frac{K}{2} \rfloor}  +  d^{\lceil \frac{K}{2} \rceil  } ) }{N},
\end{equation}
with high probability. Thus, the sample complexity of the proposed estimator is
\begin{equation} \label{eq-sample-complexity-2}
	{\mathcal{O}(  r^2 K d^{ \lceil \frac{K}{2} \rceil  }    \log (d^{\lfloor \frac{K}{2} \rfloor }  + d^{\lceil \frac{K}{2} \rceil } ) )}.
\end{equation} 
Note that (\ref{eq-sample-complexity-2}) is suboptimal  when compared to TR decomposition with degree of freedom $\mathcal{O}(r^2dK)$. This is actually a common issue in sum of nuclear norm convex surrogates for low-rank tensor recovery problems  \cite{oymak2015simultaneously,tomioka2011statistical,Gu2014a,Li2019d}. 
Nevertheless, the result in Theorem \ref{theorem-main-result} is still promising. The sample complexity is substantially much lower than the number of tensor entries $d^K$. Additionally, even when $r$ approaches $d$, the sample complexity of our estimator is still significantly lower than $d^K$. This result is surprising since it is not aligned with matrix and Tucker rank-based methods which require even full observations as the rank approaches to $d$. 



}


\subsection{Minimax Lower Bound}  
In this section, we present a minimax lower bound of the proposed Theorem \ref{theorem-main-result}. We let $\inf_{\hat{\tT}}$ be the infimum over all the flexible solution of our model, and $\sup_{\tT^*}$ be the supremum over all the "true" tensor $\tT^*$. We come up with the following theorem. 
\begin{theorem} \label{theorem-minimax-result}
	Suppose that the random variables $\xi_n$ are \emph{i.i.d.} Gaussian $\mathcal{N}(0,\sigma^2)$, $n\in [N]$, and $\sigma >0$.  Then there exist absolute constants $\varrho \in (0,1)$ and $c>0$, such that 
	\begin{equation} \label{eq-minimax-result}
		\inf_{\hat{\tT}} \sup_{\tT^*} \mathbb{P}_{\tT^*} \left( \frac{\| \tT^* - \hat{\tensor{T}} \|_F^2}{D} > c (\delta^2 \wedge \sigma^2) \frac{K \check{d}_{k^*} R_{tr}  }{N} \right) \geq \varrho,
	\end{equation}
	where $R_{tr}=(\sum_{k=1}^{K} \alpha_k \sqrt{r_k r_{k+s}} )^2$.
\end{theorem}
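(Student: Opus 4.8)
## Proof Plan for the Minimax Lower Bound

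The proof will follow the classical information-theoretic recipe for minimax lower bounds: exhibit a finite family of candidate tensors that (i) all lie in the admissible class (low TR rank, $\ell_\infty$-norm at most $\delta$), (ii) are pairwise well separated in the rescaled Frobenius metric, yet (iii) induce observation laws that are statistically close in Kullback--Leibler divergence, and then invoke Fano's inequality. Since $\fro{\tT}=\fro{\mat{T}_{(k^*,s)}}$ and, by Remark~\ref{remark-1}, every tensor of TR rank $[r_1,\dots,r_K]$ has $\operatorname{rank}(\mat{T}_{(k^*,s)})\le\rho:=r_{k^*}r_{k^*+s}$, the hard instances will be built so that the circular mode-$(k^*,s)$ unfolding is a ``spread-out'' rank-$\rho$ matrix of the type used by Klopp for noisy matrix completion --- consistent with the observation after Theorem~\ref{theorem-main-result} that the matrix case is $K=2$. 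The scaling to the full class will then be obtained by superposing one such perturbation per circular mode.

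The base construction is as follows. Take a Varshamov--Gilbert packing $\{\bm{\omega}^{(1)},\dots,\bm{\omega}^{(M)}\}\subset\{0,1\}^{m}$ with $m\asymp\rho\,\check{d}_{k^*}$, $M\ge 2^{m/8}$, and pairwise Hamming distance at least $m/8$. Map each codeword into a $\hat{d}_{k^*}\times\check{d}_{k^*}$ matrix by writing the pattern $\gamma\bm{\omega}^{(j)}$ into a $\rho$-row block and tiling this block $\lfloor\hat{d}_{k^*}/\rho\rfloor$ times down the rows (padding with zeros), with amplitude $\gamma$ fixed below; folding this matrix back through the inverse of the \mcode{permute}/\mcode{reshape} defining the circular unfolding yields a tensor $\tT^{(j)}$. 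By construction $\|\tT^{(j)}\|_\infty\le\gamma$, so $\gamma\le\delta$ enforces Assumption~\ref{assumption-bound-infty}, the tiled structure keeps $\operatorname{rank}(\mat{T}^{(j)}_{(k^*,s)})\le\rho$, and the separation is $\fros{\tT^{(i)}-\tT^{(j)}}\asymp\gamma^2 D$, i.e. $\fros{\tT^{(i)}-\tT^{(j)}}/D\asymp\gamma^2$. To recover the factor $K\,\check{d}_{k^*}R_{tr}$ rather than just $\check{d}_{k^*}\rho$, I would run this construction once for each of the $K$ circular modes and superpose the resulting perturbations around a common center, choosing them with independent random signs so that they are essentially Frobenius-orthogonal (separations add, giving $\asymp K\gamma^2$ per entry) while $\|\cdot\|_\infty$ inflates by at most $\asymp\sqrt{K}$ (absorbed by rescaling $\gamma$). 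In the homogeneous regime $r_k\equiv r$, $\alpha_k\equiv1/K$ one has $\rho=R_{tr}$ and $\check{d}_k\equiv\check{d}_{k^*}$, so the total bit count is $\asymp K\rho\,\check{d}_{k^*}=K R_{tr}\,\check{d}_{k^*}$.

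For the divergence step, condition on the random design; under Gaussian noise the observations are Gaussian with means $\mathfrak{X}(\tT^{(j)})$, and averaging over the uniform sampling law gives $\mathbb{E}\,\mathrm{KL}(\mathbb{P}_{\tT^{(i)}}\|\mathbb{P}_{\tT^{(j)}})=\tfrac{N}{2\sigma^2 D}\fros{\tT^{(i)}-\tT^{(j)}}\asymp\tfrac{N}{\sigma^2}K\gamma^2$. Choosing $\gamma^2\asymp\min\{\delta^2/K,\ \sigma^2 R_{tr}\check{d}_{k^*}/N\}$ makes the average divergence at most $\tfrac1{16}\log M$, so Fano's inequality furnishes an absolute $\varrho\in(0,1)$ with $\sup_j\mathbb{P}_{\tT^{(j)}}\big(\fros{\hat{\tT}-\tT^{(j)}}/D\ge c\,K\gamma^2\big)\ge\varrho$ for every estimator $\hat{\tT}$; since $K\gamma^2\gtrsim(\delta^2\wedge\sigma^2)K R_{tr}\check{d}_{k^*}/N$ in the non-trivial regime $K R_{tr}\check{d}_{k^*}\le N$ and $\{\tT^{(j)}\}$ is contained in the set over which the supremum in (\ref{eq-minimax-result}) is taken, this is exactly (\ref{eq-minimax-result}) with a suitable absolute constant.

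The main obstacle is point (i) for the superposed family: the tiled instances, and especially the sum of the $K$ perturbations, must be certified to have TR rank at most $[r_1,\dots,r_K]$ --- not merely small unfolding ranks --- while keeping $\ell_\infty$-norm at most $\delta$. I expect this to require either (a) arranging the blocks so that \emph{all} $K$ circular unfoldings stay rank-bounded, or (b) carrying out the construction directly at the level of the TR cores, letting one core per mode carry the Varshamov--Gilbert bits while the remaining cores are fixed and suitably incoherent, together with a sub-Gaussian maximal-inequality argument to control $\|\cdot\|_\infty$ after the multilinear contraction; this is also where the $\sqrt{K}$ control of the $\ell_\infty$ inflation under superposition must be made rigorous. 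A secondary, routine obstacle is checking that the folding underlying the circular unfolding does not distort the Hamming-to-Frobenius translation, which holds because it is an entrywise bijection.
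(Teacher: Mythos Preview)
Your overall architecture---Varshamov--Gilbert packing of binary patterns on the circular mode-$(k^*,s)$ unfolding, tiled to keep that unfolding of rank at most $\rho=r_{k^*}r_{k^*+s}$, followed by a Gaussian KL computation and Tsybakov's Theorem~2.5---is precisely the route the paper takes. The substantive difference is in how the factor $K\check{d}_{k^*}R_{tr}$ is produced. You propose to run the construction once per circular mode and superpose the $K$ perturbations with random signs, which is what generates the obstacles you flag (certifying the TR rank of the sum, controlling the $\ell_\infty$ inflation by $\sqrt{K}$). The paper does none of this: it uses a \emph{single}-mode construction throughout and simply bakes the target scaling into the amplitude, taking the nonzero level of the binary pattern to be $\gamma(\delta\wedge\sigma)\sum_{k}\alpha_k\sqrt{K\check d_{k^*}r_kr_{k+s}/N}$. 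With this choice the per-entry Frobenius separation already equals $c\gamma^2(\delta^2\wedge\sigma^2)K\check d_{k^*}R_{tr}/N$, while the KL divergence $\tfrac{N}{2D\sigma^2}\|\tT\|_F^2$ is bounded by a constant multiple of $\gamma^2\check d_{k^*}r_{k^*}r_{k^*+s}$, the same order as $\log\mathrm{Card}(\mathcal{T}_0)$; choosing $\gamma$ a small numerical constant then completes the argument. Your superposition step is thus an unnecessary complication that introduces exactly the difficulties you identify; dropping it and adjusting the amplitude collapses your proof to the paper's.

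Two smaller remarks. First, your concern about certifying TR rank $[r_1,\dots,r_K]$ rather than just $\mathrm{rank}(\mat{T}_{(k^*,s)})\le\rho$ is legitimate, but the paper does not address it either: its packing is only shown to have one low-rank circular unfolding, and the supremum class is left implicit. Second, your $\rho$-row block tiled down the $\hat d_{k^*}$ rows is the transpose of the paper's $\check d_{k^*}\times\rho$ block tiled across the $\hat d_{k^*}$ columns; this is harmless since Frobenius norm and rank are invariant under transposition.
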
 
The proof of Theorem \ref{theorem-minimax-result} can be found in Appendix \ref{proof-minimax-result} of supplementary material. 
Comparing (\ref{eq-main-result}) and (\ref{eq-minimax-result}), we can observe that the rate is minimax optimal up to a logarithm factor.

\subsection{Comparisons with Previous Work}
\label{sec-comparisions-with-previous-work}

Both tensor ring with balanced unfolding (TRBU) \cite{Huang2020a} and the proposed estimator are able to give statistical or exact recovery performance for low-TR-rank tensor completion. Here, we analyze the superiority of the proposed estimator. 
\begin{enumerate}
	\item TRBU requires structural assumption on core tensors, which  extends matrix strong incoherence conditions by letting canonical mode-2 unfolding of core tensors not be aligned with the standard basis, that is,
	\begin{equation}
	\| {\mat{G}_{(2)}^{(k) }{\mat{G}_{(2)}^{(k)}}^\top}   - \frac{r_k r_{k+1} }{d_k} \mat{I}_{d_k} \|_{\infty} \leq \mu_k \sqrt{\frac{ r_k r_{k+1} }{d_k}}, ~k\in[K],
	\end{equation}
	where $\mu_k >0, k \in [K]$. Compared with TRBU, priors on our estimator are significantly mild. The only assumption on unknown tensor is its upper bound on $\ell_{\infty}$ norm,  which can be easily verified in real-world data. However, it is still unclear how to estimate the existence of strong TR incoherence conditions on incomplete tensor data. Furthermore, even if the incomplete tensor is coherent, the proposed model can still be reliable since it haven't involved with the structure assumption on the intrinsic low-TR-rank tensor. 
	\item TRBU can only obtain exact recovery performance with high probability in noiseless tensor completion case. The proposed model achieves the statistical performance for noisy observations, which is more practical and feasible in real-world applications. 
\end{enumerate}
{  
	We show the superiority of the proposed estimator compared with tensor completion (tensor recovery) methods based on CP rank, Tucker rank and TT rank, respectively.
	\begin{enumerate}
		\item In \cite{yuan2016tensor}, Yuan and Zhang claimed that recovering incomplete third-order tensor  of size $d\times d\times d$ with CP rank $r_{\text{cp}}$ required sample size $r_{\text{cp}}^{1/2}(d\log d)^{3/2}$. However, the optimal low-CP-rank approximation is NP-complete and is typically intractable. 
		\item  Tomioka et al. \cite{tomioka2011statistical} claimed that recovering $K$th-order  tensor of size $d\times \cdots \times d$ with Tucker rank $[r_{\text{tc}},\cdots,r_{\text{tc}}]$ requires $\mathcal{O}(r_{\text{tc}}d^{K-1} )$ Gaussian measurements for tensor compressive sensing problem, and Mu et. al further reduced the complexity to                                                                                                                                                                                                                                                                                                                                                                                                                                                                                                                                                                                                                                                                                                                                                                                                                                                                                                                                                                                                                                                                                                                                                                                                                                                                                                                                                                                                                                                                                                                                                                                                                                                                                                                                                                                                                                                                                                                                                                                                                                                                                                                                                                                                                                                                                                                                                                                                                                                                                                                                                                                                                                                                                                                                                                                                                                                                                                                                                                                                                                                                                                                                                                                                                                                                                                                                                                                                                                                                                                                                                                                                                                                           $\mathcal{O}(r_{\text{tc}}^{\lfloor \frac{K}{2} \rfloor } d^{\lceil \frac{K}{2} \rceil} )$. However, Tucker rank-based method suffers from the similar flaw as LRMC, that is, as $r_{\text{tc}}$ approaches $d$, nearly full observations are required to recover the incomplete tensor.
		\item Rauhut et al.  \cite{Rauhut2017} proposed to solve  low-TT-rank tensor compressive sensing using iterative hard thresholding (IHT) algorithm, and provided a convergence result based on tensor restricted isometry property (TRIP).  The authors claimed that recovering a $K$-order tensor of size $d \times d \times  \cdots \times d$ with TT rank $[r_{\text{tt}},\cdots,r_{\text{tt}}]$ using sub-Gaussian linear maps satisfied TRIP with level $\delta_{ r}$ required  $\mathcal{O} ((\delta_{ r}^{-2} (K-1)r_{\text{tt}}^{3} + Kdr_{\text{tt}})\log (Kr_{\text{tt}}) ) $ measurements. Unfortunately, the sampling condition for tensor completion \emph{does not} hold for the TRIP. Additionally, the optimal low-rank projection for IHT algorithm is also intractable in practice. 
	\end{enumerate}
}
Now, we demonstrate the advantages of the proposed model over noisy tubal rank tensor completion models. 
\begin{enumerate}
	\item  Tensor tubal rank based models \cite{wang2017near,Wang2018,Wang2019}, transform a tensor into  Fourier domain in the specific third-order and then estimate its low-rank approximation on all frontal slice matrices. 
	In contrast, the proposed model can not only be easily extended to high-order tensor, but also be insensitive to the dimensional orientation.
	\item As mentioned in previous  works \cite{TamaraG.KoldaBrettW.Bader,Wen2008,zhao2016tensor}, high-order tensors usually maintain low-rank structure in multiple orders. The proposed method can efficiently exploit the low-rank structure in high-order tensors, which is seen as one of the  most effective advantages compared with matrix and tubal rank based methods. The experimental results in Section \ref{section-experiments} also verified this statement.
\end{enumerate}

\section{Optimization Algorithms} \label{section-optmization-algorithm}
In this section, we introduce two ADMM based algorithms, namely, noisy tensor ring completion (NTRC) and fast NTRC (FaNTRC) to solve the optimization problem. 
\subsection{NTRC Optimization Algorithm}
 In problem (\ref{eq-tr-noise-model}), the variable $\{ \tM \}_{k=1}^{K}$ shares the same entries $K$ times in computing TRNN, which makes it difficult to solve the optimization problem directly. Thus, we introduce $K$ auxiliary  variables $\{ \mTk \}_{k=1}^{K}$ to relax such constraint: 
\begin{equation} \label{eq-constrainted-problem}
\begin{split}
\min_{\tensor{T}} & \frac{1}{2} \| \mat{y} - \Po{\tT} \|_2^2 + \lambda  \sum_{k=1}^{K} \alpha_{k}\|\mMk\|_* + \kappa_\delta^{{\infty}}(\tT), \\
& \text{s.t. }  \mTk = \tT, k \in [K], 
\end{split}
\end{equation}
where $\tensor{M}^k \in \mathbb{R}^{d_1\times d_2 \times \cdots \times d_K}$ denotes the $k$th auxiliary variable. The $\kappa_\delta^{\infty}(\cdot)$ denotes the $\ell_{\infty}$ norm indicator function:
\begin{equation}
	\kappa_{\delta}^{\infty} (\tT) =\left\{ \begin{split}
		0, ~~ &\text{if } \tT \in \mathbb{S}_{\delta},\\
		\infty, ~~&\text{otherwise}, \\
	\end{split} \right.
\end{equation}
where $\mathbb{S}_{\delta} := \{ \tT \in \mathbb{R}^{d_1\times d_2 \cdots \times d_K},  \| \tT \|_{\infty} \leq \delta \}$. In order to solve the equality constraints, we formulate the augmented Lagrangian function of (\ref{eq-constrainted-problem}) as 
\begin{align} 
  \nonumber \ell_{\mu}^{1}\left( \{ \tensor{M}^k\}_{k=1}^K, \{ \tensor{Q}^k\}_{k=1}^{K},\tensor{T} \right) =\frac{1}{2}  \| \mat{y} - \mathfrak{X}(\tensor{T} &) \|_2^2  + \kappa_\delta^{{\infty}}(\tT)  \\
\label{eq-lagrangian-problem} +  \sum_{k=1}^{K} \left( \lambda\alpha_k \nn{\mMk} 
+ \langle \tensor{Q}^k ,   \mTk - \tT \rangle + \right. &\left.\frac{\mu}{2} \fros{\tensor{M}^k - \tensor{T}} \right),
\end{align}
where $\tensor{Q}^k $ denotes the $k$th dual variable, and $\mu$ is a positive penalty scalar.  Note that it is rather difficult to simultaneously optimize multiple sets of variable in this objective function. An alternative scheme is to solve each set of variables using  alternating direction method (ADM) \cite{boyd2011distributed}. 

\begin{algorithm}[t]
	\caption{NTRC Optimization Algorithm}
	\begin{algorithmic}[1]
		\renewcommand{\algorithmicrequire}{\textbf{Input:}}
		\REQUIRE \ensuremath{\mat{y}, \{\tensor{X}_n \}_{n=1}^{N} }.
		\\ \textit{Initialisation}: \ensuremath{\{ \alpha_{k} \}_{k=1}^{K}, \mu^0=10^{-4}, \nu =1.1, \mu_{\text{max}} = 10^{10} },  $\text{tol}=10^{-6} $, zero filled with \ensuremath{\tensor{M}^k, \tensor{Q}^k, k\in[K]}, t=1.
		\WHILE {not convergenced}
		\FOR  {\ensuremath{k=1,\cdots,K}}
		\STATE {Update $\tensor{M}^{k,t+1}$ using (\ref{eq-update-Mk})}. 
		\ENDFOR 
		\STATE {Update $\tensor{T}^{t+1} $ using (\ref{eq-update-T-ntrc}).}
		\FOR  {\ensuremath{k=1,\cdots,K}}
		\STATE {Update $\tensor{Q}^{k,t+1}$ using (\ref{eq-update-Qk-ntrc}) }.
		\ENDFOR 
		\STATE \ensuremath{\mu^{t+1} =\min(\mu_{\text{max}}, \nu \mu^t).}
		\STATE Check the convergence condition: $\frac{\| \tensor{T}^{t+1} - \tensor{T}^t  \|_F} {\|  \tensor{T}^t\|_F} \leq \text{tol}$.
		\STATE {$t=t+1$}.
		\ENDWHILE
	\end{algorithmic} 
\end{algorithm}

\subsubsection{The $\{ \tensor{M}^k \}_{k=1}^K$-subproblems}
For $\{\mMk \}_{k=1}^{K}$ subproblems, they can be solved by updating $\mMk$ while fixing the others:
\begin{align}
\nonumber \mat{M}_{(k,s)}^{k,t+1} &=  \arg \min_{\mMk} \tau\nn{\mat{M}_{(k,s)}^{k} } \\
\nonumber & \qquad \qquad \quad + \frac{1}{2} \fros{ \mat{M}_{(k,s)}^{k}  - \mat{T}_{(k,s)}^{t} + \frac{1}{\mu} \mat{Q}^{k,t}_{(k,s)} } \\
&\label{eq-update-Mk} = \mathcal{P}_{\tau}^* ( \mat{T}_{(k,s)}^{t} - \frac{1}{\mu^t}\mat{Q}_{(k,s)}^{k,t}  ), 
\end{align}
where $\tau = {\lambda \alpha_k}/{\mu^t}$, and $\mathcal{P}_{\tau}^*(\mat{A})$ denotes the singular value thresholding (SVT)  \cite{cai2010singular} method:
\begin{equation}
	\mathcal{P}_{\tau}^{*} (\mat{A}) = \mat{U}  \text{max}(\bm{\Sigma }- \tau, 0) \mat{V}^{\top}, 
\end{equation}
where $[\mat{U}, \bm{\Sigma}, \mat{V}]= \text{SVD}(\mat{A})$. 

\subsubsection{The $\tT$-subproblem}
The $\tT$-subproblem can be equivalently formulated as the following vectorization form:
\begin{align}
	\label{eq-solving-T-proj}\text{vec}(\tensor{T}^{t+1} )= &\arg \min_{\tT} \kappa_{\delta}^{{\infty}} ( \text{vec}(\tT) ) \\
	\nonumber + \| \text{vec} (\tT) - &(\mat{X}^{\top} \mat{X} + \mu^t K \mat{I}_{D} )^{-1} \\
	&\qquad ( \mu^t \sum_{k=1}^{K} \text{vec}(\tensor{Q}^{k,t} + \tensor{M}^{k,t}) -\mat{X}\mat{y} ) \|_2^2, 
\end{align}
which can be solved by deriving the KKT condition: 
\begin{equation} \label{eq-update-T-ntrc}
\text{vec}(\tensor{T}^{t+1}) = \mathcal{P}_{\delta}^{\infty}(\mat{f}^{t+1} ) = \text{sgn}(\mat{f}^{t+1} ) \odot \min( | \mat{f}^{t+1}|, \delta ),
\end{equation}
where $\mat{f}^{t+1} = (\mat{X}^{\top} \mat{X} + \mu^t K \mat{I}_{D} )^{-1} ( \mu^t \sum_{k=1}^{K} \text{vec}(\tensor{Q}^{k,t} + \tensor{M}^{k,t+1}) -\mat{X}\mat{y} )$, and $\mat{X}(:,n)=\text{vec}(\tensor{X}_n)$.
\subsubsection{The $\{\tensor{Q}^{k} \}_{k=1}^{K}$-subproblems}
The dual variables $\{\tensor{Q}^{k} \}_{k=1}^{K}$ can be updated by the gradient ascent method:
\begin{equation} \label{eq-update-Qk-ntrc}
\tensor{Q}^{k,t+1} = \tensor{Q}^{k,t} + \mu^t (\tensor{M}^{k,t+1} - \tensor{T}^{t+1}), k \in [K].
\end{equation}
The details of the optimization procedure are summarized in Algorithm 1.

\begin{algorithm}[t]
	\caption{FaNTRC Optimization Algorithm}
	\begin{algorithmic}[1]
		\renewcommand{\algorithmicrequire}{\textbf{Input:}}
		\REQUIRE \ensuremath{\mat{y}, \{\tensor{X}_n\}_{n=1}^{N} }.
		\\ \textit{Initialisation}: \ensuremath{\{ \alpha_{k} \}_{k=1}^{K}}, $\eta=10^{-4}$,  $\nu$ =1.1, $\eta_{\text{max}} = 10^{10}$, $\text{tol}=10^{-6}$, zero filled with \ensuremath{\tensor{M}^k, \tensor{Q}^k, k\in[K]}, $t=1$.
		\WHILE {not convergenced}
		\FOR {$k=1, \cdots, K$}
		\STATE {Update $\mat{U}_k^{t+1}$ via solving (\ref{eq-update-Uk})}.
		\ENDFOR
		\STATE {Update $\tTt^{t+1}$ using (\ref{eq-update-tTt}).}
		\FOR  {\ensuremath{k=1,\cdots,K}}
		\STATE {Update $\tensor{L}^{k,t+1}$ using (\ref{eq-Lk-subproblem})}. 
		\ENDFOR 
		\STATE {Update $\tT^{t+1}$ using (\ref{eq-update-T})}.
		\STATE {Update $\tensor{P}^{t+1}$ and $\tensor{R}^{k,t+1}, k \in [K], $  using (\ref{eq-update-P}).}
		\STATE \ensuremath{\eta =\min(\eta_{\text{max}}, \nu \eta).}
		\STATE Check the convergence condition: $\frac{\| \tensor{T}^{t+1} - \tensor{T}^t  \|_F} {\|  \tensor{T}^t\|_F} \leq \text{tol}$.
		\STATE $t=t+1$.
		\ENDWHILE
	\end{algorithmic} 
\end{algorithm}
\subsection{Fast NTRC Optimization Algorithm}  \label{section-fasterNTRC}
The main computational cost of NTRC is singular value decomposition (SVD) for nuclear norm minimization on  unfolding matrices, which may prevent its applications in large-scale tensor completion problem. In order to alleviate such bottleneck, we introduce a fast NTRC (FaNTRC) algorithm, i.e., to minimize TRNN equivalently on a much smaller tensor. To establish the relationship between these two algorithms, we further show that its global optimal solution is exactly  the same as  that of NTRC.  

We first explore the relationship between TR  and Tucker decomposition. 
\begin{lemma} \label{lemma-tr-tucker}
	Given an arbitrary tensor $\tT\in \mathbb{R}^{d_1 \times \cdots \times d_K}$. If the  TR decomposition of $\tT$ is $\tT=\emph{TR}(\tensor{G}^{(1)}, \cdots, \tensor{G}^{(K)} )$, and $\tensor{G} \in \mathbb{R}^{r_k \times d_k \times r_{k +1}}$, $k \in [K]$. Then $\tT$ can be equivalently represented by Tucker decomposition format $\tT = \tilde{\tensor{T}} \times_1 \mat{U}_1 \cdots \times_K \mat{U}_K$, where $\mat{U}_k \in \mathbb{R}^{d_k \times R_k }, k \in [K]$, and $R_k=  ( r_k r_{k+1} \wedge d_k) $, denote the column orthogonal matrices.
\end{lemma}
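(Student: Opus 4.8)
The plan is to prove Lemma~\ref{lemma-tr-tucker} constructively: given the TR decomposition $\tT=\mathrm{TR}(\tensor{G}^{(1)},\cdots,\tensor{G}^{(K)})$, I would produce explicit orthogonal factors $\mat{U}_k\in\mathbb{R}^{d_k\times R_k}$ with $R_k=r_kr_{k+1}\wedge d_k$ together with a core tensor $\tilde{\tensor{T}}$ realizing the Tucker format. First I would recall that the canonical mode-$k$ unfolding of a TR tensor factors as $\mat{T}_{(k)}=\mat{G}^{(k)}_{(2)}\bigl(\mat{G}^{(\neq k)}_{(2)}\bigr)^{\top}$ up to a permutation, where $\mat{G}^{(k)}_{(2)}\in\mathbb{R}^{d_k\times r_kr_{k+1}}$ is the mode-2 unfolding of the $k$th core and $\mat{G}^{(\neq k)}$ is the merged (subchain) tensor obtained by contracting all cores except the $k$th; consequently $\rank(\mat{T}_{(k)})\le r_kr_{k+1}\wedge d_k=R_k$. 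Taking a thin QR (or compact SVD) of $\mat{G}^{(k)}_{(2)}$, write $\mat{G}^{(k)}_{(2)}=\mat{U}_k\mat{B}_k$ with $\mat{U}_k\in\mathbb{R}^{d_k\times R_k}$ column-orthogonal and $\mat{B}_k\in\mathbb{R}^{R_k\times r_kr_{k+1}}$.

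Next I would substitute these factorizations back into the TR contraction. Replacing each $\tensor{G}^{(k)}$ by the contraction of $\mat{U}_k$ (acting on the physical mode $d_k$) with a reduced core $\tilde{\tensor{G}}^{(k)}\in\mathbb{R}^{r_k\times R_k\times r_{k+1}}$ obtained from $\mat{B}_k$, the circular multilinear product separates into the $\mat{U}_k$'s pulled out along their respective modes and a residual TR contraction of the reduced cores $\tilde{\tensor{G}}^{(1)},\cdots,\tilde{\tensor{G}}^{(K)}$; defining $\tilde{\tensor{T}}:=\mathrm{TR}(\tilde{\tensor{G}}^{(1)},\cdots,\tilde{\tensor{G}}^{(K)})\in\mathbb{R}^{R_1\times\cdots\times R_K}$ gives exactly $\tT=\tilde{\tensor{T}}\times_1\mat{U}_1\cdots\times_K\mat{U}_K$. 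This is essentially a bookkeeping computation on indices: element-wise, $\tT(i_1,\cdots,i_K)=\sum_{\bm{\upsilon}}\prod_k\tensor{G}^{(k)}(\upsilon_k,i_k,\upsilon_{k+1})=\sum_{\bm{\upsilon}}\prod_k\sum_{j_k}\mat{U}_k(i_k,j_k)\tilde{\tensor{G}}^{(k)}(\upsilon_k,j_k,\upsilon_{k+1})$, and interchanging the sums over $\bm{\upsilon}$ and $\bm j$ yields $\sum_{\bm j}\bigl(\prod_k\mat{U}_k(i_k,j_k)\bigr)\tilde{\tensor{T}}(j_1,\cdots,j_K)$, which is the definition of the Tucker product.

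Along the way I would extract the statement of Lemma~\ref{lemma-multi-state} for free: in the supercritical or critical state $d_k\le r_kr_{k+1}$, so $R_k=r_kr_{k+1}\wedge d_k=d_k$, meaning $\mat{U}_k$ is a $d_k\times d_k$ orthogonal matrix and hence $\rank(\mat{T}_{(k)})$ can be as large as $d_k$ — i.e. the tensor is generically full-Tucker-rank in those modes. I would also note the subtlety that the bound $\rank(\mat{G}^{(k)}_{(2)})\le r_kr_{k+1}$ may not be tight for a specific decomposition, but since we only need a valid Tucker representation with factors of width $R_k=r_kr_{k+1}\wedge d_k$ (not of minimal width), it suffices to pad $\mat{U}_k$ with extra orthonormal columns if the QR rank comes out smaller; this keeps column-orthogonality intact.

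The main obstacle I anticipate is purely notational rather than conceptual: carefully justifying the factorization $\mat{T}_{(k)}=\mat{G}^{(k)}_{(2)}\bigl(\mat{G}^{(\neq k)}_{(2)}\bigr)^{\top}$ of the mode-$k$ unfolding (or equivalently managing the cyclic index structure of the TR format when one core is singled out and the rest are merged), and then making sure the "pull out $\mat{U}_k$ along mode $k$" step is written so that the product of $K$ separate linear maps composes correctly into a single Tucker multilinear product with the residual TR core. Once the element-wise identity above is written cleanly, orthogonality of the $\mat{U}_k$ is immediate from the QR construction and the proof concludes.
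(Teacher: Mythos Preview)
Your proposal is correct and follows essentially the same approach as the paper: factor the mode-2 unfolding $\mat{G}^{(k)}_{(2)}$ of each TR core orthogonally, absorb the non-orthogonal factor into a reduced core $\tilde{\tensor{G}}^{(k)}$, and pull the orthogonal factors out of the TR contraction to obtain the Tucker form $\tT=\tTt\times_1\mat{U}_1\cdots\times_K\mat{U}_K$ with $\tTt=\mathrm{TR}(\tilde{\tensor{G}}^{(1)},\cdots,\tilde{\tensor{G}}^{(K)})$. The only cosmetic differences are that the paper uses skinny SVD (setting $\mat{U}_k=\mat{I}_{d_k}$ explicitly when $d_k\le r_kr_{k+1}$) and invokes an external result from the TR literature for the ``pull-out'' step, whereas you treat both cases uniformly via QR and verify the multilinear identity element-wise; your remark about padding when the actual rank of $\mat{G}^{(k)}_{(2)}$ falls short of $R_k$ is a point the paper glosses over.
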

We leave the proof of Lemma \ref{lemma-tr-tucker} in Appendix \ref{proof-tr-tucker} of supplementary material.  Lemma \ref{lemma-tr-tucker} depicts that given a tensor with TR rank $[r_1, \cdots, r_K]$,  it can  be also represented by  the Tucker decomposition format, which motivates us to come up with the following theorem. 

\begin{theorem} \label{theorem-rank-equality} %
	Given a tensor $\tT \in \mathbb{R}^{d_1 \times d_2 \cdots \times d_K}$ with TR rank $[r_1,\cdots, r_K]$, $\mat{U}_k \in St(d_k, R_k)$, $ ( r_k r_{k+1} \wedge d_k) \leq R_k \leq d_k$, $k\in[K]$ and $\tilde{\tT}$ satisfy ${\tT}=\tilde{\tT} \times_1 \mat{U}_1 \times_2 \mat{U}_2\cdots \times_K \mat{U}_K $, then we have: 
		\begin{equation}
	\|{\tT}\|_{\emph{trnn}} = \|{\tilde{\tensor{T}}} \|_{\emph{trnn}},
	\end{equation}
where $St(d_k, R_k):=\{ \mat{U}_k, \mat{U}_k\in \mathbb{R}^{d_k \times R_k}, \mat{U}_k^{\top} \mat{U}_k=\mat{I}_{R_k}  \}$ denotes the Stiefel manifold.
\end{theorem}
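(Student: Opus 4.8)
The plan is to reduce the claim, one summand at a time in the definition of the tensor ring nuclear norm, to the elementary fact that the nuclear norm of a matrix is unchanged under left and right multiplication by matrices with orthonormal columns. By the definition of TRNN (\ref{eq-trnn-definition}), $\trnn{\tT} = \sum_{k=1}^{K}\alpha_k\nn{\tM}$ and $\trnn{\tTt} = \sum_{k=1}^{K}\alpha_k\nn{\tMt}$ with the same weights $\alpha_k$, so it suffices to prove $\nn{\tM} = \nn{\tMt}$ for each fixed $k\in[K]$.

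Next I would write the circular mode-$(k,s)$ unfolding of the Tucker relation $\tT = \tTt\times_1\mat{U}_1\cdots\times_K\mat{U}_K$ in matrix form. Fix $k$ and let $l$ be as in (\ref{eq-definition-circular-k}); the rows of $\tM$ are indexed by the block of modes $\{l+1,\dots,k-1\}$ (read cyclically) and its columns by the block $\{k,\dots,l\}$ (the $s$ contracted modes). Since a mode-$j$ product only acts on the $j$-th index, the products over modes in the row block affect only the row index of the unfolding and those over modes in the column block affect only its column index; combining this with the relation between the multi-index operation and Kronecker products (the mixed-product property of $\otimes$) gives
\begin{equation*}
\tM = \mat{A}_k\,\tMt\,\mat{B}_k^{\top},
\end{equation*}
where $\mat{A}_k = \mat{U}_{k-1}\otimes\mat{U}_{k-2}\otimes\cdots\otimes\mat{U}_{l+1}$ (indices cyclic) is the Kronecker product of the factor matrices of the row block in the appropriate order and $\mat{B}_k = \mat{U}_{l}\otimes\cdots\otimes\mat{U}_{k}$ is the analogous product for the column block. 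This is the circular-unfolding counterpart of the familiar identity $\mat{T}_{(k)} = \mat{U}_k\,\tilde{\mat{T}}_{(k)}\,(\mat{U}_K\otimes\cdots\otimes\mat{U}_{k+1}\otimes\mat{U}_{k-1}\otimes\cdots\otimes\mat{U}_1)^{\top}$ for the canonical unfolding, and in practice it can also be read off from the \mcode{reshape}/\mcode{permute} description of $\tM$ together with the fact that mode products commute with permutations of the remaining modes.

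Finally, each $\mat{U}_j\in St(d_j,R_j)$ satisfies $\mat{U}_j^{\top}\mat{U}_j = \mat{I}_{R_j}$, so by the mixed-product property $\mat{A}_k^{\top}\mat{A}_k = (\mat{U}_{k-1}^{\top}\mat{U}_{k-1})\otimes\cdots\otimes(\mat{U}_{l+1}^{\top}\mat{U}_{l+1}) = \mat{I}$ and likewise $\mat{B}_k^{\top}\mat{B}_k = \mat{I}$; that is, $\mat{A}_k$ and $\mat{B}_k$ have orthonormal columns. If $\tMt = \mat{P}\bm{\Sigma}\mat{V}^{\top}$ is a thin SVD, then $\mat{A}_k\tMt\mat{B}_k^{\top} = (\mat{A}_k\mat{P})\bm{\Sigma}(\mat{B}_k\mat{V})^{\top}$ with $\mat{A}_k\mat{P}$ and $\mat{B}_k\mat{V}$ still having orthonormal columns, so this is a thin SVD of $\tM$; hence $\tM$ and $\tMt$ have the same nonzero singular values and $\nn{\tM} = \nn{\tMt}$. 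Summing over $k$ with weights $\alpha_k$ yields $\trnn{\tT} = \trnn{\tTt}$. The main obstacle is the bookkeeping in the middle step — identifying which factor matrices land on the row side versus the column side of the circular unfolding, fixing the Kronecker ordering, and handling the cyclic wrap-around in (\ref{eq-definition-circular-k}); the existence of a valid $\tTt$ is already guaranteed by the hypotheses (and, when $R_k = r_k r_{k+1}\wedge d_k$, by Lemma \ref{lemma-tr-tucker}), and everything after the matricization is routine linear algebra.
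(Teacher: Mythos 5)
Your proposal is correct and follows essentially the same route as the paper: the paper first establishes (as a separate lemma) that the circular mode-$(k,s)$ unfolding of the Tucker relation reads $\mat{T}_{(k,s)} = \bigl( \mat{U}_{K-s} \otimes \cdots \otimes \mat{U}_{l+1} \bigr) \tilde{\mat{T}}_{(k,s)} \bigl( \mat{U}_{l} \otimes \cdots \otimes \mat{U}_{k}\bigr)^{\top}$ via vectorization and the identity $(\mat{A}\otimes\mat{B})\text{vec}(\mat{C})=\text{vec}(\mat{B}\mat{C}\mat{A}^{\top})$, then uses the mixed-product property to get column-orthonormality of the two Kronecker factors and invokes a cited lemma for the invariance of the nuclear norm, exactly matching your three steps. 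The only cosmetic difference is that you prove the nuclear-norm invariance directly from the thin SVD rather than citing it, and your bookkeeping of the Kronecker ordering on the row block is, if anything, stated more cleanly than the paper's.
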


The proof of Theorem \ref{theorem-rank-equality} can be found in Appendix \ref{proof-theorem-rank-equality} of supplementary material.  

Equipped with Theorem \ref{theorem-rank-equality}, we  can  solve the high computational TRNN minimization problem  in a more efficient manner than the existing methods, i.e.,  computing SVD on a much smaller circular unfolding matrix $\tilde{\mat{T}}_{(k,s)} \in \mathbb{R}^{R_{1,k} \times R_{2,k}  }$, where $R_{1,k}=\prod_{j=l+1}^{k-1} R_j $ and $R_{2,k}=\prod_{i=k}^{l} R_i $.  
Though this similar strategy has been used in Tucker tensor nuclear norm minimization problem \cite{Liu2016,Liu2014}, our method is very different from that of these works since our method is implemented in a heterogeneous framework in the sense that equivalently minimizing TRNN in a Tucker decomposition format. 
For example, we have to explore the connection between TR  and Tucker decomposition to meet our analysis. Moreover, it is also crucial for us to establish the relationship of circular unfolding between the original tensor and its Tucker decomposition format. Therefore, these dilemmas make our analysis more complicated than those homogeneous frameworks that directly performing Tucker tensor nuclear norm on its Tucker decomposition format. Note also that these results are very fundamental and crucial to break the gap between different tensor decomposition models, and are also very promising to extend to other tensor decomposition models, e.g., tensor train and tensor tree decomposition. 

Consequently, according to Theorem \ref{theorem-rank-equality},  problem (\ref{eq-tr-noise-model}) can be equivalently reformulated as
\begin{equation}\label{eq-constrainted-problem-faster} 
\begin{split} 
\min_{\tensor{T}, \tilde{\tensor{T}}, \{\mat{U}_k \}_{k=1}^{K}} & \frac{1}{2} \| \mat{y} - \Po{\tT } \|_2^2 + \lambda \trnn{\tTt} + \kappa_{\delta}^{\infty} (\tT)  , \\
 & \text{s.t. } \tT=\tTt \times  {\mat{U}}_1 \cdots \times_K \mat{U}_K , \\ 
 & ~~~~ \mat{U}_k \in  St(d_k, R_k), k\in [K].
\end{split}
\end{equation} 
Now we show that with appropriate choice of $R_k, k \in [K],$ the global optimal solution of (\ref{eq-constrainted-problem-faster}) is the same as that of (\ref{eq-tr-noise-model}).

\begin{theorem} \label{theorem-optimal-solution}
	Suppose ($\{\mat{U}_k' \}_{k=1}^{K}, \tilde{\tensor{T}}' $) and $\tT'$ are the global optimal solutions of problem (\ref{eq-constrainted-problem-faster}) and problem (\ref{eq-tr-noise-model}) with $( r_k r_{k+1} \wedge d_k) \leq R_k \leq d_k, k\in [K]$, respectively, then $\tTt' \times_1 \mat{U}_1' \cdots \times_K \mat{U}_K'$ is also the optimal solution of problem (\ref{eq-tr-noise-model}). 
\end{theorem}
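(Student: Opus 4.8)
The plan is to show that problems (\ref{eq-tr-noise-model}) and (\ref{eq-constrainted-problem-faster}) have the same optimal value and that $\hat{\tensor{T}}:=\tilde{\tensor{T}}'\times_1\mat{U}_1'\cdots\times_K\mat{U}_K'$ is a feasible point of (\ref{eq-tr-noise-model}) attaining it; global optimality of $\hat{\tensor{T}}$ for (\ref{eq-tr-noise-model}) is then immediate. Write $F(\tensor{T})=\frac{1}{2}\|\mat{y}-\Po{\tensor{T}}\|_2^2+\lambda\trnn{\tensor{T}}+\kappa_{\delta}^{\infty}(\tensor{T})$ for the indicator form of the objective of (\ref{eq-tr-noise-model}), and let $p_1^*,p_2^*$ denote the optimal values of (\ref{eq-tr-noise-model}) and (\ref{eq-constrainted-problem-faster}). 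The two ingredients are Theorem~\ref{theorem-rank-equality} (invariance of TRNN under the heterogeneous Tucker reparametrization) and Lemma~\ref{lemma-tr-tucker} (a TR-rank-$[r_1,\dots,r_K]$ tensor admits a Tucker representation with column-orthogonal factors of size $d_k\times(r_kr_{k+1}\wedge d_k)$).

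For $p_2^*\ge p_1^*$, I would take any feasible point $(\{\mat{U}_k\}_k,\tilde{\tensor{T}},\tensor{T})$ of (\ref{eq-constrainted-problem-faster}) with finite objective: then $\tensor{T}=\tilde{\tensor{T}}\times_1\mat{U}_1\cdots\times_K\mat{U}_K$ with each $\mat{U}_k$ column orthogonal, so Theorem~\ref{theorem-rank-equality} gives $\trnn{\tilde{\tensor{T}}}=\trnn{\tensor{T}}$, while finiteness forces $\|\tensor{T}\|_{\infty}\le\delta$; hence the objective equals $\frac{1}{2}\|\mat{y}-\Po{\tensor{T}}\|_2^2+\lambda\trnn{\tensor{T}}=F(\tensor{T})\ge p_1^*$, and minimizing over feasible points gives $p_2^*\ge p_1^*$. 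For $p_2^*\le p_1^*$, I would let $\tensor{T}'$ be a global optimum of (\ref{eq-tr-noise-model}) with TR rank $[r_1,\dots,r_K]$; Lemma~\ref{lemma-tr-tucker} produces $\mat{V}_k\in St(d_k,r_kr_{k+1}\wedge d_k)$ and a core $\tensor{C}$ with $\tensor{T}'=\tensor{C}\times_1\mat{V}_1\cdots\times_K\mat{V}_K$, and since $r_kr_{k+1}\wedge d_k\le R_k\le d_k$ by hypothesis I would append orthonormal columns to each $\mat{V}_k$ to get $\mat{U}_k\in St(d_k,R_k)$ and zero-pad $\tensor{C}$ to $\tilde{\tensor{T}}$ so that $\tilde{\tensor{T}}\times_1\mat{U}_1\cdots\times_K\mat{U}_K=\tensor{T}'$; this point is feasible for (\ref{eq-constrainted-problem-faster}), and by Theorem~\ref{theorem-rank-equality} again its objective equals $\frac{1}{2}\|\mat{y}-\Po{\tensor{T}'}\|_2^2+\lambda\trnn{\tensor{T}'}=F(\tensor{T}')=p_1^*$, so $p_2^*\le p_1^*$. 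Thus $p_1^*=p_2^*$. Finally, for the FaNTRC optimum $(\{\mat{U}_k'\}_k,\tilde{\tensor{T}}')$, the equality constraint of (\ref{eq-constrainted-problem-faster}) and finiteness of $p_2^*$ make $\hat{\tensor{T}}$ satisfy $\|\hat{\tensor{T}}\|_{\infty}\le\delta$ (hence feasible for (\ref{eq-tr-noise-model})), while the computation used in the first inequality gives $F(\hat{\tensor{T}})=p_2^*=p_1^*$; therefore $\hat{\tensor{T}}$ is a global optimum of (\ref{eq-tr-noise-model}).

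The step I expect to be the main obstacle is the repeated invocation of the TRNN identity $\trnn{\tilde{\tensor{T}}\times_1\mat{U}_1\cdots\times_K\mat{U}_K}=\trnn{\tilde{\tensor{T}}}$ (Theorem~\ref{theorem-rank-equality}): its proof must first work out how the circular mode-$(k,s)$ unfolding of a multilinear product factors as $\mat{A}_k\,\tilde{\mat{T}}_{(k,s)}\,\mat{B}_k^{\top}$, with $\mat{A}_k$ and $\mat{B}_k$ the Kronecker products of the $\mat{U}_j$'s indexed by the cyclically grouped row and column modes, and then observe that a Kronecker product of matrices with orthonormal columns again has orthonormal columns, so $\|\mat{A}_k\,\tilde{\mat{T}}_{(k,s)}\,\mat{B}_k^{\top}\|_*=\|\tilde{\mat{T}}_{(k,s)}\|_*$ and the weighted sum over $k$ is unchanged; the index bookkeeping for the cyclic grouping in the circular unfolding, together with checking that the rank condition $r_kr_{k+1}\wedge d_k\le R_k$ is met so Theorem~\ref{theorem-rank-equality} legitimately applies in each use, is the delicate part. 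A secondary point is that the $[r_1,\dots,r_K]$ in the hypothesis must be read as the TR rank of the selected optimum $\tensor{T}'$ so that Lemma~\ref{lemma-tr-tucker} places $\tensor{T}'$ in the range of the reparametrization; if (\ref{eq-tr-noise-model}) has several optima, one simply selects a low-TR-rank representative.
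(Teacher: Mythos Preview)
Your proposal is correct and follows essentially the same two-inequality argument as the paper: show $p_2^*\ge p_1^*$ by applying Theorem~\ref{theorem-rank-equality} to any feasible point of (\ref{eq-constrainted-problem-faster}), and show $p_2^*\le p_1^*$ by Tucker-decomposing the optimum $\tensor{T}'$ of (\ref{eq-tr-noise-model}) via Lemma~\ref{lemma-tr-tucker} and feeding it into (\ref{eq-constrainted-problem-faster}). The paper's proof is terser---it simply asserts that such a Tucker factorization with $\mat{U}_k^*\in St(d_k,R_k)$ exists once $R_k\ge r_kr_{k+1}\wedge d_k$, whereas you spell out the zero-padding of the core and the orthonormal completion of the factors; you are also more explicit about the $\ell_\infty$ feasibility, which the paper leaves implicit. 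Your caveat that the $r_k$ in the hypothesis must be the TR rank of the selected optimum $\tensor{T}'$ is a fair reading issue that the paper glosses over as well.
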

The proof of Theorem \ref{theorem-optimal-solution} can be found in Appendix \ref{proof-theorem-optimal-solution} of supplementary material.

 To tackle the constrained optimization problem (\ref{eq-constrainted-problem-faster}), we formulate its augmented Lagrangian function as 
\begin{equation} \label{eq-Lagrangisn-faster}
	\begin{split}
	\ell_{\eta}^2 (\tT,  \tTt, & \{ \mat{U}_k \}_{k=1}^K, \{\tensor{L}^k\}_{k=1}^{K},\tensor{P}, \{\tensor{R}^k\}_{k=1}^K ) =  \frac{1}{2} \fros{ \mat{y} - \Po{\tT } } \\
	+& \langle \tensor{P},  \tT - \tTt \times_1 \mat{U}_1 \cdots \times_K \mat{U}_K\rangle + \kappa_{\delta}^{\infty} (\tT)  \\
	 +& \frac{\eta}{2} \fros{ \tT - \tTt \times_1 \mat{U}_1 \cdots \times_K \mat{U}_K } + \lambda \sum_{k=1}^{K} \alpha_{k} \| \mat{L}^k_{(k,s)} \|_*  \\
	  +& \sum_{k=1}^{K} \left(   \langle \tensor{L}^k - \tTt, \tensor{R}^k \rangle  +\frac{\eta}{2}  \fros{\tensor{L}^k - \tTt}  \right), \\
	 & \text{s.t. } \mat{U}_k \in St(d_k, R_k), k\in [K], 
	\end{split}
\end{equation}
where $\tensor{P}$ and $\{\tensor{R}^k\}_{k=1}^K $  are dual variables, $\eta > 0$ is penalty parameter. Based on the augmented Lagrangian function (\ref{eq-Lagrangisn-faster}), we solve each subproblem alternatively by fixing the others.

\subsubsection{The $\{ \mat{U}_k\}_{k=1}^{K}$-subproblems} According to (\ref{eq-Lagrangisn-faster}), the $ \mat{U}_k$-subproblem can be formulated as the following maximization problem over factor matrix $\mat{U}_k$: 
\begin{equation}   \label{eq-Uk-subproblem} 
 \max_{ \mat{U}_k}	\langle \tilde{\tT}^{t} \times_{ m\neq k} \mat{U}_m^{t} ,  (\frac{1}{\eta^t} \tensor{P}^t + \tensor{T}^t ) \times_k \mat{U}_k^{\top} \rangle, ~\text{s.t. } \mat{U}_k \in St(d_k, R_k),
	\end{equation}
where $\tilde{\tensor{T}}^{t} \times_{m\neq k} \mat{U}_m^{t}$ denotes $\tilde{\tensor{T}}^{t} \times_1 \mat{U}_1^{t+1} \cdots \times_{m-1}  \mat{U}_{m-1}^{t+1} \times_{ m+1} \mat{U}_{m+1}^{t} \cdots \times_K \mat{U}_K^{t} $, and  it can be further reformulated as
\begin{equation} \label{eq-update-Uk}
	\begin{split}
			\mat{U}_{k}^{t+1}  = & \arg \max_{ \mat{U}_k}\trace{ \mat{U}_{k}^{\top}  ( \frac{1}{\eta^t} \mat{P}_{(k)}^t + {\mat{T}}_{(k)}^t )\mat{B}^t_{(k)} },  \\
			& ~\text{s.t. } \mat{U}_k \in St(d_k, R_k),
	\end{split}
\end{equation}
where $\mat{B}^t_{(k)}$ denotees the canonical mode-$k$  unfolding of $\tilde{\tensor{T}}^{t} \times_{m\neq k} \mat{U}_m^{t} $. 
 Note that  (\ref{eq-update-Uk}) is actually a well-known orthogonal procrustes problem \cite{higham1995matrix}, whose global optimal solution is given by  SVD, i.e.,  
$	\mat{U}_{k}^{t+1} = \tilde{\mat{U}}_k^{t}  \tilde{\mat{V}}_k^{t\top}$, 
where $ \tilde{\mat{U}}_k^t$ and $\tilde{\mat{V}}_k^{t\top}$ denote left and right singular vectors of $ (1/\eta^t \mat{P}_{(k)}^t +\mat{T}_{(k)}^t )\mat{B}_{(k)}^t$, respectively. By alternatively solving the maximization problem (\ref{eq-update-Uk}), we can obtain the optimal solution of $\{ \mat{U}_k \}_{k=1}^{K}$.
\subsubsection{The $\tTt$-subproblem} By fixing all the other variables, the $\tTt$-subproblem is given by:
\begin{equation} 
\begin{split}
	\min_{\tTt  }  ~   \fros{(\frac{1}{\eta^t}\tensor{P}^t + \tT^{t+1} ) - \tTt\times_1 \mat{U}_{1}^{t+1} \cdots \times_K \mat{U}_{K}^{t+1}  } &\\
	 + \sum_{k=1}^{K} \fros{ \frac{1}{\eta^t}\tensor{R}^{k,t} + \tensor{L}^{k,t} - \tTt}&, 
\end{split}
\end{equation}
whose closed-form solution can be obtained  directly:
\begin{equation} \label{eq-update-tTt}
\begin{split}
	\tTt^{t+1} = \frac{1}{K+1}  (\frac{1}{\eta^t} \tensor{P}^t + \tT ^{t}) \times_1 \mat{U}_{1}^{t+1,\top} \cdots \times_K \mat{U}_{K}^{t+1,\top}&  \\
+ \frac{1}{K+1} \sum_{k=1}^{K} \frac{1}{\eta^t} \tensor{R}^{k,t} + \tensor{L}^{k,t}&.
\end{split}
\end{equation} 
\subsubsection{The $\{ \tensor{L}^k \}_{k=1}^K$-subproblems}
The $\tensor{L}^k$ can be solved by minimizing the following subproblem: 
\begin{equation} \label{eq-Lk-subproblem}
	\begin{split}
		\mat{L}_{(k,s)}^{k,t+1}  = \arg \min_{\mat{L}_{(k,s)}^k } ~ &\frac{\lambda \alpha_k}{\eta^t} \| \mat{L}_{(k,s)}^k \|_*  
		 \\
		 + \frac{1}{2}&\| \mat{L}_{(k,s)}^k 
		 +\frac{1}{\eta^t}\mat{R}_{(k,s)}^{k,t} - \tilde{\mat{T}}_{(k,s)}^{t+1} \|_F^2.
	\end{split}
\end{equation}
Similar to (s), it can be easy to obtain its closed-form solution by the SVT method.

\subsubsection{The $ \tT$-subproblem}  We update the variable $\tT$ by fixing the others:
\begin{align}
\nonumber  \text{vec}(&\tT^{t+1}) =  \arg \min_{\tT} \kappa_{\delta}^{\infty}(\tT)  +  \|  \text{vec}(\tT) - (\mat{X}^{\top} \mat{X}   + {\eta} \mat{I}_{D} )^{-1} \\
&\label{eq-update-T} (\mat{X}\mat{y} -   \text{vec} ( \tensor{P}^{t}  - \eta^t \tTt^{t+1} \times_1 \mat{U}_{1}^{t+1} \cdots \times_K \mat{U}_{K}^{t+1} ) ) \|_2^2,
\end{align}
whose closed-form solution can be obtained as (\ref{eq-solving-T-proj}).

\subsubsection{The $(\tensor{P}, \{ \tensor{R}^k \}_{k=1}^K )$-subproblems} 
The dual variables $\tensor{P}$ and $\{\tensor{R}^k \}_{k=1}^K $ can be updated by the gradient ascent method:
\begin{align}
	&\label{eq-update-P} \tensor{P}^{t+1} = \tensor{P}^{t} + \eta^t ( \tT^{t+1} - \tTt^{t+1} \times_1 \mat{U}_{1}^{t+1} \cdots \times_K \mat{U}_{K}^{t+1}), \\
& \label{eq-update-R} \tensor{R}^{k,t+1} = \tensor{R}^{k,t} + \eta^t (\tensor{L}^{k,t+1} - \tTt^{t+1}), ~ k\in [K].
\end{align} 
The details of the optimization procedure are summarized in Algorithm 2.

\subsection{Computational Complexity Analysis} 
\label{sec-computational-complexity}
For simplicity, we assume that the observed tensor is of size $d_1 = \cdots = d_K = d$, the given rank satisfies $R_1 = \cdots = R_K = R$, and  $s=\lceil K/2 \rceil$. For Algorithm 1, the main per-iteration cost lies in (\ref{eq-update-Mk}), which requires computing SVD of $d^{K/2} \times d^{K/2}$ matrices. Therefore, the per-iteration complexity is $\mathcal{O}( Kd^{3K/2}  )$. For Algorithm 2, the main per-iteration cost lies in computing multiplication operation in (\ref{eq-Uk-subproblem}) and (\ref{eq-update-T}), and calculating SVD in (\ref{eq-update-Uk}) and (\ref{eq-Lk-subproblem}). The multiplication operations in (\ref{eq-Uk-subproblem}) and (\ref{eq-update-T}) involve complexity $\mathcal{O}(KRd^{K})$ and $\mathcal{O}(\sum_{k=1}^{K} R^{K-k+1} d^{k} )$, respectively. The computational costs of SVD in  (\ref{eq-update-Uk})  and (\ref{eq-Lk-subproblem}) are $\mathcal{O}(KR^{2} d^{K-1} )$ and $\mathcal{O}( K R^{3K/2} )$, respectively. Hence, the total computational cost in one iteration of Algorithm 2 are given by
\begin{equation}
\mathcal{O}( KRd^{K}  + KR^2d^{K-1} + \sum_{k=1}^{K} R^{K-k+1} d^{k}  + KR^{3K/2}).	
\end{equation} 
Note that when $R \ll d$, the computational cost of Algorithm 2 is significantly lower than that of Algorithm 1 and the other TRNN based methods.

\subsection{Convergence Analysis}
{  The proposed NTRC is the classical two-block convex ADMM optimization algorithm, and the convergence results can be directly induced according to  previous works  \cite{gabay1976dual,lin2010augmented}. }
 FaNTRC is a nonconvex ADMM algorithm for which the theoretical convergence guarantee is difficult to derive.  Instead, we provide weak convergence results for FaNTRC under mild condition. 
\begin{theorem} \label{theorem-convergence-fantrc}
	Let $(\tT^{t},  \tTt^{t},  \{ \mat{U}_k^t \}_{k=1}^K, \{\tensor{L}^{k,t}\}_{k=1}^{K},\tensor{P}^t, \{\tensor{R}^{k,t}\}_{k=1}^K ) $ be the sequences generated by Algorithm 2 (FaNTRC). Suppose that the sequence $\{ \tensor{P}^{t} \}$ is bounded, then we have the following conclusions.
	\begin{enumerate}
		\item $(\tT^{t},  \tTt^{t},  \{ \mat{U}_k^t \}_{k=1}^K, \{\tensor{L}^{k,t}\}_{k=1}^{K} ) $ are  Cauchy sequences.
		\item Any accumulation point of sequences $(\tT^{t},  \tTt^{t},  \{ \mat{U}_k^t \}_{k=1}^K, \{\tensor{L}^{k,t}\}_{k=1}^{K} ) $ satisfy the KKT conditions for problem (\ref{eq-constrainted-problem-faster}).
	\end{enumerate}
\end{theorem}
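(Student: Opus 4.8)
The plan is to follow the standard recipe for establishing convergence of nonconvex ADMM with a Gauss--Seidel update order, which proceeds by controlling the change in the augmented Lagrangian $\ell_{\eta}^2$ across one full sweep of updates and then leveraging boundedness of the dual variable $\{\tensor{P}^t\}$ to convert a summable-differences statement into the Cauchy property. First I would establish that under the hypothesis $\{\tensor{P}^t\}$ bounded, the dual update \eqref{eq-update-P} together with the optimality condition of the $\tT$-subproblem \eqref{eq-update-T} forces $\{\tensor{R}^{k,t}\}$ to be bounded as well (the $\tTt$-subproblem couples $\tensor{R}^k$ with $\tensor{P}$ and $\tensor{L}^k$, and the $\tensor{L}^k$-subproblem optimality condition bounds $\tensor{R}^{k,t+1}$ in terms of a subgradient of the nuclear norm, which has spectral norm at most $\lambda\alpha_k$). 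Boundedness of the multipliers, together with the coercivity of $\ell_{\eta}^2$ in the primal blocks on the relevant sublevel set (note $\tT$ lives in the compact set $\mathbb{S}_\delta$ and each $\mat{U}_k$ lives on the compact Stiefel manifold $St(d_k,R_k)$), yields boundedness of $(\tT^t,\tTt^t,\{\mat{U}_k^t\},\{\tensor{L}^{k,t}\})$.

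Next I would derive the key monotone-decrease inequality: each of the six subproblems is solved to global optimality (the $\mat{U}_k$-update via the orthogonal Procrustes solution, the $\tTt$-update and $\tT$-update in closed form, the $\tensor{L}^k$-update via SVT), so each primal update does not increase $\ell_{\eta}^2$; moreover the $\tTt$- and $\tensor{L}^k$- and $\tT$-subproblems are strongly convex in their own block (coefficients $\tfrac{(K+1)\eta}{2}$, $\tfrac{\eta}{2}$, and $\tfrac{1}{2}\lambda_{\min}(\mat{X}^\top\mat{X}+\eta\matI_D)\geq \tfrac{\eta}{2}$ respectively), giving a sufficient-decrease term proportional to the squared change of each such block. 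The dual updates \eqref{eq-update-P}--\eqref{eq-update-R} increase $\ell_{\eta}^2$ by exactly $\tfrac{1}{\eta^t}\|\tensor{P}^{t+1}-\tensor{P}^t\|_F^2$ and $\tfrac{1}{\eta^t}\|\tensor{R}^{k,t+1}-\tensor{R}^{k,t}\|_F^2$. The crucial bookkeeping step is to bound these dual increments by primal increments: using the $\tT$-optimality condition one writes $\tensor{P}^{t+1}$ as a (sub)gradient expression, so $\|\tensor{P}^{t+1}-\tensor{P}^t\|_F$ is controlled by $\|\tT^{t+1}-\tT^t\|_F$ and (via the sampling operator) lower-order terms; similarly $\|\tensor{R}^{k,t+1}-\tensor{R}^{k,t}\|_F$ is controlled by $\|\tensor{L}^{k,t+1}-\tensor{L}^{k,t}\|_F$ and $\|\tTt^{t+1}-\tTt^t\|_F$. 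Choosing the decrease constants from the strong convexity to dominate these dual-increment bounds (this is where $\eta^t\to\eta_{\max}$ eventually being large, or simply $\eta^t$ bounded below, is used), one obtains
\begin{equation*}
\ell_{\eta^{t+1}}^2(\cdot^{t+1}) \le \ell_{\eta^t}^2(\cdot^t) - c\Big(\|\tT^{t+1}-\tT^t\|_F^2 + \|\tTt^{t+1}-\tTt^t\|_F^2 + \textstyle\sum_k \|\tensor{L}^{k,t+1}-\tensor{L}^{k,t}\|_F^2 + \textstyle\sum_k\|\mat{U}_k^{t+1}-\mat{U}_k^t\|_F^2\Big)
\end{equation*}
for some $c>0$ once $t$ is large. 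Since $\ell_{\eta}^2$ is bounded below on the bounded region (each term is either nonnegative or a bounded inner product against a bounded multiplier), telescoping gives that all the squared-difference series are summable, hence the differences tend to zero and, in fact, $(\tT^t,\tTt^t,\{\mat{U}_k^t\},\{\tensor{L}^{k,t}\})$ are Cauchy sequences, proving conclusion (1). For conclusion (2), I would pass to the limit along any accumulation point: summability forces $\tensor{P}^{t+1}-\tensor{P}^t\to 0$ and $\tensor{R}^{k,t+1}-\tensor{R}^{k,t}\to 0$, which by \eqref{eq-update-P}--\eqref{eq-update-R} are exactly the primal feasibility residuals $\tT-\tTt\times_1\mat{U}_1\cdots\times_K\mat{U}_K$ and $\tensor{L}^k-\tTt$; meanwhile the per-block optimality conditions (stationarity of $\ell_{\eta}^2$ in each block), being closed relations in the iterates, survive the limit and become exactly the KKT stationarity conditions for \eqref{eq-constrainted-problem-faster}, with the nuclear-norm subdifferential inclusion handled by outer semicontinuity of the subdifferential.

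The main obstacle I anticipate is the dual-increment control, i.e., showing $\|\tensor{P}^{t+1}-\tensor{P}^t\|_F$ and $\|\tensor{R}^{k,t+1}-\tensor{R}^{k,t}\|_F$ are genuinely dominated by the primal block increments with a constant that the strong-convexity decrease can absorb. For the $\tensor{L}^k$ block this is delicate because the relevant subgradient lives in the nuclear-norm subdifferential, which is only bounded (by $\lambda\alpha_k$ in spectral norm) rather than giving a Lipschitz-type bound on differences directly; one typically circumvents this by instead expressing $\tfrac{1}{\eta^t}(\tensor{R}^{k,t+1}-\tensor{R}^{k,t})$ as the difference of two elements of the $\tTt$-residual and bounding it through the strong convexity of the $\tensor{L}^k$- and $\tTt$-subproblems rather than through the subgradient, which is exactly why the argument needs the combined sufficient-decrease over $\tTt$ and all $\tensor{L}^k$. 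A secondary subtlety is that $\eta^t$ is increasing, so the decrease constant $c$ and the Lagrangian value both move with $t$; this is handled by noting $\eta^t$ is monotone and bounded ($\eta^t\uparrow\eta_{\max}<\infty$ is not assumed, but $\nu>1$ means $\eta^t\to\infty$ — in that case the $\tfrac{1}{\eta^t}$-weighted dual increments become even easier to absorb, while the coercivity/lower-bound argument must be run with a uniform lower bound $\eta^t\ge\eta^0$), so in either reading the bound holds for all large $t$, which suffices for the Cauchy and KKT conclusions. These are precisely the "mild conditions" the statement alludes to, and the boundedness of $\{\tensor{P}^t\}$ is the single nontrivial assumption that makes everything else go through.
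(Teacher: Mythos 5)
Your Part~1 (boundedness) and Part~3 (passing to the limit in the per-block optimality conditions) line up with what the paper does: the paper likewise derives $-\frac{1}{\lambda\alpha_k}\mat{R}^{k,t+1}_{(k,s)}\in\partial\|\mat{L}^{k,t+1}_{(k,s)}\|_*$ to bound the multipliers $\{\tensor{R}^{k,t}\}$ via the dual-norm property of nuclear-norm subgradients, and then uses the assumed boundedness of $\{\tensor{P}^t\}$ plus the compactness of $\mathbb{S}_\delta$ and of the Stiefel manifolds to bound the primal blocks. The divergence, and the genuine gap, is in how you reach the Cauchy conclusion. Your engine is the standard nonconvex-ADMM sufficient-decrease inequality, which after telescoping yields only $\sum_t\fros{\tT^{t+1}-\tT^t}<\infty$ and its analogues for the other blocks. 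Summability of the \emph{squared} increments gives $\fro{\tT^{t+1}-\tT^t}\to 0$, but it does \emph{not} give the Cauchy property (the partial sums of $1/t$ are the standard counterexample), so the sentence ``hence the differences tend to zero and, in fact, \ldots are Cauchy sequences'' is a non sequitur. Without an additional mechanism --- a Kurdyka--{\L}ojasiewicz argument, or absolute summability of the unsquared increments --- conclusion (1) does not follow from your inequality.

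The paper's proof uses a different and much more elementary mechanism that you notice but do not exploit: since $\eta^{t+1}=\nu\eta^t$ with $\nu>1$, the series $\sum_t 1/\eta^t$ is geometric and convergent. The paper then writes each primal increment explicitly as a bounded quantity divided by $\eta^t$; for instance, from the first-order condition of the $\tensor{L}^k$-update and the dual update (\ref{eq-update-R}),
\begin{equation*}
\sum_{t=1}^{\infty}\fro{\mat{L}^{k,t}_{(k,s)}-\mat{L}^{k,t-1}_{(k,s)}}
=\sum_{t=1}^{\infty}\frac{\fro{(\nu-1)\mat{R}^{k,t}_{(k,s)}-\nu\mat{R}^{k,t-1}_{(k,s)}+\lambda\alpha_k\mat{F}^{k,t}}}{\eta^{t}}
\leq \frac{b_{\max}}{\eta^0(\nu-1)}<\infty,
\end{equation*}
which is absolute summability of the unsquared differences and hence directly gives Cauchy; the feasibility residuals $\tensor{L}^{k,t}-\tTt^t$ likewise vanish because they equal $\frac{1}{\eta^t}(\tensor{R}^{k,t+1}-\tensor{R}^{k,t})$ with bounded numerator. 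This sidesteps entirely the dual-increment-versus-primal-decrease bookkeeping that you correctly identify as the delicate point of your route (and which is genuinely problematic for the nuclear-norm block). To repair your proposal you would either have to supply a KL-type argument on top of the sufficient decrease, or simply switch to the paper's observation that every increment is $O(1/\eta^t)$ with $\sum_t 1/\eta^t<\infty$.
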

The proof sketches of  Theorem \ref{theorem-convergence-fantrc}  can be found in Appendix   \ref{sec-proof-convergence-fantrc} of the supplementary material.  In addition, we also experimentally investigate the RE  and PSNR  values versus the iteration number for the proposed NTRC and FaNTRC in Fig. \ref{fig-psnr-history}. We can clearly observe that the proposed methods usually converge after only approximately 50 iterations, indicating  the effective convergence behavior of the proposed algorithms.  

\section{Experimental Results} \label{section-experiments}
In this section, we conduct four experiments to evaluate noisy tensor completion performance of the proposed NTRC and FaNTRC on synthetic and real-world data sets including color images, light field images, and video sequences. All experiments are run in MATLAB 9.4 on a Linux personal computer equipped with dual Intel E5 2640v4 and 128GB of RAM. We then present numerical results to compare with several state-of-the-art  tensor completion methods including   {TMac-inc   \cite{Xu2015a} \footnote{https://xu-yangyang.github.io/TMac/} },  HaLRTC \cite{JiLiu2010b} \footnote{https://www.cs.rochester.edu/u/jliu/publications.html}, SiLRTC-TT \cite{Bengua2017a} \footnote{https://sites.google.com/site/jbengua/home}, TRLRF \cite{Yuan2019a}   \footnote{https://github.com/yuanlonghao/TRLRF}, TRNNM \cite{Yu2019c} \footnote{The code was provided by Dr. Jinshi Yu}, and NoisyTNN  \cite{Wang2019} \footnote{The code was provided by Dr. Andong Wang}. {Due to space limitation, further experimental results are  included in the supplementary material.}

The sampling method of all experiments is random sampling with respect to different sampling ratios (SR), which is given by ${\text{SR} = N/D \times 100\%}$, where \ensuremath{N} denotes the number of observed entries. To evaluate all the compared methods in terms of tensor completion performance, we  adopt two measure metrics,  namely relative  error (RE), and peak signal-to-noise ratio (PSNR). The RE is a common metric for recovery performance between the approximated tensor \ensuremath{\hat{\tT}} and the original one \ensuremath{{\tT}^*}, which is given by \ensuremath{ \text{RE} =  \| \hat{\tT} - \tT^* \|_F / \| \tT^* \|_F}.  The PSNR denotes the ratio between maximum possible power of a signal and the power of corrupted noise, and is given by
\begin{equation}
 \text{PSNR} = 10 \log_{10}  (\frac{D \|\hat{\tT} \|_{\infty} }{ \| \hat{\tT} - \tT^*\|_F }).
\end{equation}

\begin{figure} [t]
	\centering
	\subfigure[]{
		\begin{minipage}[t]{0.22\textwidth}
			\centering
			\includegraphics[width=4.2cm]{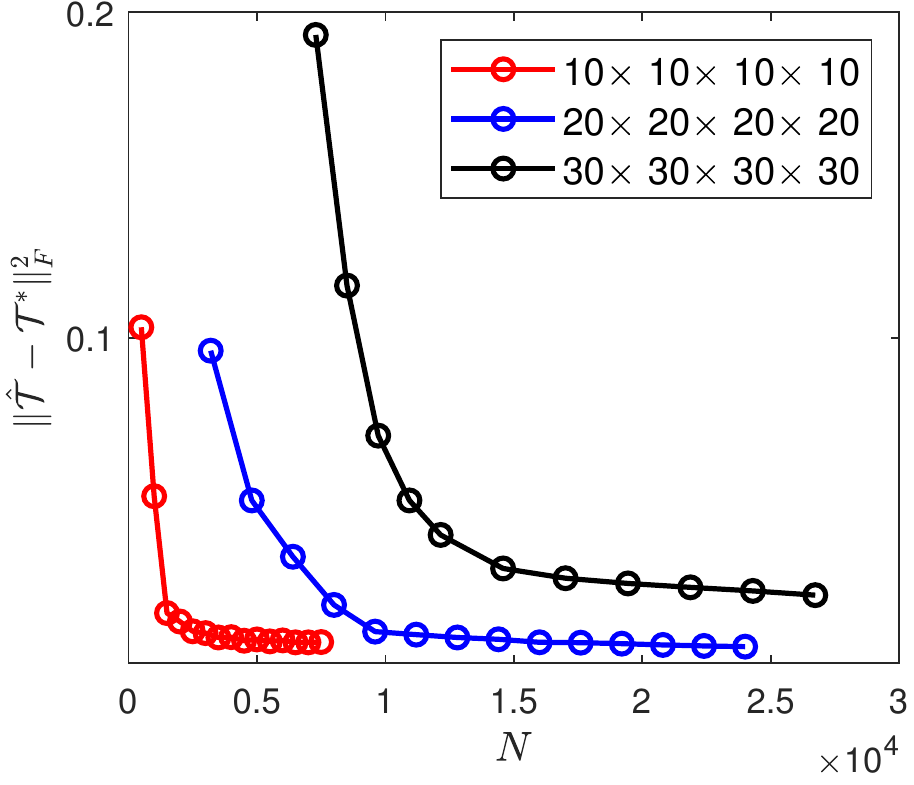}
		\end{minipage}
	}
	\vspace{0.000001cm}
	\subfigure[]{
		\begin{minipage}[t]{0.22\textwidth}
			\centering
			\includegraphics[width=4.2cm]{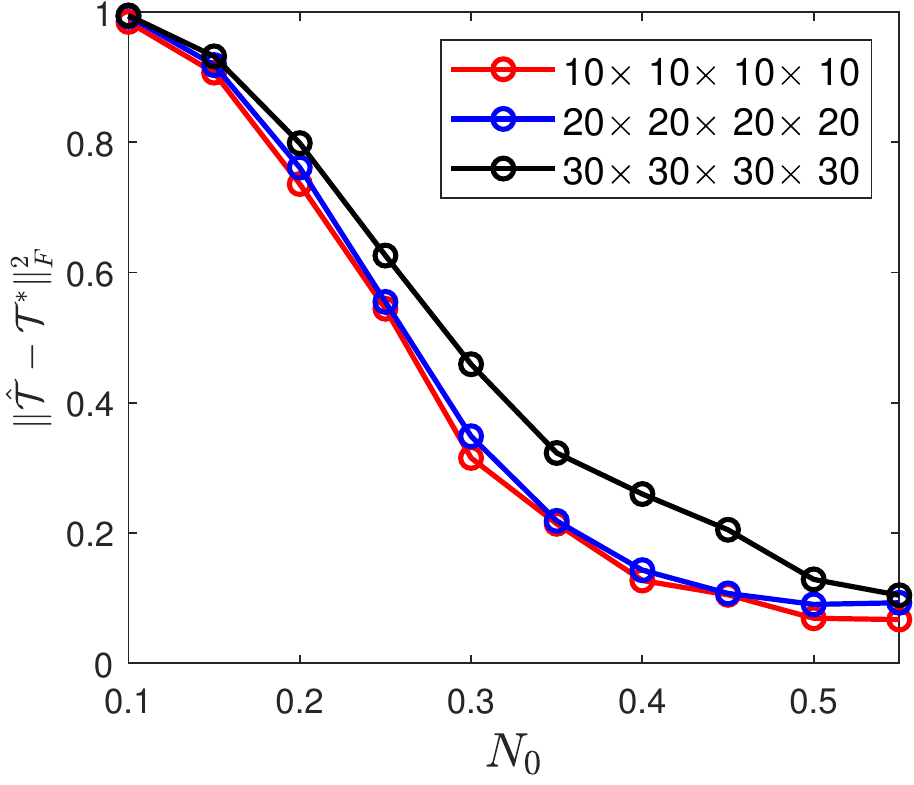}
		\end{minipage}
	}
	\caption{  Comparison of estimation error on different synthetic tensor. (a) Plots of estimation error versus the observation numbers  $N=pd^4$ with $d\in \{10,20,30\}$; (b) Plots of estimation error versus the normalized observation numbers  $N_0={N}/{(r^2 K d^{ \lceil \frac{K}{2} \rceil  }    \log (d^{\lfloor \frac{K}{2} \rfloor }  + d^{\lceil \frac{K}{2} \rceil } ) )}$ with $d\in \{10,20,30\}$. }
	\label{fig-synthetic-reN}
\end{figure}

\begin{figure} [t]
	\centering
	\subfigure[]{
		\begin{minipage}[t]{0.22\textwidth}
			\centering
			\includegraphics[width=4.2cm]{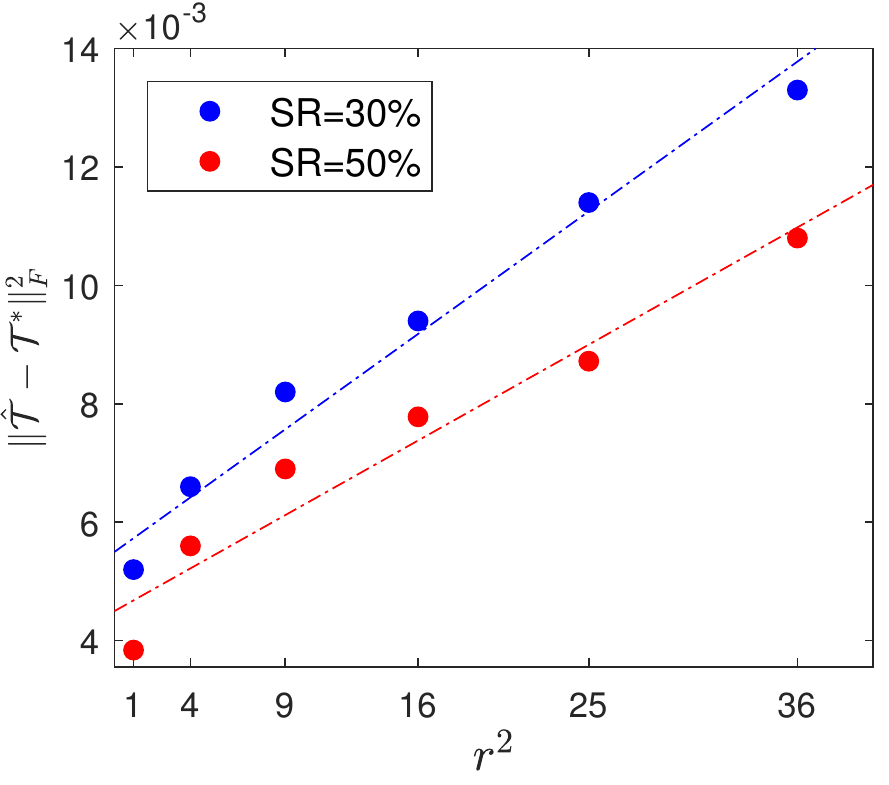}
		\end{minipage}
	}
	\subfigure[]{
		\begin{minipage}[t]{0.22\textwidth}
			\centering
			\includegraphics[width=4.2cm]{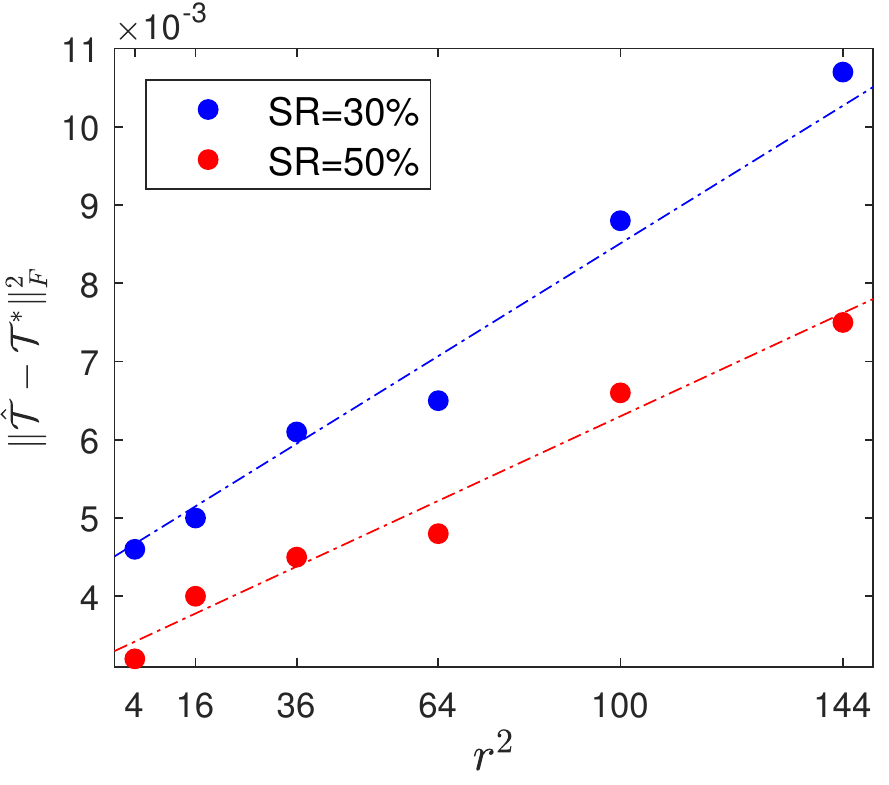}
		\end{minipage}	}
	\caption{  Plots of estimation error versus the square of TR rank. (a) Estimation error versus the square of TR rank on synthetic tensor of  size $\mathbb{R}^{10\times 10\times 10\times 10}$; (b) Estimation error versus the square of TR rank on synthetic tensor of  size $\mathbb{R}^{20\times 20\times 20\times 20}$.}
	\label{fig-synthetic-reR}
\end{figure}

\begin{figure} [t]
	\centering
	\subfigure[]{
		\begin{minipage}[t]{0.22\textwidth}
			\centering
			\includegraphics[width=4.2cm]{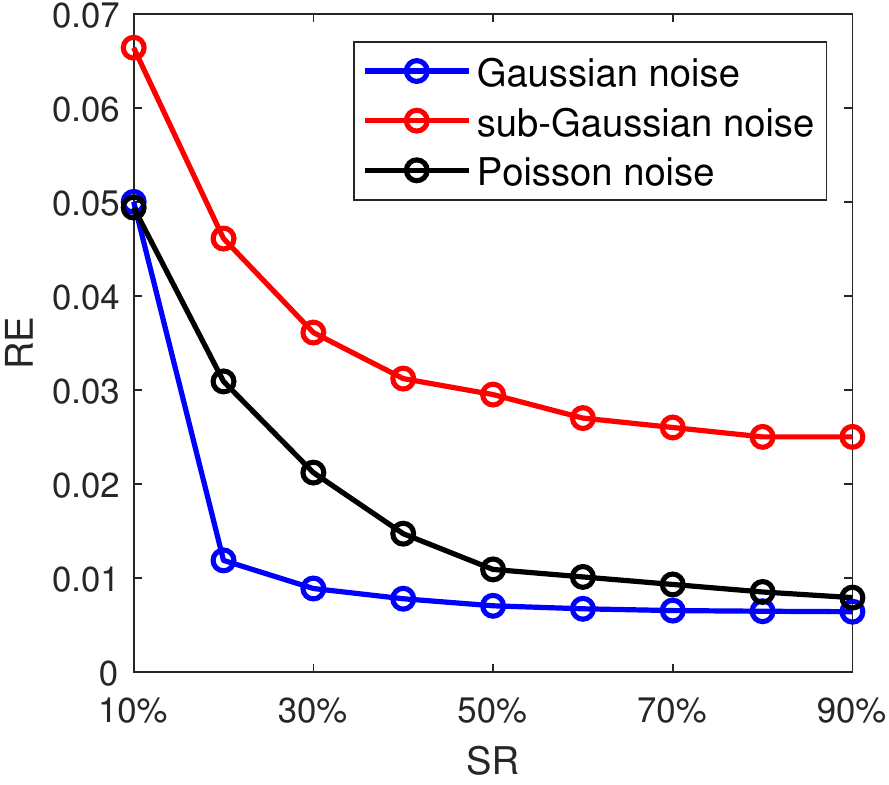}
		\end{minipage}
	}
	\vspace{0.000001cm}
	\subfigure[]{
		\begin{minipage}[t]{0.22\textwidth}
			\centering
			\includegraphics[width=4.2cm]{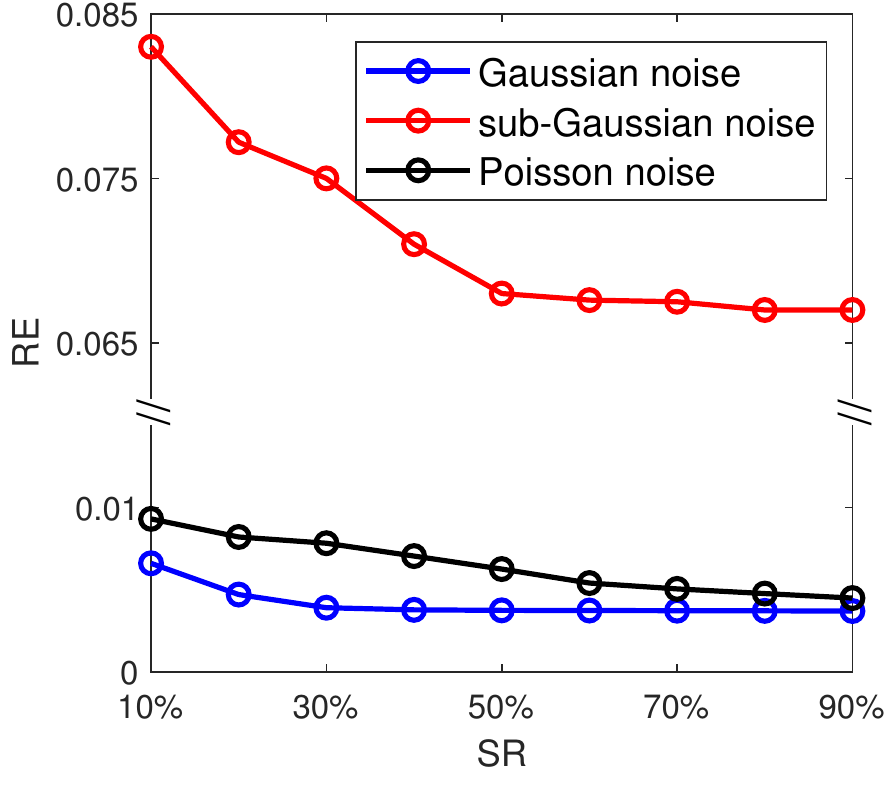}
		\end{minipage}
	}
	\caption{ Plots of RE under different noise distributions versus  SR on synthetic tensors. (a) RE versus SR on synthetic tensor of  size $\mathbb{R}^{10\times 10\times 10\times 10}$ with TR rank $r_k=\lceil \log^{1/2} d_k \rceil, k\in [4]$; (b) RE versus SR on  synthetic tensor of  size $\mathbb{R}^{20\times 20\times 20\times 20}$ with TR rank $r_k=\lceil \log^{1/2} d_k \rceil, k\in [4]$.}
	\label{fig-synthetic-Noise}
\end{figure}

\begin{figure} [t]
	\centering
	\subfigure[]{
		\begin{minipage}[t]{0.22\textwidth}
			\centering
			\includegraphics[width=4.2cm]{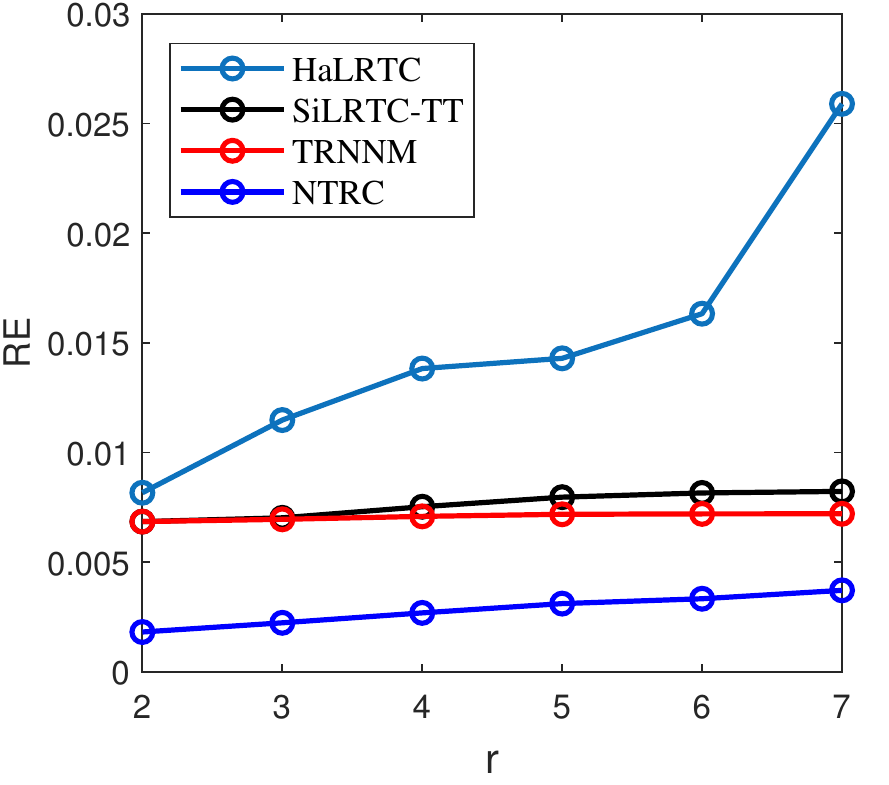}
		\end{minipage}
	}
	\vspace{0.000001cm}
	\subfigure[]{
		\begin{minipage}[t]{0.22\textwidth}
			\centering
			\includegraphics[width=4.2cm]{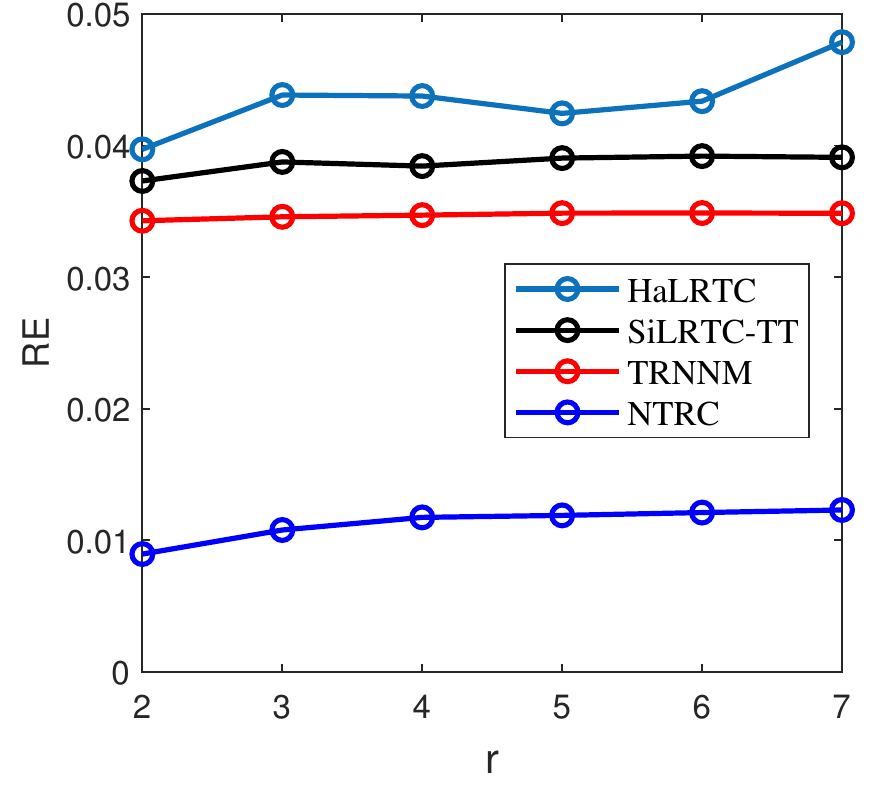}
		\end{minipage}
	}
	\caption{Plots of RE of different methods versus varying TR rank. (a) RE versus TR rank on synthetic tensor of size $\mathbb{R}^{30\times 30 \times 30 \times 30}$ with noise level $c=0.01$; (b) RE versus TR rank on synthetic tensor of size $\mathbb{R}^{30\times 30 \times 30 \times 30}$ with noise level $c=0.05$.}
	\label{fig-supercritical-results}
\end{figure}

\begin{figure} [t]
	\centering
	\subfigure[]{
		\begin{minipage}[t]{0.22\textwidth}
			\centering
			\includegraphics[width=4.2cm]{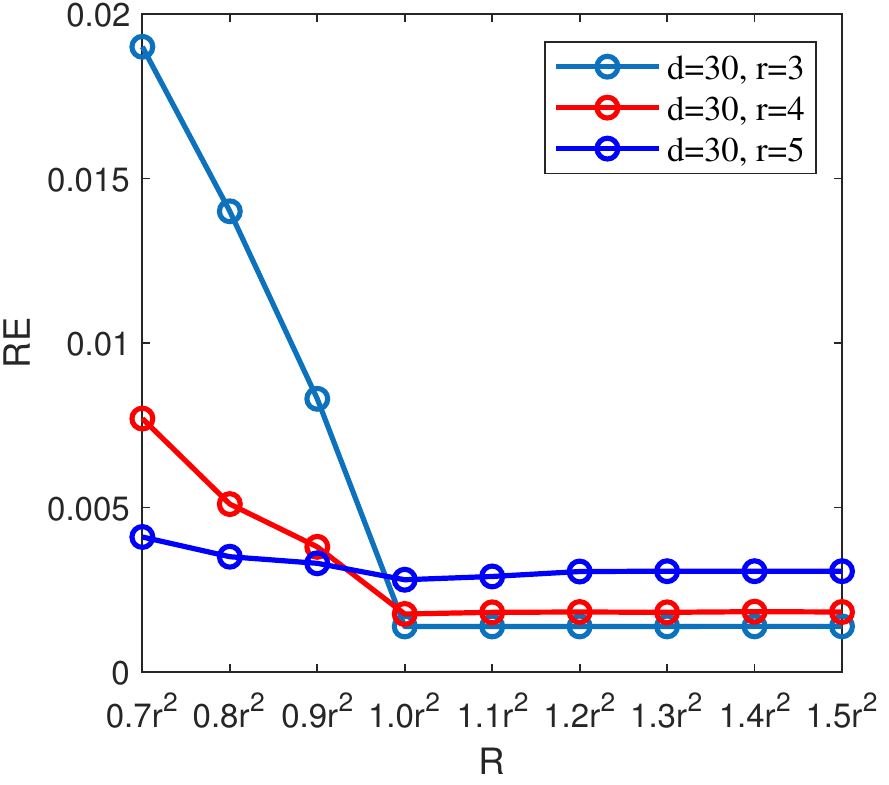}
		\end{minipage}
	}
	\vspace{0.000001cm}
	\subfigure[]{
		\begin{minipage}[t]{0.22\textwidth}
			\centering
			\includegraphics[width=4.2cm]{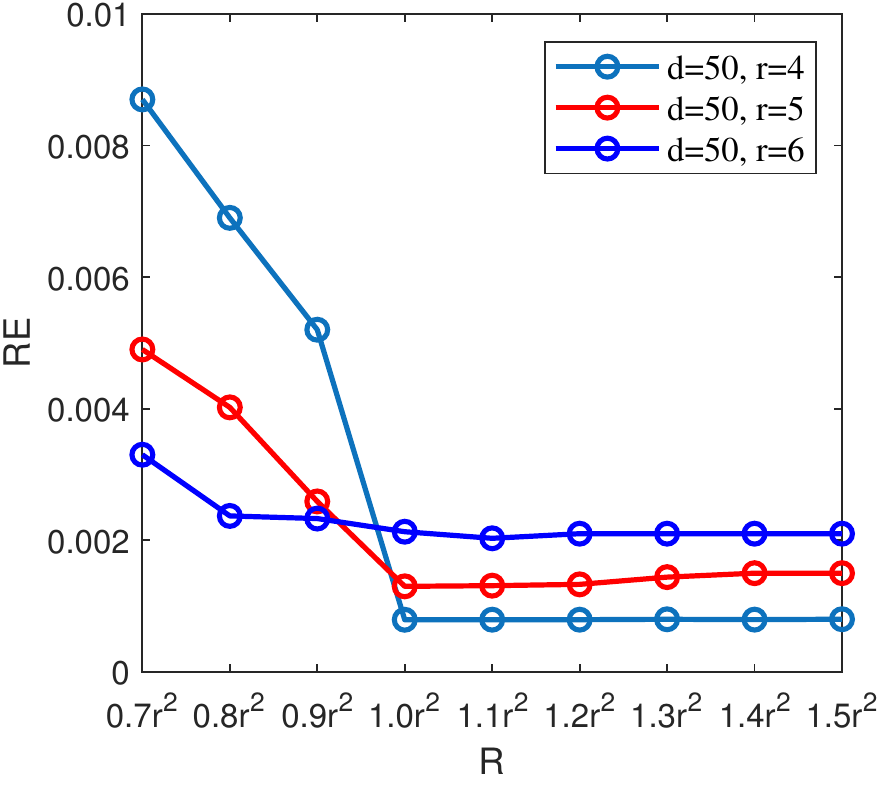}
		\end{minipage}
	}
	\caption{Plots of RE of FaNTRC versus different given rank $R$. (a) RE versus given rank on synthetic tensor of size $\mathbb{R}^{30\times 30 \times 30\times 30}$; (b) RE versus given rank on synthetic tensor of size $\mathbb{R}^{50\times 50 \times 50\times 50}$.}
	\label{fig-synthetic-data-results}
\end{figure}

\begin{figure}[t]
	\centering
	\includegraphics[width=9cm]{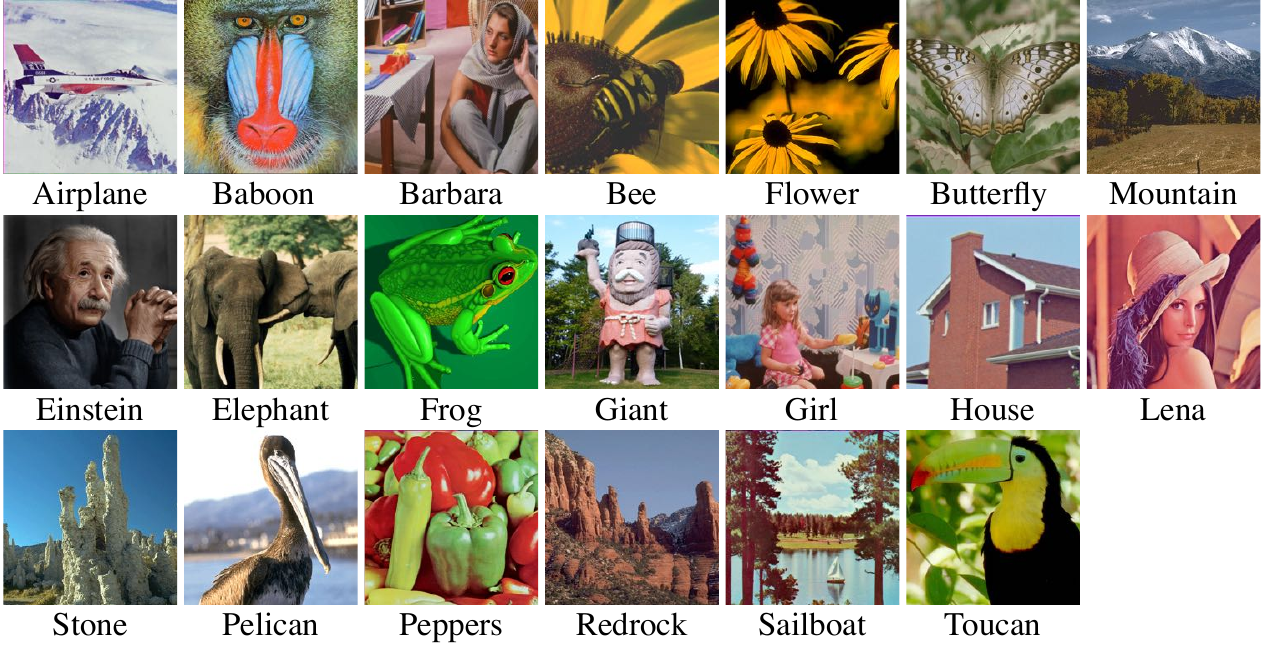}
	\caption{20 benchmark color images whose sizes are all $512\times 512 \times 3$.}
	\label{fig-original-images}
\end{figure}

\begin{figure}[t]
	\centering
	\includegraphics[width=9cm]{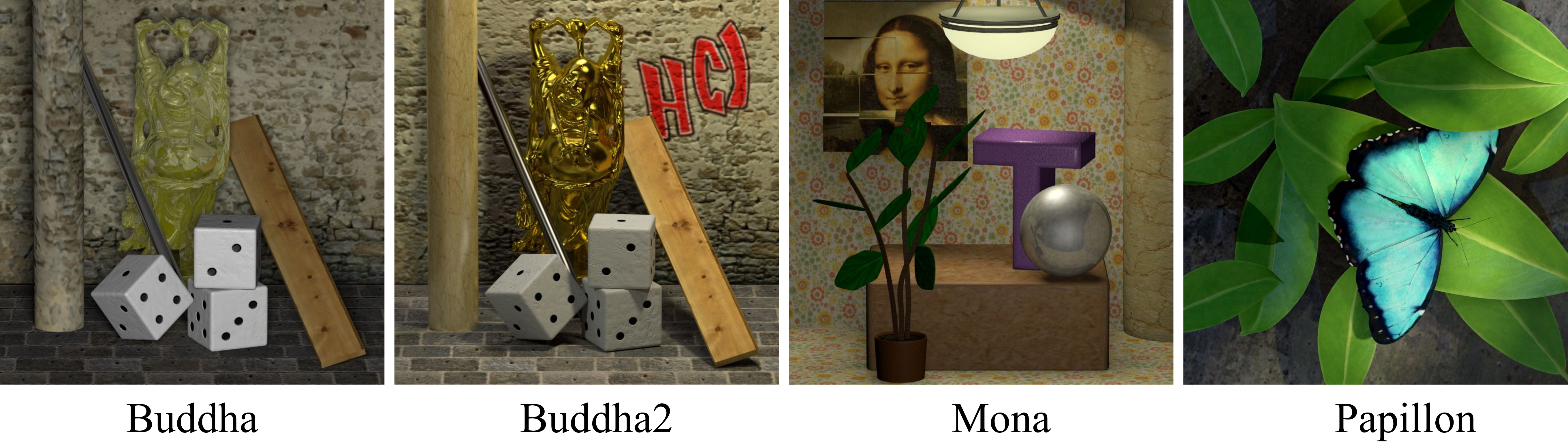}
	\caption{Center view of four light field images.}
	\label{fig-lightfield-images}
\end{figure}

\subsection{Application to Synthetic Tensor Completion} 
\label{sec-synthetic}


\begin{figure*}[t]
	\centering 
	\subfigure[]{
	\includegraphics[width=\linewidth]{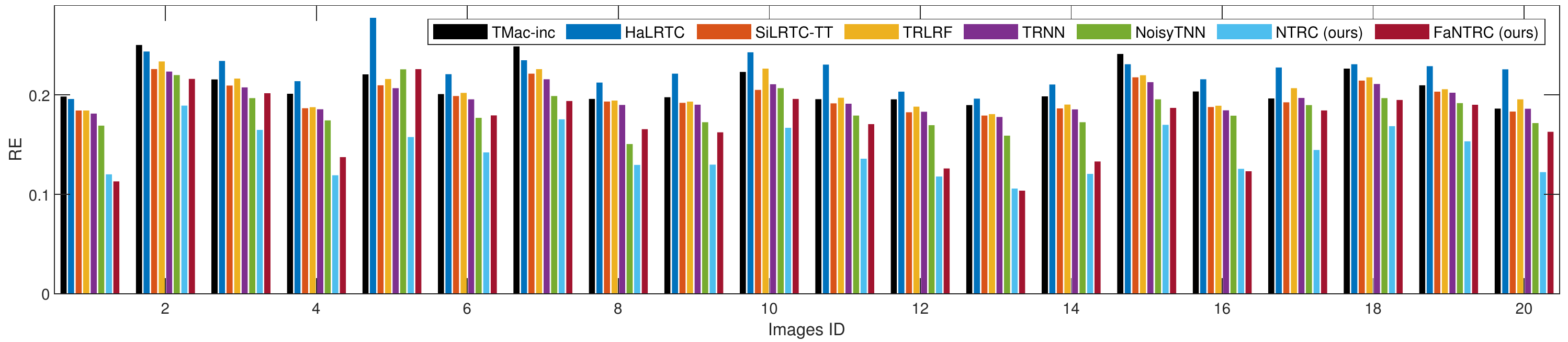}
}

	\vspace{-0.5em}
		\subfigure[]{
		\includegraphics[width=\linewidth]{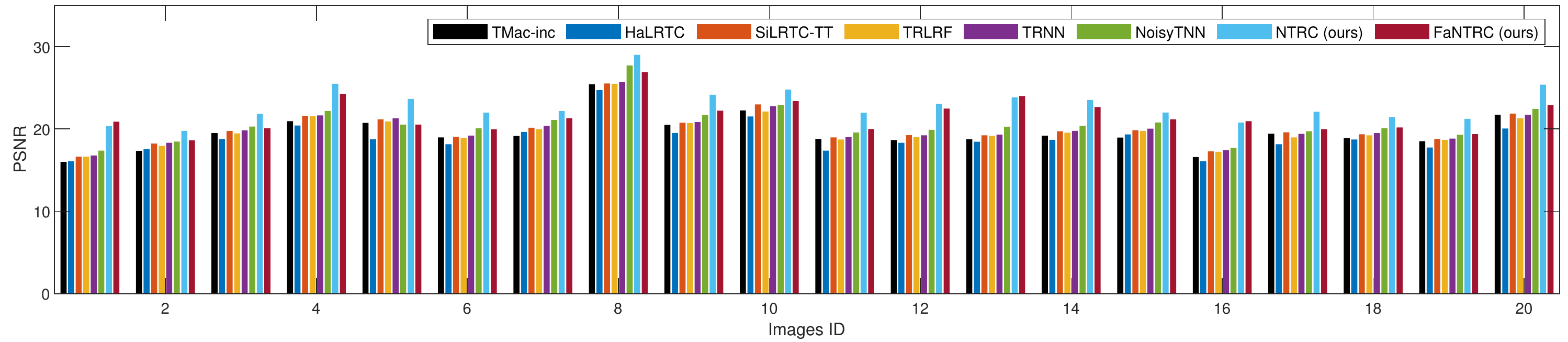}
	}

	\vspace{-0.5em}
	\subfigure[]{
	\includegraphics[width=\linewidth]{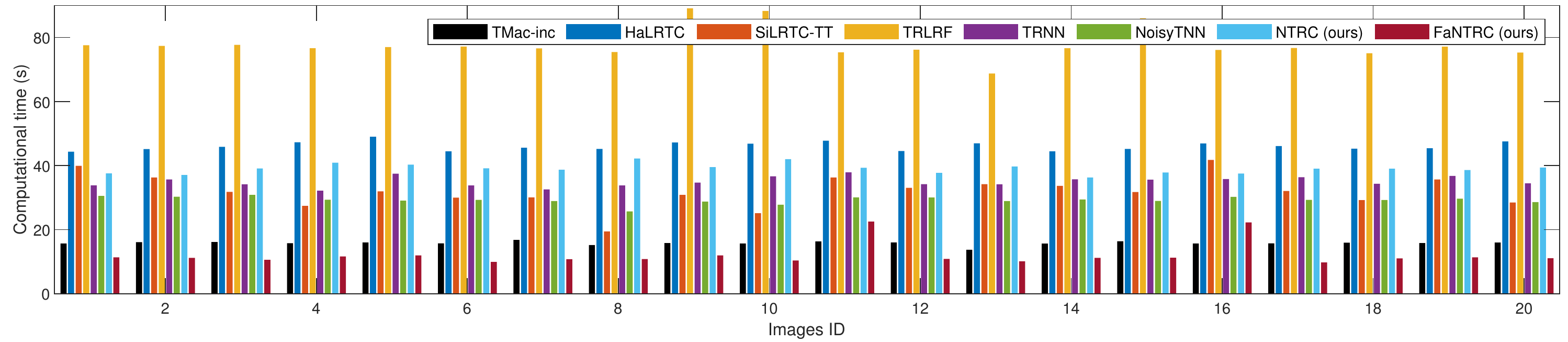}
}
\caption{Comparison of the RE, PSNR and the algorithm running time (seconds) on the benchmark images. (a) Comparison of the PSNR values; (b) comparison of the RE values; (c) comparison of the algorithm running time ( seconds).}
\label{fig-20images-comparison}
\end{figure*}

We conduct two experiments on synthetic data to evaluate the recovery performance of the proposed NTRC and FaNTRC  for recovering incomplete and noisy low-rank tensors. 
We generate a fourth-order tensor $ \tT \in \mathbb{R}^{d_1\times d_2\times d_3\times d_4}$ with TR rank $[r_1, r_2, r_3, r_4]$ as follows. 
First, we  produce four TR cores $\tensor{G}^{(k)} \in \mathbb{R}^{r_{k}\times d_k\times r_{k+1} }, k\in [4]$, using Matlab command $\tensor{G}^{(k)}=$\texttt{rand}$(r_k, d_k, r_{k+1})$. For simplicity, we let $d_k=d$ and  $r_k=r, k\in [4]$, in all experiments below. Then we construct a synthetic tensor by using tensor circular product $\tT^* = \text{TR}(\tensor{G}^{(1)}, \tensor{G}^{(2)}, \tensor{G}^{(3)}, \tensor{G}^{(4)} )$.
In order to meet noisy condition, we produce an additive noise tensor from $\mathcal{N}(0,\sigma^2)$  distribution, where we set the standard deviation $\sigma=c \| \tT^*\|_F /\sqrt{D}$ to keep a constant signal noise ratio. 
Finally, we sample $N=pd^4$ entries uniformly to form the partially observed noisy tensor, where $p$ denotes the SR.

\subsubsection{Sharpness of the Proposed Upper Bound} \label{sec-shapness-of-upperbound} In this experiment, we set the tensor size $d_k =d, d\in \{10,20,30\}, k \in [4]$ and TR rank $r_k=\lceil \log^{1/2} d_k \rceil, k  \in [4]$ following by \cite{wang2017near}. For the parameter $\lambda$, we let $\lambda_0=\sigma \sqrt{N \log(\tilde{d}_{k^*}) / \hat{d}_{k^*} }$ and chose the optimal one by scaling it $\lambda=a \lambda_0$, where $a$ is  experimentally selected in the candidate set $\{10^{-3}, 10^{-2}, 10^{-1}, 1, 10, 10^2, 10^3\}$.  To fix the signal-to-noise ratio (SNR) regardless of tensor size \cite{Negahban2012}, we normalize $\tT^*$ to have unit Frobenius norm, and set the noise level $c=0.01$. We repeat 25 trials using the proposed NTRC algorithm and compute the mean of estimation error $\| \tT^* - \hat{\tT} \|_F^2$ over all the trials.  
Fig. \ref{fig-synthetic-reN} (a) depicts the estimator error obtained by NTRC algorithm versus the observation numbers, 
The curves of Fig. \ref{fig-synthetic-reN} (a) decreases   as the observation number $N$ increases with . Additionally, synthetic tensor with larger  size required a larger number of observations.  According to the constant SNR settings, we substitute $\sigma=0.01\| \tensor{T}^* \|_F/\sqrt{D}$ to the  the right hand-side of (\ref{eq-main-result}), and obtain 
\begin{equation}
	\| \hat{\tT} - \tT^* \|_F^2 \leq c (10^{-4} \vee  D \delta^2)  \frac{r^2 K  d^{ \lceil \frac{K}{2} \rceil } \log(d^{ \lfloor \frac{K}{2} \rfloor}  +  d^{\lceil \frac{K}{2} \rceil  } )  }{N}
\end{equation}
holds with high probability. Following by \cite{Negahban2012}, the proposed upper bound is sharp if the estimation error should scale like 
\begin{equation} \label{eq-sharpness-bound}
	\| \hat{\tT} - \tT^* \|_F^2 = \mathcal{O} (\frac{r^2 K  d^{ \lceil \frac{K}{2} \rceil } \log(d^{ \lfloor \frac{K}{2} \rfloor}  +  d^{\lceil \frac{K}{2} \rceil  } )  }{N}). 
\end{equation}
By setting the rescaled observation number $N_0$ defined by 
\begin{equation}
	N_0 := \frac{N}{r^2 K  d^{ \lceil \frac{K}{2} \rceil } \log(d^{ \lfloor \frac{K}{2} \rfloor}  +  d^{\lceil \frac{K}{2} \rceil  } )  }, 
\end{equation}
the curves should be relatively aligned regardless of tensor sizes. We re-plot the estimation error versus the rescaled observation number in Fig. \ref{fig-synthetic-reN} (b), and observe that all the three curves are well-aligned, which demonstrates the sharpness of the proposed upper bound.

	\subsubsection{Effects of Varying TR ranks} In this experiment, we investigate the influence of TR rank on the estimation error as well as the noise distribution on RE. The experiments are conducted on synthetic tensor with size $d_k=d, d \in \{10,20\}, k \in [4]$, TR rank $r_k=r,r=\{1,2,3,4,5,6\}, k  \in [4]$. We set noise level $c=0.01$ and $\text{SR}=40\%$. We repeat 25 trials using the proposed NTRC algorithm and compute the mean of estimation error over all the trials. 
	We plot the estimation error versus the square of TR rank in Fig. \ref{fig-synthetic-reR}. It can be seen that the estimation error scales linearly against the square of TR rank, which agrees with the result in (\ref{eq-sharpness-bound}).

\subsubsection{Effects of Different Sub-exponential Noises} In this experiment, we investigate the effectiveness of the proposed estimator for dealing with different sub-exponential noises. We generate the synthetic tensor of size $d_k = d, d \in \{10,20\}, k \in [4]$ and TR rank $r_k =\lceil \log^{1/2} d_k \rceil, k  \in [4]$. We generate three types of noises include Gaussian noise, sub-Gaussian noise, and Poisson noise. For Gaussian distribution noise, we  produce the noise tensor from $\mathcal{N}(0,\sigma^2)$, where $\sigma = 0.01 \| \tT^*\|_F / \sqrt{D}$. For sub-Gaussian distribution noise, we generate the noise using Uniform distribution $\mathcal{U}[-0.5,0.5]$. For Poisson distribution noise, we generate the noise tensor from $\text{Pois}(0.01)$.  The incomplete tensors are  generated by uniformly selecting entries at random with SR from 10\% to 90\%. We repeat 25 trials using NTRC algorithm and compute the mean of RE. Fig. \ref{fig-synthetic-Noise} shows the RE versus varying SR. 
We can see that the RE on different noise distributions on two distinct tensor sizes is relatively low  and decreases as the SR increases, which indicates the effectiveness of the proposed estimator for recovering incomplete tensors with different sub-exponential noises.

\subsubsection{Effects of Multiple TR States} To verify the effectiveness of the proposed methods in multiple TR states, we investigate the RE of different methods versus varying TR ranks. We consider the synthetic tensor of size $\mathbb{R}^{30\times 30\times 30\times 30}$, TR rank $r_k = r, r\in \{ 2,3,4,5,6,7 \}, k \in [4]$. The zero-mean Gaussian noise is generated with noise levels  $c=0.01$ and  $c=0.05$, and the SR is set to $40\%$. We repeat 25 trials and compute the mean of RE.  Fig. \ref{fig-supercritical-results}  depicts the RE of different methods on different  noise levels versus varying TR rank $r$. The proposed NTRC achieves the lowest RE in all the cases, especially when the noise level is large. When $r\leq 5$, HaLRTC can well approximate the incomplete tensor. However, when $r\geq 6$, the synthetic tensor is supercritical ($r^2>d$), i.e., full-Tucker-rank,  the RE of HaLRTC grows dramatically as $r$ increases, which verifies the results revealed in Lemma \ref{lemma-multi-state}.

\subsubsection{Effects of the Given Rank for FaNTRC} We investigate the effects of the given rank for FaNTRC. We consider the synthetic tensor of size $d_k = d, d \in \{ 30, 50\}, k\in [4]$, TR rank $r_k = r, r\in \{ 3,4,5,6 \}, k \in [4]$. We produce  the noise tensor using zero-mean Gaussian distribution with noise level $c=0.01$ as in the above experiment, and sample the incomplete tensor using uniform distribution with $\text{SR}=40\%$. 
Fig. \ref{fig-synthetic-data-results}  presents the RE of FaNTRC on different  low-rank synthetic tensor versus the given rank $R_k$, where we set $R_k=R, k\in [4]$, and $R=\mcode{round}(\hat{c}r^2), \hat{c} \in \{ 0.7,0.8,\cdots, 1.5\}$. We can observe that when $R<r^2 $, the RE value is relatively large; and as long as $R\geq r^2 $,  the RE of FaNTRC tends to be stable with small RE value, which verifies the correctness of Theorem \ref{theorem-rank-equality}. 

\subsection{Application to Color Image Inpainting}
\label{sec-color-images-experiments}
\begin{figure*}[t]
	\centering
	\centering
	\subfigure[]{
		\centering
		\begin{minipage}[t]{1\textwidth}
			\centering
			\includegraphics[width=17cm]{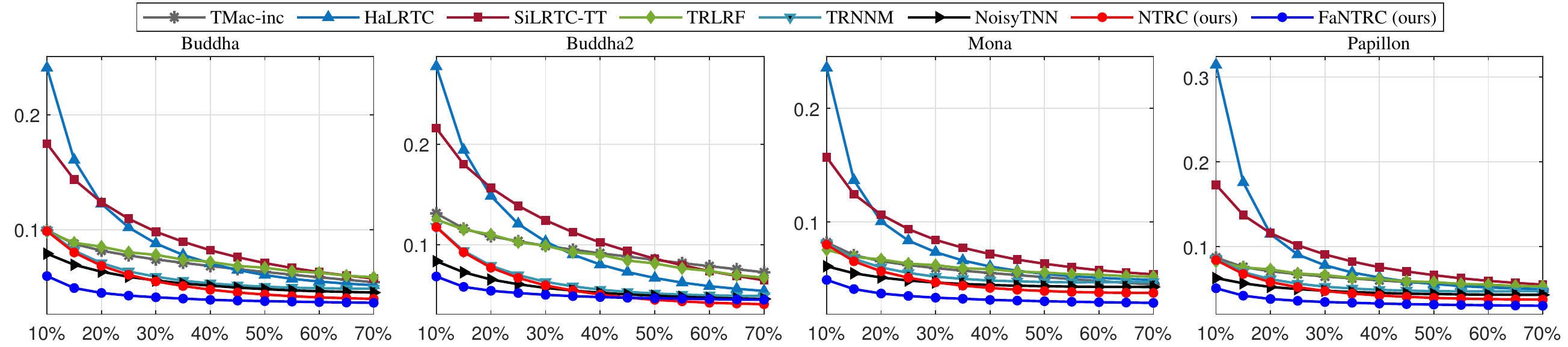}
		\end{minipage}
	}
	\subfigure[]{
		\centering
		\begin{minipage}[t]{1\textwidth}
			\centering
			\includegraphics[width=17cm]{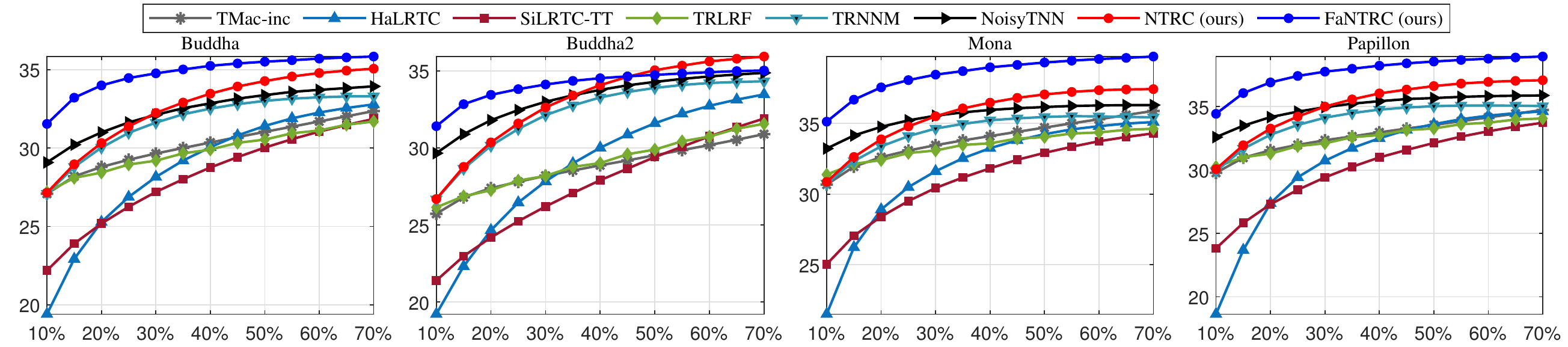}
		\end{minipage}
	}
	\caption{Comparison of performance (RE and PSNR) for all benchmark light field images, and for various SRs (from 10\% to 70\%) obtained utilizing the proposed methods and  state-of-the-art algorithms. (a) Comparison of the PSNR values; (b) comparison of the RE values. }
	\label{fig-lightField-sr}
\end{figure*}

In this part, we evaluate the proposed NTRC and FaNTRC against  state-of-the-art LRTC methods on 20 benchmark  images\footnote{http://decsai.ugr.es/cvg/dbimagenes/c512.php}  which are shown in Fig. \ref{fig-original-images}. Each image can be treated as a third-order tensor of ${512 \times 512 \times 3}$ entries. For each image, $40\%$ of pixels are sampled uniformly at random together with additive Gaussian noise with standard deviation  $\sigma=0.25\|\tT\|_F/\sqrt{D}$.  {  For TMac-inc, we set $\alpha_k = {1}/{3}$,  $r_n^{max}=60$ and maximum iteration  to $500$}. For HaLRTC, SiLRTC-TT and TRNNM, we employ their default parameter settings which lead to good performance. For TRLRF, we set the rank to $[5,5,5]$ as suggested in \cite{Yuan2019a}.  Note that for SiLRTC-TT, TRNNM and the proposed methods, we convert the images to fifth-order tensors of size ${16\times 16\times 32 \times 32 \times 3}$ by using  visual data tensorization (VDT) technique \cite{Bengua2017a,Yuan2019}. This simple and efficient trick has been shown to boost the tensor completion performance for TT-rank and TR-rank based methods \cite{Bengua2017a,Yuan2019}. For the proposed NTRC and FaNTRC, we simply set the weights $\alpha_{k} = 1/K, k\in [K]$,  and the given rank of FaNTRC is set equal to $[10,10,18,18,3]$ empirically. { For NoisyTNN, NTRC and FaNTRC, the penalty parameter $\lambda$ is selected as in Section \ref{sec-shapness-of-upperbound}.}

\begin{figure*}[t]
	\centering
	\includegraphics[width=18cm]{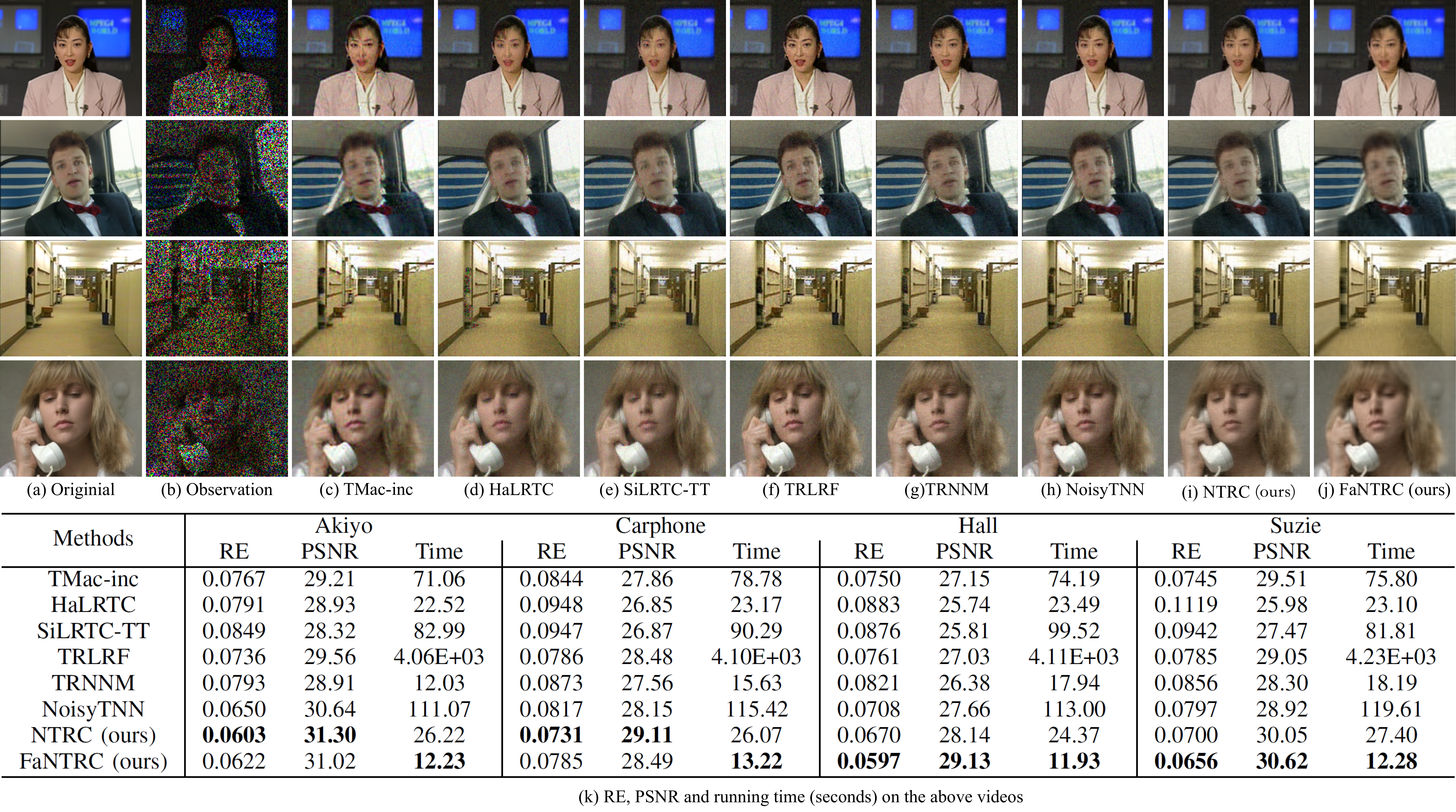}
	\caption{Performance comparison for video inpainting on four video sequences. From top to down are. (a) Original frame; (b) observed frame; (c)-(i) inpainting result obtained by HaLRTC, SiLRTC-TT, TRLRF, TRNNM, NoisyTNN, NTRC, FaNTRC, respectively; (j) comparison of RE, PSNR and running time on the four videos.}
	\label{fig-visual-video}
\end{figure*}

Fig. \ref{fig-20images-comparison} depicts the results of  applying the compared methods to the noisy and incomplete benchmark images. 
From Fig. \ref{fig-20images-comparison} (a) and  \ref{fig-20images-comparison} (b), it can be observed that the proposed NTRC and FaNTRC perform the first and second best in terms of RE and PSNR metrics in most cases, verifying their superiority in completing the real-world colorful images.  Compared with NoisyTNN, our methods can estimate more complex low-rank structure residing along each order of images. Thus, our methods achieve more promising recovery results. Though both SiLRTC-TT and TRNNM utilize the fifth-order VDT images, TRNNM performs a little better than SiLRTC-TT. 
{  TMac-inc is a Tucker rank-based noisy tensor completion method, and achieves better completion performance than HaLRTC in most cases. }
We also report the running time of each image in Fig. \ref{fig-20images-comparison} (c). The proposed FaNTRC is the fastest method, at least three times faster than the existing counterparts, which shows its scalability on such high spatial resolution images.  We also observed that our methods are much faster than TRLRF, since it requires time-consuming TR reconstruction operation at each iteration.  Conversely, our methods merely approximate the low-TR-rank tensor on its unfolding matrices and avoid frequently estimating TR core tensors explicitly.

\subsection{Application to Light Field Image Inpainting}
Different from the conventional image that only records light intensity,  light field image contains both light intensity and light direction in space, giving a high-dimensional representation of visual data for various computer vision applications. In this experiment,  we choose four light field images from the HCI light field image data  set\footnote{http://lightfieldgroup.iwr.uni-heidelberg.de}. Each light field image can be represented as a fifth order tensor of size $768\times 768 \times 3 \times 9 \times 9$, where $768\times 768$ and $9\times 9$ denote spatial and angular resolution, respectively.  For the sake of computational efficiency, we downsample the spatial dimension to $192\times 192$, and convert each image to the grayscale. Thus, we can form a fourth-order tensor of size $192\times 192 \times 9 \times 9$. 
The input incomplete tensors are generated by uniformly selecting pixels at random with SRs from $10\%$ to $70\%$ with increment $5\%$, and the additive noises are produced by i.i.d. Gaussian distribution with standard deviation $\sigma = 0.05 \| \tT \|_F/\sqrt{D}$. { For TMac-inc, we let $\alpha_k = {1}/{4}$ and $r_n^{max}=40$ empirically.} For HaLRTC, $\lambda$ is set to equal $[2,2,10,10]$, since it achieves the best performance in most cases.  For TRLRF, we let $r_1=r_2=r$ and $r_3=r_4=5$, and select the optimal $r$ in the set $\{11,12,\cdots, 17\}$ to achieve the best performance.  For FaNTRC, we simply let $R_1=R_2=110$ and $R_3=R_4=5$ in all cases of these experiments.  
To use NoisyTNN, we concatenate the angular dimensions by reshaping the light field images into the third-order tensors of size $192\times 192\times 81$. The remaining parameters are  set to the same as the above section.

 \begin{table}[t]
 	\centering
 	\caption{Summary of Average Running Time (seconds) of All SRs for Four Light Field Images (\textbf{Best}).}
 	\label{table-light-field-time}
 	\begin{tabular}{l|l|l|l|l|l} \hline \hline
 		Methods   & Buddha & Buddha2 & Mona   & Papillon & Avg.   \\  \hline 
 		TMac-inc    & {58.43}  & {58.92}   & {58.55}  & {58.81}    & {58.68}  \\
 		HaLRTC    & \textbf{15.33}  & \textbf{16.68}   & \textbf{14.62}  & \textbf{16.38}    & \textbf{15.75}  \\
 		SiLRTC-TT & 56.37  & 60.73   & 54.96  & 54.48    & 56.64  \\
 		TRLRF     & 817.30  & 893.25  & 774.59 & 868.93   & 838.52 \\
 		TRNNM     & 242.83 & 247.72  & 236.56 & 236.20    & 240.83 \\
 		NoisyTNN  & 92.69  & 92.20    & 90.53  & 93.69    & 92.28  \\
 		NTRC (ours)     & 196.36 & 193.93  & 198.38 & 191.75   & 195.11 \\
 		FaNTRC (ours)   &  {28.97}  &  {29.15}   &  {28.25}  &  {28.62}    &  {28.75}  \\ \hline \hline
 	\end{tabular}
 \end{table} 
 
Fig. \ref{fig-lightField-sr} (a) and \ref{fig-lightField-sr} (b) show the recovery performance with various SRs in terms of RE and PSNR metrics, respectively. The proposed FaNTRC and NTRC achieve the first and second best recovery performance  with incomplete and noisy observations compared with state-of-the-art tensor completion methods on four light field images. Both TRLRF and NoisyTNN behave well in some cases of low SR,  however their performance tends to improve very slowly as SR increases. TRNNM also achieves remarkable performance in this experiment, it performs only a little lower than NoisyTNN, and consistently outperforms SiLRTC-TT and HaLRTC, demonstrating the  advantages of low-TR-rank model in representing the high-order light field tensor data. We also report the average running time of all SRs for four light field images in Tabel \ref{table-light-field-time}. Despite the fact that FaNTRC only achieves the second best performance in terms of running time, it is still much faster than most of the other methods, especially TR-based methods, e.g., TRLRF and TRNNM. 

\subsection{Application to Video Inpainting}
Finally, we evaluate our methods on publicly available YUV color video sequences\footnote{http://trace.eas.asu.edu/yuv/}, which have been widely used for tensor completion applications. In this part, we test our methods and the other compared methods on four of these sequences, namely, Akiyo, Carphone, Hall, and Suzie. For each video sequence, we adopt its first $50$ frames, and each frame contains  $144\times 176 \times 3$ pixels. Therefore, the evaluated video sequences can be treated as the fourth order tensors of size ${144\times 176 \times 3 \times 50}$. We randomly select $40\%$ of pixels and add i.i.d. Gaussian noise with standard deviation $\sigma=0.1 \| \tT\|_F / \sqrt{D}$. { For TMac-inc, we simply set  $\alpha_k = {1}/{4}$ and $r_n^{max}=40$.} For HaLRTC, we empirically let $\bm{\lambda}=[1,1,10^{-3},10^2]$. For TRLRF and FaNTRC, we set the rank to $[20,20,20,20]$ and $[40,40,3,20]$, respectively. To use NoisyTNN, we reshape the video sequences to $144\times 176\times 150$ tensors.
All the other parameters are consistent with the above section.

\begin{figure}[t]
	\centering
	\subfigure[]{
		\begin{minipage}[t]{0.22\textwidth}
			\centering
			\includegraphics[width=4.4cm]{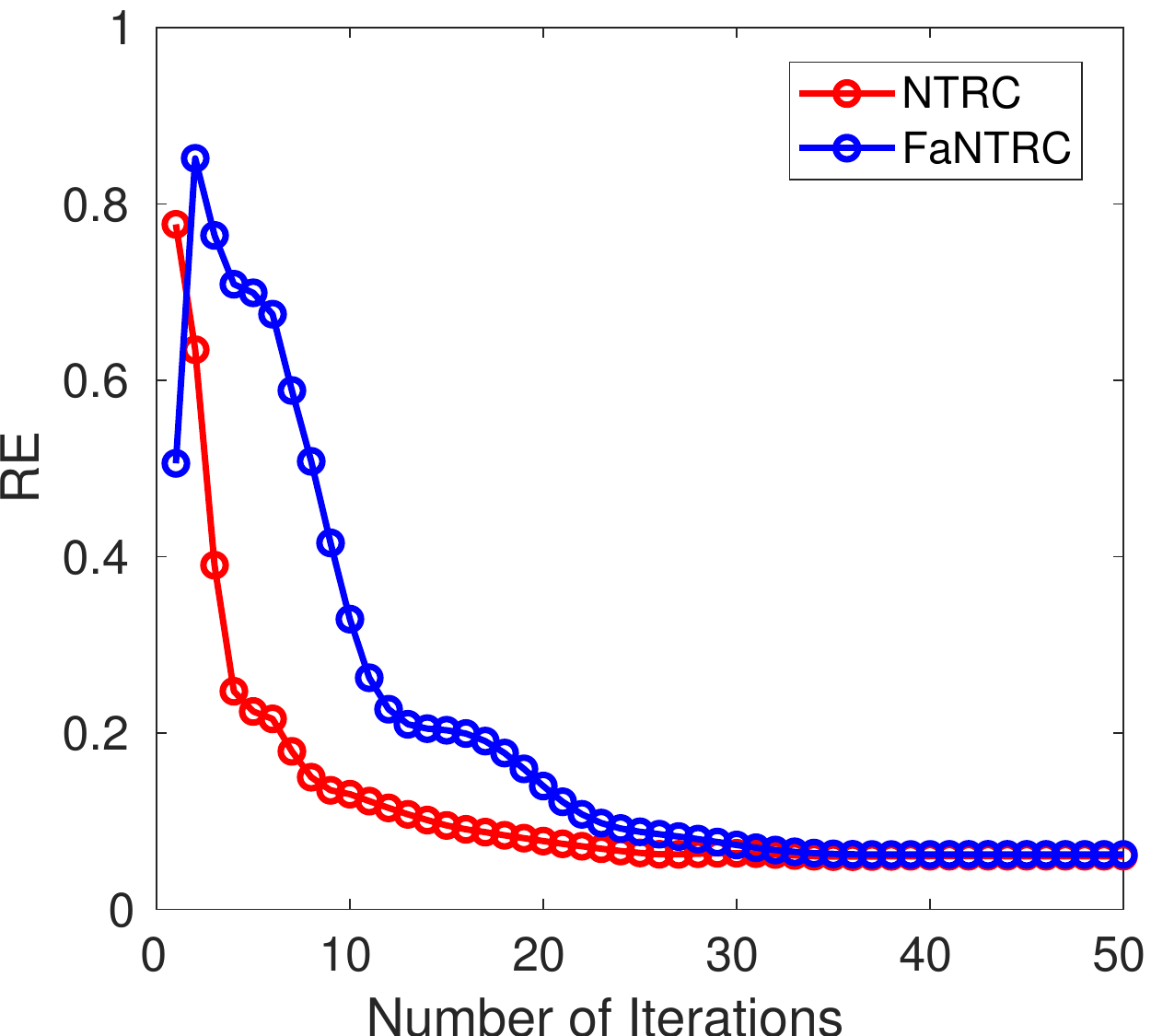}
		\end{minipage}
	}
	\subfigure[]{
		\begin{minipage}[t]{0.22\textwidth}
			\centering
			\includegraphics[width=4.4cm]{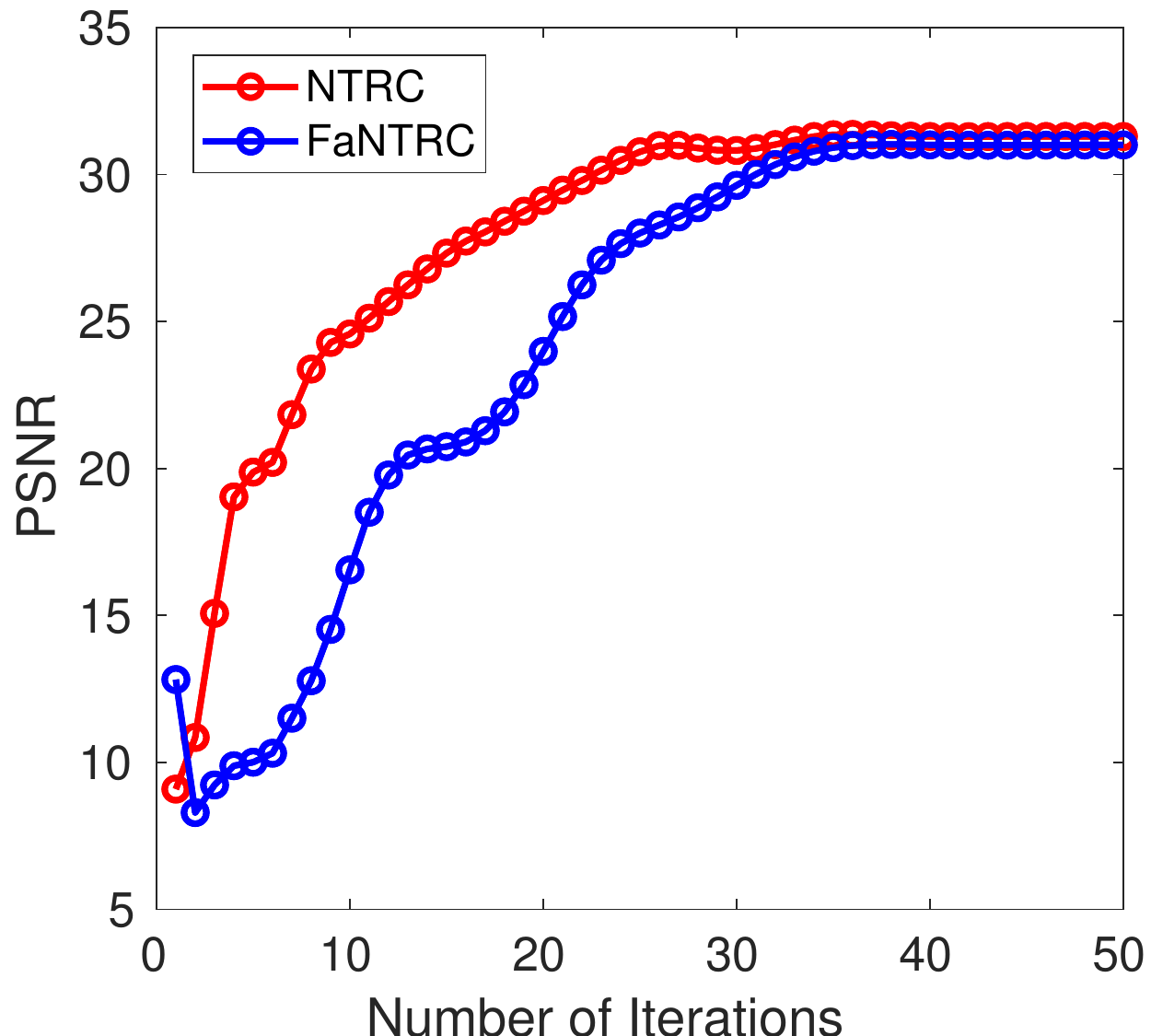}
		\end{minipage}
	}
	\caption{RE and PSNR history during iterations on Akiyo video sequence.}
	\label{fig-psnr-history}
\end{figure}

We present the inpainting result of the 21st frame in Fig. \ref{fig-visual-video} of four video sequences, respectively. It can be observed that both NTRC and FaNTRC obtain more favorable performance in terms of RE and PSNR metrics  than the other compared methods. More specifically, the proposed method can reconstruct more details of video frames.  From the recovered frames of Hall and Suzie,  it can be observed that the proposed methods can present more clean background than the other LRTC methods. 
Moreover, for Akiyo, Carphone, and Suzie, the additive noise on the clothes and face can be efficiently removed by the proposed methods. Analog to the experimental results in the above sections, TR based methods consistently outperform the Tucker and TT  methods. FaNTRC and TRNNM are the first and second fastest methods, respectively. Note that TRLRF is still the slowest method, where the reason has been analyzed above. 
We also present the RE and PSNR history over iterations of our methods on Akiyo video in Fig. \ref{fig-psnr-history}.  It can be observed that both NTRC and FaNTRC in terms of two metrics tend to be stable within merely 50 iterations. 

\section{Conclusions}
We proposed a novel noisy tensor completion model to predict the missing entries with noisy observations based on low-rank TR model. In order to illustrate the statistical performance of the proposed model, we theoretically analyze a non-asymptotic upper bound  of the estimation error. Moreover, we propose two algorithms  to solve the optimization problem with convergence guarantee, namely NTRC and its fast version FaNTRC. The FaNTRC can significantly reduce the high-computational cost of SVD on the large-scale circular unfolding matrix by equivalently minimizing TRNN on a much smaller one in a heterogeneous tensor decomposition framework, and thus accelerates the algorithm. Extensive experiments on both synthetic and visual data evidence the merits of low-TR-rank model for modeling a tensor, the ability of noisy completion setting in recovering incomplete data with noise corruption, and the computational efficiency of heterogeneous tensor decomposition strategy for TRNN minimization.

\section*{Acknowledgment}
The authors would like to thank  Dr. Andong Wang for his helpful discussions. 



\bibliographystyle{IEEEbib}
\bibliography{library,refs_trnnm}

\onecolumn
\pagenumbering{arabic}
\cleardoublepage
    \begin{titlepage}
	\centering
	\vspace*{5cm}	
	\LARGE\bfseries
Supplemental Material for  \\
\vspace*{0.5cm}
	\LARGE \bfseries
	Noisy Tensor Completion via Low-Rank Tensor Ring
	\vspace*{2cm}
	
	{\large Yuning Qiu, Guoxu Zhou, Qibin Zhao, and Shengli Xie}
	
	\vspace*{\fill}
\end{titlepage}

  \begin{appendices}  
	
	\section{More Experimental Results}
	\subsection{Experiments on Synthetic Data}
	\subsubsection{Synthetic experiments on different tensor sizes and noise levels}
	We investigate  RE and running time of different methods on fourth-order tensors with fixing SR $p=20\%$ and varying tensor size, TR rank, and noise level. Due to the computational efficiency, all the experiments are conducted twice, and their mean values of RE and running time are reported in Table  \ref{table-re-running-time}. The given rank of FaNTRC is set to $R_k=R,  k\in [K]$ and $R=\mcode{round}(1.2r^2)$. The remaining parameters are followed in Section \ref{sec-synthetic}. {  It can be observed that both NTRC and FaNTRC obtain lower RE in all experimental settings. 	HaLRTC provides the lowest running time, while the proposed methods are comparable. However, the RE of HaLRTC is significantly higher than the other methods in most cases. 
		In addition, FaNTRC is significantly faster than the TR-based methods, especially when TR rank is small, which is consistent with the analysis in Section \ref{sec-computational-complexity}. }However, when $r^2$ increases to close to $d$, the computational efficiency of FaNTRC degrades to that of TRNNM and NTRC. 
	\begin{table*}[h]
		\centering
		\caption{ Comparison of RE and Running Time (seconds)  for Synthetic Noisy Tensor Completion (\textbf{Best}).}
		\label{table-re-running-time}
		\begin{tabular}{ccccccccccccc} \hline \hline 
			\multirow{2}{*}{tensor size}                                 & \multirow{2}{*}{c}    & \multirow{2}{*}{r} & \multicolumn{2}{c}{HaLRTC}  & \multicolumn{2}{c}{TRNNM} & \multicolumn{2}{c}{SiLRTC-TT} & \multicolumn{2}{c}{NTRC (ours)}              & \multicolumn{2}{c}{FaNTRC (ours)}            \\
			&                       &                    & RE          & Time          & RE          & Time        & RE        & Time              & RE                & Time              & RE                & Time              \\ \hline 
			\multirow{6}{*}{$40\times 40\times 40\times 40$}   & \multirow{3}{*}{0.01} & 2    &9.09E-3  &2.00E1    & 5.08E-3    & 7.69E2    & 5.36E-3  & 4.14E2          & \textbf{2.01E-3} & 2.99E2          & 2.03E-3          & \textbf{1.70E1} \\
			&                       & 4                  &2.19E-2  & \textbf{1.73E1}     & 5.46E-3    & 7.71E2    & 7.71E-3  & 4.23E2          & \textbf{2.44E-3} & 5.65E2          & 2.48E-3          & {7.20E1} \\
			&                       & 6                  & 2.30E-2  &\textbf{1.39E1}      & 5.73E-3    & 7.86E2    & 8.47E-3  & {4.35E2} & \textbf{3.25E-3} & 5.22E2          & 3.42E-3          & 8.20E2          \\
			& \multirow{3}{*}{0.05} & 2         & 4.32E-2 & \textbf{1.54E1}    & 2.54E-2    & 8.75E2    & 6.17E-2  & 4.47E2          & \textbf{2.00E-2} & 4.14E2          & 2.28E-3          & {3.20E1} \\
			&                       & 4                  & 4.99E-2  & \textbf{1.23E1}    & 2.63E-2    & 9.00E2    & 2.70E-2  & 4.33E2          & 2.03E-2          & 4.39E2          & \textbf{1.14E-2} & {6.31E1} \\
			&                       & 6                  &4.16E-2   & \textbf{1.12E1}     & 2.66E-2    & 8.89E2    & 2.30E-2  & 4.42E2          & 2.00E-2          & {4.20E2} & \textbf{1.98E-2} & 8.77E2          \\ \hline 
			\multirow{6}{*}{$60\times 60\times 60 \times 60$}  & \multirow{3}{*}{0.01} & 3     &1.07E-2 & \textbf{7.79E1}    & 5.09E-3    & 7.47E3    & 8.26E-3  & 3.40E3          & 1.31E-3          & 7.44E3          & \textbf{1.25E-3} & {8.31E1} \\
			&                       & 5                  & 1.55E-2   &\textbf{6.98E1}      & 5.30E-3    & 7.77E3    & 2.89E-2  & 3.42E3          & 2.72E-3          & 8.13E3          & \textbf{2.62E-3} & {4.52E2} \\
			&                       & 7                  & 1.56E-2   &\textbf{6.51E1}      & 5.38E-3    & 6.67E3    & 1.88E-2  & {3.32E3} & \textbf{3.46E-3} & 8.34E3          & 4.59E-3          & 1.07E4          \\
			& \multirow{3}{*}{0.05} & 3         & 4.07E-2  &\textbf{6.25E1}      & 2.52E-2    & 7.18E3    & 2.68E-2  & 3.39E2          & 1.07E-2          & 8.01E3          & \textbf{3.03E-3} & {9.50E1} \\
			&                       & 5                  & 4.11E-2  &\textbf{5.46E1}      & 2.56E-2    & 7.13E3    & 3.83E-2  & 3.55E3          & 1.62E-2          & 6.57E3          & \textbf{6.63E-3} & {3.90E2} \\
			&                       & 7                  & 3.53E-2  &\textbf{4.64E1}      & 2.56E-2    & 7.21E3    & 2.99E-2  & {3.43E3} & \textbf{1.40E-2} & 6.66E3          & \textbf{1.40E-2} & 1.06E4          \\ \hline 
			\multirow{6}{*}{$80\times 80 \times 80 \times 80$} & \multirow{3}{*}{0.01} & 4        &1.12E-2  &\textbf{2.34E2}      & 5.11E-3    & 2.81E4    & 3.92E-2  & 1.58E4          & 2.86E-3          & 1.70E4          & \textbf{2.93E-4} & {4.72E2} \\
			&                       & 6                  &1.23E-2   &\textbf{2.25E2}       & 5.24E-3    & 3.03E4    & 2.11E-2  & 1.55E4          & 1.67E-3          & 2.80E4          & \textbf{8.97E-4} & {3.07E3} \\
			&                       & 8                  & 1.25E-2     & \textbf{1.80E2}& 5.25E-3    & 2.99E4    & 1.43E-2  & {1.55E4} & 2.21E-3          & 3.06E4          & \textbf{2.18E-3} & 4.40E4          \\
			& \multirow{3}{*}{0.05} & 4        &3.81E-2      &\textbf{1.61E2}       & 2.52E-2    & 3.04E4    & 4.78E-2  & 1.51E5          & 1.97E-2          & 1.83E4          & \textbf{2.04E-4} & {3.33E2} \\
			&                       & 6                 &3.61E-2& \textbf{1.70E2}     & 2.52E-2    & 2.75E4    & 3.41E-2  & 1.54E4          & 2.00E-2          & 1.89E4          & \textbf{1.83E-2} & {2.17E3} \\
			&                       & 8                & 3.22E-2  &\textbf{1.34E2}    & 2.53E-2    & 2.99E4    & 2.90E-2  & {1.44E4} & \textbf{1.87E-2} & 2.05E4          & 1.89E-2          & 2.82E4         \\ \hline \hline
		\end{tabular}
	\end{table*}
	{ 
		\subsubsection{Synthetic experiments on different tensor orders}
		We also verify the effectiveness of the proposed NTRC and FaNTRC in recovering the high-order incomplete tensor with noise corruptions. We generate the low-rank tensor $\tT^*$ with different orders and sizes, namely, $20\times 20\times 20 $ (3D), $20\times 20\times 20 \times 20 $ (4D), $10\times 10\times 10 \times 10 \times 10$ (5D), $10\times 10\times 10 \times 10 \times 10 \times 10$ (6D) and $10\times 10\times 10 \times 10 \times 10 \times 10 \times 10$ (7D). The TR rank of synthetic tensors are given as $r_k=3, k\in [K]$ for 3D, 4D, 5D and 6D tensor, and as $r_k=2,k \in [K]$ for 7D tensors. The observed entries is sampled uniformly with SR $30\%$ and the additive noise tensor is produced by distribution $\mathcal{N}(0,\sigma^2)$, where $\sigma=0.01\| \tT^*\|_F^*/\sqrt{D}$.  The remaining parameters are set as the above paragraph. From Table \ref{table-different-order} we can see that the proposed methods can recover the noisy and incomplete tensor with the lowest RE in all the cases. Moreover, the proposed methods perform better than the compared methods especially when tensor order is relatively high, which indicates the efficiency of the proposed methods in dealing with high-order and noisy tensor. 
		\begin{table*}[h]
			\centering
			\caption{ The RE Comparison of Different Tensor Completion Methods to 3D, 4D, 5D, 6D and 7D Synthetic Tensors (\textbf{Best}). }
			\label{table-different-order}
			\begin{tabular}{c|c|c|c|c|c|c|c|c}  \hline \hline
				Tensor Order and Tensor Size & TMac-inc & HaLRTC & SiLRTC-TT & TRLRF   & TRNNM   & NoisyTNN & NTRC (ours)    & FaNTRC   (ours)    \\ \hline 
				3D ($20\times 20\times 20$)          & 0.0485   & 0.0531 & 0.0515   & 0.0228  & 0.0401  & 0.0461   & 0.0420   & \textbf{0.0134}  \\
				4D ($20\times 20\times 20 \times 20$)          & 0.0180    & 0.0284 & 0.0102   & 0.0057 & 0.0067 & -        & 0.0039 &\textbf{0.0021} \\
				5D ($10\times 10\times 10 \times 10 \times 10$)            & 0.0207   & 0.0552 & 0.0082   & 0.0058 & 0.0068  & -        &\textbf{0.0051}  & 0.0051 \\
				6D  ($10\times 10\times 10 \times 10 \times 10 \times 10$)          & 0.0174   & 0.1342 & 0.0065   & 0.0056 & 0.0059 & -        & \textbf{0.0016 }& 0.0025 \\
				7D  ($10\times 10\times 10 \times 10 \times 10 \times 10 \times 10$)         & {0.0132}   & 0.1308 &  0.0062    & 0.0055 & 0.0055 & -        & 0.0014 & \textbf{0.0012} \\ \hline \hline 
			\end{tabular} 
		\end{table*}
	}

	
	{ 
		\subsection{Application to Real-world Noisy Images Inpainting} In this part, we compared the proposed methods and state-of-the-art LRTC methods on real-world noisy tensor data. We chose the newly proposed real-world noisy images benchmark \cites{Suppxu2018realworld}. The authors record the real-world images of different natural scenes. To obtain the corresponding "ground truth" images, the authors capture the same scenes for many times, and average these images as the "ground truth" image. In our experiment, we use five of the cropped images produced by the authors\footnote{https://github.com/csjunxu/PolyU-Real-World-Noisy-Images-Dataset}. These images cane be viewed as a third-order tensor of size $512\times 512\times 3$. We generate the incomplete tensor by uniformly sample the pixels with $\text{SR}=40\%$. Similar with the experimental setting in Section \ref{sec-color-images-experiments}, we transpose the incomplete tensor into fifth-order tensor of size $16\times 16\times 32\times 32\times 3$ using VDT technique for SiLRTC-TT, TRNNM and the proposed methods.  
		The remaining parameters are set to the same as Section  \ref{sec-color-images-experiments}. Fig \ref{fig-real-world-noisy} depicts the visual quality as well as the PSNR and RE of different images. We can see that the proposed methods achieve the best completion performance in terms of PSNR and RE. Additionally, from the recovered images, we can observe that the proposed methods can effectively eliminate most of the noise  while recovering the missing pixels. 
		\begin{figure}[t]
			\centering
			\includegraphics[width=\linewidth]{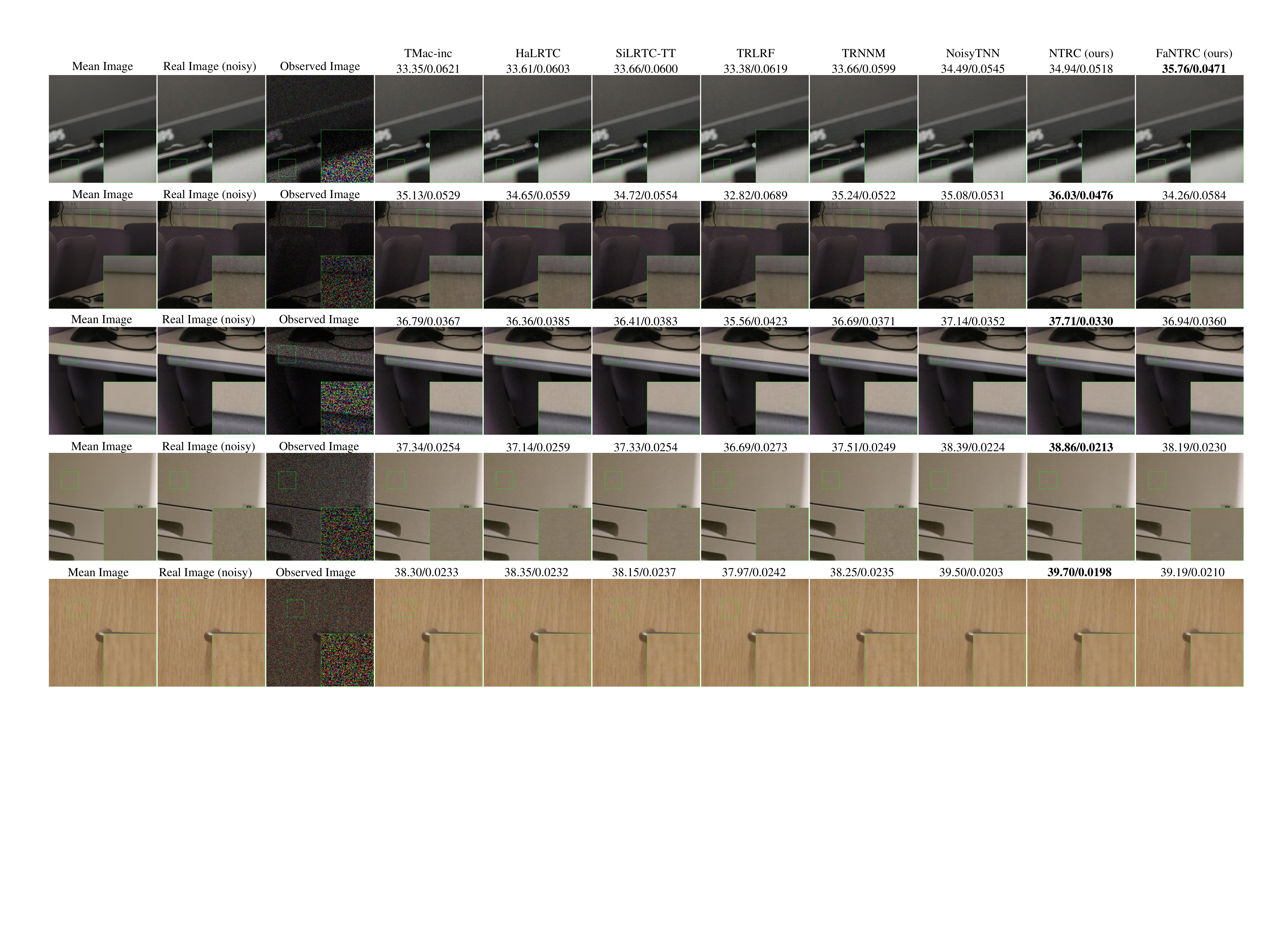}
			\caption{ Comparison of PSNR/RE results of the real-world noisy image by different methods (\textbf{Best}).}
			\label{fig-real-world-noisy}
		\end{figure}
	}
	
	{  
		\section{Proof of Lemma \ref{lemma-dual-norm}}
		\label{sec-proof-dual-norm}
		According to the definition of dual norm, we give the following maximization problem
		\begin{equation} 
			\trop{\tT} := \sup_{\tensor{W} } \left\langle \tT, \tensor{W} \right\rangle, \text{s.t. } \trnn{\tensor{W}} \leq 1. 
		\end{equation}
		Since the maximization problem satisfies the Slater's condition, the strong duality holds. We only need to prove that its dual problem satisfies 
		\begin{equation} \label{eq-maxization-problem}
			\inf_{\sum_{k=1}^{K} \tensor{Y}^k={\tT}} \max_{ k} \alpha_k^{-1} \| \mat{Y}^{k}_{(k,s)} \|. 
		\end{equation}
		According to the Fenchel's duality theorem, we have 
		\begin{equation} \label{eq-dual-fenchel}
			\sup_{\tensor{W} } \left\langle \tT, \tensor{W} \right\rangle - \kappa( \trnn{\tensor{W}} \leq 1 ) = \inf_{ \{\tensor{Y}^k \}_{k=1}^{K} } \left(\kappa( \sum_{k=1}^{K} \tensor{Y}^k={\tT} ) + \max_{ k} \alpha_k^{-1} \| \mat{Y}^k_{(k,s)} \| \right), 
		\end{equation}
		where $\kappa(\cdot)$ denotes the indicator function:
		\begin{equation}
			\kappa(x) = \left\{ \begin{split}
				&0, \quad \quad x \text{ is true}, \\
				&+\infty, \text{ otherwise}.
			\end{split}\right.
		\end{equation}
		The right-hand side of (\ref{eq-dual-fenchel})  meets (\ref{eq-maxization-problem}), thus the proof of Lemma \ref{lemma-dual-norm} is completed.
	}

	\section{Proof of Theorem \ref{theorem-main-result}} \label{appendices-a}
	In this section, we give the proof for Theorem \ref{theorem-main-result} by first introducing the estimation error bound of observation entries in Lemma \ref{lemma-estimation-error}, then presenting the overall estimation error with specific setting of regularization parameter $\lambda$ in Lemma \ref{prop-fro-bound}, finally giving the upper bound of tensor ring spectral norm of two random variables in Lemmas \ref{lemma-bound-tr-op} and \ref{lemma-bound-tr-expectation}.
	\begin{lemma} \label{lemma-estimation-error}
		Let $\lambda \geq  2\sigma \tropL{\Poo{ \bm{\xi}} }$, we have:
		\begin{equation}
			\frac{1}{2}|| \Po{\errorT} ||_F^2 \leq   \frac{3}{2}\sqrt{2} \lambda \fro{\errorT} \sum_{k=1}^{K} \alpha_k\sqrt{r_k r_{k+s}},
		\end{equation}
		where $\Poo{\bm{\xi}} = \sum_{n=1}^{N} \xi_n \tensor{X}_n$, and $ \errorT$ denotes the residual error $\errorT = \hatT - \starT $.
	\end{lemma}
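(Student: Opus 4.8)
The plan is to carry out the classical ``basic inequality'' argument for penalized least squares, then split the resulting right-hand side into a stochastic term (controlled by duality together with the hypothesis on $\lambda$) and a penalty-gap term (controlled by decomposability of TRNN and a rank bound).

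First I would use that $\hatT$ minimizes the objective in (\ref{eq-tr-noise-model}) while $\starT$ is feasible for it --- indeed $\|\starT\|_\infty\le\delta$ by Assumption \ref{assumption-bound-infty} --- so
\begin{equation*}
\tfrac12\|\mat{y}-\Po{\hatT}\|_2^2+\lambda\trnn{\hatT}\le\tfrac12\|\mat{y}-\Po{\starT}\|_2^2+\lambda\trnn{\starT}.
\end{equation*}
Substituting $\mat{y}=\Po{\starT}+\sigma\bm{\xi}$, writing $\mat{y}-\Po{\hatT}=\sigma\bm{\xi}-\Po{\errorT}$, expanding the square and cancelling the common $\tfrac{\sigma^2}{2}\|\bm{\xi}\|_2^2$ leaves
\begin{equation*}
\tfrac12\|\Po{\errorT}\|_2^2\le\sigma\langle\errorT,\Poo{\bm{\xi}}\rangle+\lambda\big(\trnn{\starT}-\trnn{\hatT}\big),
\end{equation*}
where I used $\langle\Po{\errorT},\bm{\xi}\rangle=\langle\errorT,\sum_n\xi_n\tensor{X}_n\rangle=\langle\errorT,\Poo{\bm{\xi}}\rangle$. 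For the stochastic term, the duality pairing of Lemma \ref{lemma-dual-norm} gives $\langle\errorT,\Poo{\bm{\xi}}\rangle\le\trnn{\errorT}\,\tropL{\Poo{\bm{\xi}}}$, so the hypothesis $\lambda\ge 2\sigma\tropL{\Poo{\bm{\xi}}}$ yields $\sigma\langle\errorT,\Poo{\bm{\xi}}\rangle\le\tfrac{\lambda}{2}\trnn{\errorT}$.

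Next I would treat the penalty gap via decomposability. For each circular mode-$(k,s)$ unfolding let $\Pi_k$ be the projection onto the span of the column and row spaces of $\starTm$, so that $\|\starTm+\Pi_k^\perp(\errorM)\|_*=\|\starTm\|_*+\|\Pi_k^\perp(\errorM)\|_*$; combined with the triangle inequality this gives $\|\hatTm\|_*=\|\starTm+\errorM\|_*\ge\|\starTm+\Pi_k^\perp(\errorM)\|_*-\|\Pi_k(\errorM)\|_*$. Summing over $k$ against the weights $\alpha_k$ gives $\trnn{\starT}-\trnn{\hatT}\le\sum_k\alpha_k\|\Pi_k(\errorM)\|_*-\sum_k\alpha_k\|\Pi_k^\perp(\errorM)\|_*$, and combining this with $\trnn{\errorT}\le\sum_k\alpha_k\big(\|\Pi_k(\errorM)\|_*+\|\Pi_k^\perp(\errorM)\|_*\big)$ the two previous displays collapse to $\tfrac12\|\Po{\errorT}\|_2^2\le\tfrac{3\lambda}{2}\sum_k\alpha_k\|\Pi_k(\errorM)\|_*$ (the off-support contributions cancel with a favorable sign). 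Finally, since $\rank{\starTm}\le r_k r_{k+s}$ by Remark \ref{remark-1}, the projection $\Pi_k$ has rank at most $2 r_k r_{k+s}$, so by Cauchy--Schwarz on the singular values and the fact that circular unfolding preserves the Frobenius norm, $\|\Pi_k(\errorM)\|_*\le\sqrt{2 r_k r_{k+s}}\,\|\errorM\|_F=\sqrt{2 r_k r_{k+s}}\,\fro{\errorT}$; substituting and noting $\|\Po{\errorT}\|_F=\|\Po{\errorT}\|_2$ produces the claimed bound with constant $\tfrac32\sqrt2$.

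The routine parts are the basic inequality and the duality bound (once Lemma \ref{lemma-dual-norm} is in hand). The main obstacle I anticipate is the decomposability step: unlike the single-matrix nuclear norm, TRNN is a weighted sum over $K$ different unfoldings of the \emph{same} tensor, so one must verify that the subspace projections $\Pi_k$ may legitimately be chosen mode by mode (they act on incompatible unfoldings and need not be compatible with one another), that the decomposability identity and the ensuing cone-type inequality survive summation against $\alpha_k$, and that $2 r_k r_{k+s}$ is indeed the correct rank of $\Pi_k$ to feed into the passage from the nuclear to the Frobenius norm. This is precisely where the tensor algebra genuinely departs from the matrix case.
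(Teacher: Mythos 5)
Your proposal is correct and follows essentially the same route as the paper's proof: the basic inequality from optimality of $\hatT$, the duality bound $\sigma\langle\errorT,\Poo{\bm{\xi}}\rangle\le\tfrac{\lambda}{2}\trnn{\errorT}$ via Lemma \ref{lemma-dual-norm} and the hypothesis on $\lambda$, the mode-by-mode decomposability of TRNN through the support/complement projections of each $\starTm$, and the final passage $\|\Pi_k(\errorM)\|_*\le\sqrt{2r_kr_{k+s}}\,\fro{\errorT}$ using that each projection has rank at most $2r_kr_{k+s}$. The decomposability step you flag as a potential obstacle is handled in the paper exactly as you propose --- each unfolding is treated independently and the identities simply sum against the weights $\alpha_k$ --- so no additional compatibility between the $\Pi_k$ is needed.
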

	Lemma \ref{lemma-estimation-error} denotes the estimation error of observations can be upper bound by weighted sum of square root of TR rank. Next, we present a general upper bound on the estimation error based on (\ref{eq-tr-noise-model}). 
	\begin{lemma} \label{prop-fro-bound}
		Suppose $\lambda \geq 2\sigma \tropL{ \Poo{\bm{\xi}} }$, then we have probability at least $1-{2}/{\tilde{d}^*}$, such that the estimation error satisfies
		\begin{equation}
			\begin{split}
				\fros{\errorT} \leq \max \left\{  c_2\delta^2 D\sqrt{ \frac{64 \log \tilde{d}_{k^*}}{\log(6/5) N}},  c_3\left( \frac{D}{N}  \sum_{k=1}^{K} \alpha_{k}\sqrt{r_k r_{k+s}} \right)^2 \left(\lambda^2 + \delta^2 \expectations{ \tropL{\Poo{\varepsilon}} }  \right) \right\}.
			\end{split}	
		\end{equation}
	\end{lemma}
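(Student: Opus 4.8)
The plan is to upgrade the control of the \emph{empirical} error $\fros{\Po{\errorT}}$ furnished by Lemma~\ref{lemma-estimation-error} into control of the \emph{true} error $\fros{\errorT}$, by establishing a restricted strong convexity (RSC) inequality for $\mathfrak{X}$ over the class of tensors in which $\errorT$ is guaranteed to live, and then solving a scalar quadratic inequality in $\fro{\errorT}$. \textbf{Step 1 (cone condition).} The computation inside the proof of Lemma~\ref{lemma-estimation-error} — optimality of $\hatT$ for (\ref{eq-tr-noise-model}), the expansion $\mat y=\Po{\starT}+\sigma\bm\xi$, the hypothesis $\sigma\tropL{\Poo{\bm\xi}}\le\lambda/2$, and the decomposability of TRNN — yields, besides the stated inequality, a cone constraint of the form $\trnn{\errorT}\le c_4\fro{\errorT}\sum_{k=1}^{K}\alpha_k\sqrt{r_kr_{k+s}}=:c_4\,\beta_r\fro{\errorT}$. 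I would record this first, since it is what converts any $\trnn{\cdot}$-dependence in the empirical-process bounds below into a dependence on $\fro{\errorT}$; I would also use $\|\errorT\|_\infty\le2\delta$, which follows from Assumption~\ref{assumption-bound-infty} applied to both $\starT$ and $\hatT$ (the latter being constrained in (\ref{eq-tr-noise-model})).

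\textbf{Step 2 (RSC via peeling, symmetrization, contraction, Talagrand).} This is the technical heart. For tensors $\tensor Z$ with $\|\tensor Z\|_\infty\le2\delta$ and $\trnn{\tensor Z}\le c_4\beta_r\fro{\tensor Z}$, the goal is that, with probability at least $1-2/\tilde d_{k^*}$, every such $\tensor Z$ with $\fros{\tensor Z}$ above the ``term~A'' threshold satisfies
\[
\tfrac1N\fros{\Po{\tensor Z}}\ \ge\ \tfrac1{2D}\fros{\tensor Z}\ -\ c_5\,\tfrac{\delta\beta_r}{N}\,\expectation{\tropL{\Poo{\bm\varepsilon}}}\fro{\tensor Z}.
\]
The route I have in mind: slice the class geometrically by the level of $\fros{\tensor Z}$ (a ratio near $6/5$ is what produces the $\log(6/5)$ and $64$ in the statement); on each slice symmetrize the centered process $\tfrac1N\fros{\Po{\tensor Z}}-\tfrac1D\fros{\tensor Z}$, apply the contraction principle to the $4\delta$-Lipschitz map $t\mapsto t^2$ on $[-2\delta,2\delta]$ (each $\langle\tensor Z,\tensor X_n\rangle$ is an entry of $\tensor Z$, hence bounded by $2\delta$ — this is exactly where $\|\errorT\|_\infty\le2\delta$ is needed), and bound the remaining Rademacher average by duality,
\[
\tfrac1N\,\expectation{\textstyle\sup_{\tensor Z}\langle\Poo{\bm\varepsilon},\tensor Z\rangle}\ \le\ \tfrac1N\,\expectation{\tropL{\Poo{\bm\varepsilon}}}\sup_{\tensor Z}\trnn{\tensor Z}.
\]
A bounded-differences / Talagrand concentration inequality then controls the deviation of each slice-supremum around its mean, and a union bound over the logarithmically many slices produces both the $\log\tilde d_{k^*}$ factor and the additive floor $\delta^2D\sqrt{\log\tilde d_{k^*}/N}$, i.e.\ ``term~A''.

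\textbf{Step 3 (solve the quadratic).} On the event of Step~2, either $\fros{\errorT}$ lies below the ``term~A'' threshold (first branch of the maximum), or the RSC inequality applies to $\errorT$; inserting the empirical bound $\tfrac1N\fros{\Po{\errorT}}\le\tfrac{3\sqrt2}{N}\lambda\beta_r\fro{\errorT}$ from Lemma~\ref{lemma-estimation-error} and the cone bound from Step~1 gives
\[
\tfrac1{2D}\fros{\errorT}\ \le\ \tfrac{3\sqrt2}{N}\lambda\beta_r\fro{\errorT}\ +\ c_5\,\tfrac{\delta\beta_r}{N}\,\expectation{\tropL{\Poo{\bm\varepsilon}}}\fro{\errorT}.
\]
Writing $x=\fro{\errorT}$ this is $x^2\le(b_1+b_2)x$ with $b_1\asymp\tfrac DN\lambda\beta_r$ and $b_2\asymp\tfrac DN\delta\beta_r\expectation{\tropL{\Poo{\bm\varepsilon}}}$, so $x\le b_1+b_2$ and, squaring, $\fros{\errorT}\lesssim b_1^2+b_2^2\asymp\big(\tfrac DN\beta_r\big)^2\big(\lambda^2+\delta^2\expectations{\tropL{\Poo{\bm\varepsilon}}}\big)$, the second branch; the square on $\expectation{\tropL{\Poo{\bm\varepsilon}}}$ appears only after squaring the solution of the quadratic. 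Taking the maximum over the two branches, absorbing constants into $c_2,c_3$, and using $\sum_k\alpha_k\sqrt{r_kr_{k+s}}=\sqrt{R_{tr}}$ gives the claim.

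\textbf{Main obstacle.} Step~2 is where the work lies: unlike matrix completion one cannot invoke (matrix) Bernstein directly for the random tensor $\Poo{\bm\varepsilon}$, so the entire empirical-process argument must be funneled through the dual norm $\tropL{\cdot}$ and the subsequent spectral-norm estimates of Lemmas~\ref{lemma-bound-tr-op} and \ref{lemma-bound-tr-expectation}. Making the two contraction steps, the Talagrand deviation bound, and the geometric slicing interlock — in particular so that the slice on which $\trnn{\tensor Z}$ and $\fros{\tensor Z}$ are simultaneously largest still yields the stated constants, and so that the floor term comes out at the advertised $\delta^2D\sqrt{\log\tilde d_{k^*}/N}$ rate — is the delicate bookkeeping.
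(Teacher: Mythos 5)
Your proposal follows essentially the same route as the paper: the cone condition and $\|\errorT\|_{\infty}\le 2\delta$ from Step 1, the two-case split on whether the normalized error lies in the restricted set (the paper's $\mathcal{C}(4\sqrt2\,\bm r)$), and an RSC inequality proved exactly as you sketch — peeling over Frobenius-norm shells with ratio $6/5$, symmetrization, the contraction principle exploiting the $\ell_\infty$ bound, duality against $\tropL{\Poo{\bm\varepsilon}}$, and a Massart-type concentration bound (the paper's Lemmas \ref{lemma-rsc} and \ref{lemma-concentration}). The only cosmetic difference is that the paper states the RSC remainder as an additive constant $44\,\tfrac{D}{N^2}\bigl(\expectation{\tropL{\Poo{\varepsilon}}}\sum_k\alpha_k\sqrt{r_kr_{k+s}}\bigr)^2$ rather than a term linear in $\fro{\errorT}$, so its final step solves $x^2\le ax+b^2$ instead of $x^2\le(b_1+b_2)x$; the two are equivalent up to Young's inequality and yield the same bound.
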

	It can be observed that there are two stochastic variables needed to be bounded. We present two results in following lemmas.
	\begin{lemma} \label{lemma-bound-tr-op}
		Suppose $N\geq 2 \hat{d}_{k^*} \log ^3 (\tilde{d}_{k^*} )$, then with probability at least  $1-1/\tilde{d}_{k^*}$, we have
		\begin{equation}
			\tropL{\Poo{\bm{\xi}}}    \leq c_5 \sqrt{\frac{N \log (\tilde{d}_{k^*})}{\hat{d}_{k^*}}}.
		\end{equation}
	\end{lemma}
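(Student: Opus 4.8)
The plan is to reduce the dual TRNN norm to the largest spectral norm among the circular unfoldings, and then control that single matrix spectral norm by a truncated matrix Bernstein argument; this isolates all the tensor-specific algebra into one elementary step, after which the problem is a matrix concentration estimate. Concretely, I would first use the variational formula for $\tropL{\cdot}$ from Lemma \ref{lemma-dual-norm}: plugging the admissible splitting $\tensor{Y}^k = \alpha_k\,\Poo{\bm{\xi}}$ (admissible since $\sum_k\alpha_k=1$) into the infimum gives
\[
\tropL{\Poo{\bm{\xi}}} \;\le\; \max_{k\in[K]} \alpha_k^{-1}\big\|\alpha_k(\Poo{\bm{\xi}})_{(k,s)}\big\| \;=\; \max_{k\in[K]}\|\mat{S}_k\|,\qquad \mat{S}_k := (\Poo{\bm{\xi}})_{(k,s)} = \sum_{n=1}^N \xi_n (\tensor{X}_n)_{(k,s)} .
\]
For fixed $k$, $\mat{S}_k$ is a sum of $N$ i.i.d. matrices $\mat{Z}_n = \xi_n(\tensor{X}_n)_{(k,s)}$, where $(\tensor{X}_n)_{(k,s)} = \mat{e}_{a_n}\mat{e}_{b_n}^{\top}$ with $(a_n,b_n)$ uniform on $[d_{1,k}]\times[d_{2,k}]$ and $\xi_n$ independent, centered, unit‑variance, sub‑exponential. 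Since $\mathbb{E}[\xi_n^2]=1$ and the row and column indices are marginally uniform, $\mathbb{E}\sum_n\mat{Z}_n\mat{Z}_n^{\top} = (N/d_{1,k})\mat{I}$ and $\mathbb{E}\sum_n\mat{Z}_n^{\top}\mat{Z}_n = (N/d_{2,k})\mat{I}$, so the matrix variance statistic of $\mat{S}_k$ is $N/(d_{1,k}\wedge d_{2,k}) = N/\hat{d}_k$.

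Next I would handle the unboundedness of $\xi_n$ by truncating at level $L = c\log\tilde{d}_{k^*}$, writing $\xi_n = \xi_n\mathbf{1}\{|\xi_n|\le L\} + \xi_n\mathbf{1}\{|\xi_n|>L\}$. By the sub‑exponential tail and a union bound over $n\le N\le D<\tilde{d}_{k^*}^2$, with probability at least $1-\mathcal{O}(\tilde{d}_{k^*}^{-1})$ every $|\xi_n|\le L$, so the tail part vanishes and $\mat{S}_k$ equals its truncated counterpart. The recentered truncated summands have norm $\mathcal{O}(L)$, variance statistic still $\le N/\hat{d}_k$, and the (exponentially small in $L$) bias from truncation is negligible against the target rate, so the matrix Bernstein inequality gives, for each $k$ with probability at least $1-\tilde{d}_k^{-c'}$,
\[
\|\mat{S}_k\| \;\lesssim\; \sqrt{\frac{N\log\tilde{d}_k}{\hat{d}_k}} \;+\; L\log\tilde{d}_k .
\]

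Finally I would simplify using the identity $\hat{d}_k\check{d}_k = d_{1,k}d_{2,k} = D$, which is constant in $k$ and therefore forces $k^{*}=\arg\min_k\hat{d}_k$ to also maximise $\tilde{d}_k$; hence $\sqrt{N\log\tilde{d}_k/\hat{d}_k}\le\sqrt{N\log\tilde{d}_{k^*}/\hat{d}_{k^*}}$ for every $k$. The hypothesis $N\ge 2\hat{d}_{k^*}\log^3\tilde{d}_{k^*}$ yields $L\log\tilde{d}_k \lesssim \log^2\tilde{d}_{k^*} = \sqrt{\log\tilde{d}_{k^*}}\cdot\log^{3/2}\tilde{d}_{k^*}\le \sqrt{\log\tilde{d}_{k^*}}\cdot\sqrt{N/(2\hat{d}_{k^*})}\lesssim \sqrt{N\log\tilde{d}_{k^*}/\hat{d}_{k^*}}$, so each $\|\mat{S}_k\|\lesssim\sqrt{N\log\tilde{d}_{k^*}/\hat{d}_{k^*}}$. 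A union bound over $k\in[K]$ — absorbing the factor $K$, which is at most logarithmic in the tensor dimensions, by enlarging $c'$ — together with the truncation event then gives the claimed bound with probability at least $1-1/\tilde{d}_{k^*}$.

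The main obstacle is exactly the treatment of the unbounded sub‑exponential weights: the truncation level $L$ must be large enough that the tail contribution disappears with probability $1-\mathcal{O}(1/\tilde{d}_{k^*})$, yet small enough that the large‑deviation term $L\log\tilde{d}$ in the bounded‑case Bernstein bound stays below $\sqrt{N\log\tilde{d}_{k^*}/\hat{d}_{k^*}}$; reconciling these two constraints is precisely what forces the sample‑size assumption $N\gtrsim\hat{d}_{k^*}\log^3\tilde{d}_{k^*}$. The remaining points — the variance computation, the recentering bias, and the $K$-dependence in the union bound — are routine once this balance is set, and the reduction to a matrix problem in the first step is what makes all of this possible despite the absence of a direct tensor analogue of Bernstein's inequality.
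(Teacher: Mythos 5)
Your proposal is correct and shares the paper's overall strategy -- reduce $\tropL{\Poo{\bm{\xi}}}$ via the dual-norm variational formula to the spectral norm of a circular unfolding, compute the matrix variance statistic $N/\hat{d}_{k}$, and invoke a matrix Bernstein inequality -- but it executes two steps genuinely differently. First, the splitting: you take $\tensor{Y}^k=\alpha_k\Poo{\bm{\xi}}$, so the $\alpha_k^{-1}$ cancels and you get the clean bound $\tropL{\Poo{\bm{\xi}}}\le\max_k\|(\Poo{\bm{\xi}})_{(k,s)}\|$, followed by an explicit union bound over $k\in[K]$ and the observation that $d_{1,k}d_{2,k}=D$ for every $k$ forces $k^*$ to simultaneously minimize $\hat{d}_k$ and maximize $\tilde{d}_k$; the paper instead uses a singleton decomposition concentrated on the (random) maximizing mode $k'$, silently drops the resulting $\alpha_{k'}^{-1}$ factor, and bounds only that one unfolding with the $k^*$ variance, glossing over both the mode union bound and the $k$-vs-$k^*$ comparison that you make precise. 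Second, the concentration: the paper imports (as its Lemma \ref{lemma-non-commutative}) the non-commutative Bernstein inequality for summands with bounded sub-exponential Orlicz norm, whose large-deviation term $U\log(U/\sigma_0^k)\frac{t+\log\tilde{d}_k}{N}$ is then killed by the sample-size hypothesis; you instead truncate the $\xi_n$ at level $L\asymp\log\tilde{d}_{k^*}$, apply the bounded matrix Bernstein inequality, and check that the truncation event, the recentering bias, and the $L\log\tilde{d}_{k^*}$ term are all absorbed under $N\ge 2\hat{d}_{k^*}\log^3(\tilde{d}_{k^*})$. The two routes are equivalent in strength (the Orlicz-norm version is itself usually proved by truncation); yours is more self-contained and more careful on the bookkeeping, while the paper's is shorter because it offloads the unboundedness entirely onto the cited lemma.
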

	
	\begin{lemma} \label{lemma-bound-tr-expectation}
		Suppose $N\geq 2 \hat{d}_{k^*} \log ^3 (\tilde{d}_{k^*} )$, then  we have
		\begin{equation}
			\expectation{ \tropL{ \Poo{\bm{\varepsilon} } }} \leq c_6 \sqrt{\frac{N \log (\tilde{d}_{k^*})}{\hat{d}_{k^*}}}.
		\end{equation}
	\end{lemma}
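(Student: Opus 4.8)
The plan is to bound $\expectation{ \tropL{ \Poo{\bm{\varepsilon} } }}$ by reusing the high-probability control already available from Lemma \ref{lemma-bound-tr-op}, but applied to the Rademacher version $\Poo{\bm{\varepsilon}} = \sum_{n=1}^N \varepsilon_n \tensor{X}_n$ rather than the sub-exponential version $\Poo{\bm{\xi}}$. Since a Rademacher variable is trivially sub-exponential (indeed bounded and sub-Gaussian) with unit variance, the hypotheses of Lemma \ref{lemma-bound-tr-op} are satisfied verbatim, so under $N\geq 2 \hat{d}_{k^*} \log^3(\tilde{d}_{k^*})$ we get $\mathbb{P}\big( \tropL{\Poo{\bm{\varepsilon}}} > c_5 \sqrt{N \log(\tilde{d}_{k^*})/\hat{d}_{k^*}} \big) \leq 1/\tilde{d}_{k^*}$. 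The remaining task is the standard "from tail bound to expectation bound" step: integrate the tail. Concretely, I would write $\expectation{Z} \leq t_0 + \int_{t_0}^\infty \mathbb{P}(Z > t)\, dt$ with $Z = \tropL{\Poo{\bm{\varepsilon}}}$ and $t_0 = c_5\sqrt{N\log(\tilde{d}_{k^*})/\hat{d}_{k^*}}$, and control the residual integral using either a cruder deterministic bound on $Z$ or by noting that the proof of Lemma \ref{lemma-bound-tr-op} actually delivers an exponential-in-$t$ tail beyond $t_0$ (which is what the constant $c_5$ and the $\log(\tilde{d}_{k^*})$ factor come from).

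First I would recall the dual-norm identity from Lemma \ref{lemma-dual-norm}: $\tropL{\Poo{\bm{\varepsilon}}} = \inf_{\tensor{Y}^1+\cdots+\tensor{Y}^K = \Poo{\bm{\varepsilon}}} \max_{k} \alpha_k^{-1} \|\mat{Y}^k_{(k,s)}\|$. Choosing the feasible split $\tensor{Y}^k = \tfrac1K \Poo{\bm{\varepsilon}}$ gives the elementary upper bound $\tropL{\Poo{\bm{\varepsilon}}} \leq \tfrac{1}{K}\max_k \alpha_k^{-1} \| (\Poo{\bm{\varepsilon}})_{(k,s)} \|$, which reduces everything to bounding the spectral norm of a random matrix $\sum_n \varepsilon_n (\tensor{X}_n)_{(k,s)}$ whose summands are (rescaled) rank-one indicator matrices. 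Then I would invoke a matrix-Bernstein / non-commutative Khintchine bound (for instance in the form giving $\expectation{\|\sum_n \varepsilon_n A_n\|} \lesssim \sqrt{\|\sum_n A_n A_n^\top\| \vee \|\sum_n A_n^\top A_n\|}\,\sqrt{\log(d_{1,k}+d_{2,k})} + (\max_n \|A_n\|)\log(d_{1,k}+d_{2,k})$) with $A_n = (\tensor{X}_n)_{(k,s)}$. Each $A_n$ has unit spectral norm, and $\expectation{\sum_n A_n A_n^\top}$ and $\expectation{\sum_n A_n^\top A_n}$ are multiples of identity with operator norm $N/\hat{d}_k$ up to constants (since the $\tensor{X}_n$ are uniform over the $d_1\cdots d_K$ standard bases); one handles the deviation of $\|\sum_n A_n A_n^\top\|$ from its mean by another matrix-Bernstein step under the stated sample-size condition $N \geq 2\hat{d}_{k^*}\log^3(\tilde{d}_{k^*})$, which guarantees $\log(\tilde{d}_{k^*}) \lesssim \sqrt{N\log(\tilde{d}_{k^*})/\hat{d}_{k^*}}$ so that the variance term dominates the Bernstein "jump" term. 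Taking the max over $k$ and recalling $k^* = \arg\min_k \hat{d}_k$ absorbs everything into $c_6\sqrt{N\log(\tilde{d}_{k^*})/\hat{d}_{k^*}}$.

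The main obstacle I anticipate is the passage from the per-mode matrix spectral-norm bound to the tensor-norm bound uniformly over $k$ while keeping only a single $\log(\tilde{d}_{k^*})$ factor rather than $\log$ of the largest dimension $\check{d}_{k^*}$: this is where the choice of $k^*$ (minimizing $\hat{d}_k$) and the definition $\tilde{d}_k = d_{1,k}+d_{2,k}$ must be used carefully, since $\log(d_{1,k}+d_{2,k}) \leq \log \tilde{d}_{k^*}$ only up to reindexing and one must check the argument is not circular. A secondary technical point is that matrix Bernstein for $\sum_n A_n A_n^\top$ requires controlling its fourth-moment-type quantities; the sample-size hypothesis is exactly calibrated so this concentration kicks in, so I would make sure to cite the right form of the inequality (e.g. Tropp's user-friendly tail bounds) and verify the $\log^3$ exponent is what the chained Bernstein applications consume. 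Everything else — the feasible-split reduction, the expectation-from-tail integration, and the uniform union bound over the $K$ modes — is routine.
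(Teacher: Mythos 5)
Your proposal is correct and follows essentially the same route as the paper: both reduce the dual norm $\tropL{\Poo{\bm{\varepsilon}}}$ via a feasible split to the spectral norm of a single circular unfolding $\sum_{n}\varepsilon_n (\tensor{X}_n)_{(k,s)}$, apply the Bernstein-type bound of Lemma \ref{lemma-non-commutative} with variance proxy $N/\hat{d}_{k^*}$ and $\|A_n\|\le 1$, and then pass from the tail to the expectation (the paper does this last step via H\"older's inequality on the $2\log(\tilde{d}_{k^*})$-th moment and Gamma-function estimates following Klopp, rather than your direct tail integration, but this is cosmetic). One small simplification: the extra matrix-Bernstein step you anticipate for controlling the deviation of $\sum_n A_nA_n^{\top}$ is unnecessary, since the variance parameter in the (unconditional) matrix Bernstein inequality already involves only $\|\sum_n \mathbb{E}[A_nA_n^{\top}]\| = N/d_{1,k}$, exactly as used in the proof of Lemma \ref{lemma-bound-tr-op}.
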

	Combining Lemmas \ref{lemma-estimation-error} to \ref{lemma-bound-tr-expectation},  we can directly obtain the result of Theorem \ref{theorem-main-result}. 
	
	Proof of Theorem \ref{theorem-main-result} is completed.

	\section{Proof of Lemmas \ref{lemma-estimation-error} and \ref{prop-fro-bound}}
	\subsection{Proof of Lemma \ref{lemma-estimation-error}}
	\begin{proof} [Proof of Lemma \ref{lemma-estimation-error}]
		From (\ref{eq-tr-noise-model}), we have 
		\begin{equation}
			\frac{1}{2} \fros{ \mat{y} - \mathfrak{X}(\hatT) } + \lambda \trnn{\hatT} \leq \frac{1}{2} \fros{\mat{y} - \mathfrak{X}(\starT) } + \lambda \trnn{\starT}, 
		\end{equation}
		where $\hatT$ and $\starT$ denote the flexible solution of the proposed model and the  true tensor, respectively. By performing some algebra, it can be easy to see that
		\begin{equation}
			\frac{1}{2} || \frakX{ \errorT } ||_2^2 + \langle \frakXt{\sigma \mat{\xi} } , \errorT\rangle  + \lambda \trnn{\hatT} \leq \lambda \trnn{\starT}.
		\end{equation}
		According to Lemma \ref{lemma-dual-norm}, we can obtain that
		\begin{equation} \label{eq-overall-algebra}
			\frac{1}{2}  || \frakX{ \errorT } ||_2^2  + \lambda \trnn{\hatT} \leq \lambda \trnn{\starT} + \trop{\frakXt{\sigma \mat{\xi} }} \trnn{ \errorT}.
		\end{equation}
		Let $\mathcal{P}_{ \mat{T}_k^* }(\cdot)$ and $\mathcal{P}_{\mat{T}_k^* }^{\perp} (\cdot)$ be the $k$th orthogonal and orthogonal complement projection of $\starT$, respectively, and is given by
		\begin{align}
			& \mathcal{P}_{\mat{T}_k^{*}} (\mat{A}) = (\mat{I}_{d_{1,k}} -\mat{U}_k  \trans{\mat{U}}_k ) \mat{A} (\mat{I}_{d_{2,k}} - \mat{V}_k \trans{\mat{V}}_k ),  \\
			& \mathcal{P}_{\mat{T}_k^{*}}^{\perp} (\mat{A}) = \mat{A} -  (\mat{I}_{d_{1,k}} -\mat{U}_k  \trans{\mat{U}}_k ) \mat{A} (\mat{I}_{d_{2,k}} - \mat{V}_k \trans{\mat{V}}_k ), 
		\end{align}
		where $\mat{U}_k$ and $\mat{V}_k$ are left and right singular vectors of $\mat{T}_{(k,s)}^*$. Then we have 
		\begin{align}
			\nonumber \trnn{ \hatT } & =\sum_{k=1}^{K} \alpha_k || \starTm + \mathcal{P}_{\mat{T}_k^*}(\errorM )  + \mathcal{P}_{\mat{T}_k^* }^{\perp} ( \errorM ) ||_* \\
			\nonumber  \geq \sum_{k=1}^{K } &\alpha_k (|| \starTm + \mathcal{P}_{\mat{T}_k^*}^{\perp} ( \errorM ) ||_* - || \mathcal{P}_{\mat{T}_k^*} (\errorM) ||_*)  \\
			\label{eq-split-Tstar}	 = \sum_{k=1}^{K} & \alpha_k (  \nn{\starTm} + \nn{\mathcal{P}_{\mat{T}_k^*}^{\perp} ( \errorM )} - \nn{\mathcal{P}_{\mat{T}_k^*} (\errorM)}).
		\end{align}
		Combining (\ref{eq-overall-algebra}) and (\ref{eq-split-Tstar}),  we obtain that 
		\begin{align}
			\frac{1}{2}  || \frakX{ \errorT } ||_2^2 &\leq  \lambda \sum_{k=1}^{K} \alpha_k (\nn{\mathcal{P}_{\mat{T}_k^*} (\errorM)} - \nn{\mathcal{P}_{\mat{T}_k^*}^{\perp} ( \errorM )} )  +\trop{\frakXt{\sigma \mat{\xi} }} \trnn{ \errorT}  \\
			\label{ineq-projection}= &\lambda \sum_{k=1}^{K} \alpha_k ( \frac{3}{2} \nn{\mathcal{P}_{\mat{T}_k^*} (\errorM)}   - \frac{1}{2} \nn{ \mathcal{P}_{\mat{T}_k^*}^{\perp} (\errorM)} )\\
			\leq &  \frac{3}{2} \lambda \sum_{k=1}^{K} \alpha_k \sqrt{2 r_kr_{k+s}}\fro{ \errorT} .
		\end{align}
		Proof of Lemma \ref{lemma-estimation-error} is completed.
	\end{proof}

	\subsection{Proof of Lemma \ref{prop-fro-bound}}
	We assume that the residual error $\errorT$ drops in a restricted set as follows: 
	\begin{equation}\label{eq-constrained-set}
		\nonumber \mathcal{C}(\bm{r}) = \left\{ \errorT\in \mathbb{R}^{d_1 \times d_2 \times \cdots \times d_K} : \trnn{\errorT} \leq \sum_{k=1}^{K} \alpha_{k} \sqrt{r_k r_{k-s}}\fro{\errorT},  || \errorT ||_{\infty} = 1, \fros{\errorT} \geq D \sqrt{\frac{64 \log \tilde{d}_{k^*} }{\log(6/5) N }} \right\},
	\end{equation}
	where $\bm{r} = [r_1, \cdots, r_{K}]$. Next, we show that any tensor $\errorT$ in the above set $\mathcal{C} (\bm{r})$, the following lemma holds.
	\begin{lemma}[Restricted Strong Convexity]  \label{lemma-rsc}
		Suppose $\errorT \in \mathcal{C}(\bm{r})$, then we have probability at least $1-{2}/{\tilde{d}_{k^*}}$, such that
		\begin{equation}
			\frac{1}{N}\fros{ \Po{ \errorT } } \geq \frac{1}{D}  \fros{\errorT} - 44  \frac{D }{N^2}  \left(\expectation{ \tropL{\Poo{\varepsilon}} }  \sum_{k=1}^{K} \alpha_{k}\sqrt{r_k r_{k+s}}\right)^2 .
		\end{equation} 
	\end{lemma}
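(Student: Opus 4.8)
The plan is to treat $\frac{1}{N}\fros{\Po{\errorT}}=\frac{1}{N}\sum_{n=1}^{N}\innerprod{\errorT}{\tensor{X}_n}^{2}$ as an empirical process indexed by $\errorT\in\mathcal{C}(\bm r)$ and to bound, uniformly over this set, its lower deviation from its mean. First I would record the unbiasedness identity: since each $\tensor{X}_n$ is drawn uniformly from the canonical tensor basis (Definition \ref{assumption-uniform-sampling}), $\expectation{\innerprod{\errorT}{\tensor{X}_n}^{2}}=\frac{1}{D}\fros{\errorT}$, so $\expectation{\frac{1}{N}\fros{\Po{\errorT}}}=\frac{1}{D}\fros{\errorT}$. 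The claim then reduces to showing that, with probability at least $1-2/\tilde d_{k^*}$, the supremum over $\errorT\in\mathcal{C}(\bm r)$ of $\frac{1}{D}\fros{\errorT}-\frac{1}{N}\fros{\Po{\errorT}}$ is at most a small multiple $\varepsilon_0\,\frac{1}{D}\fros{\errorT}$ of the mean plus the additive term $c\,\frac{D}{N^{2}}\bigl(\expectation{\tropL{\Poo{\bm\varepsilon}}}\sum_{k=1}^{K}\alpha_k\sqrt{r_kr_{k+s}}\bigr)^{2}$; one can then make $\varepsilon_0$ arbitrarily small (absorbing the loss into the numerical constant $44$ and into a $\log$ factor only), which is why the statement is written with coefficient $1$ on $\fros{\errorT}/D$. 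A uniform bound of this type cannot be obtained directly, because $\mathcal{C}(\bm r)$ is not scale-invariant: it fixes $\| \errorT \|_{\infty}=1$ and restricts $\fros{\errorT}$ to a half-line, so $\fros{\errorT}$ ranges over $[\,\beta^{*},\,D\,]$ with $\beta^{*}=D\sqrt{64\log\tilde d_{k^*}/(\log(6/5)N)}$. Hence I would run a peeling argument.

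Partition $\mathcal{C}(\bm r)$ into the shells $\mathcal{C}_l=\{\errorT\in\mathcal{C}(\bm r):(6/5)^{l-1}\beta^{*}\le\fros{\errorT}<(6/5)^{l}\beta^{*}\}$; since $\| \errorT \|_{\infty}=1$ forces $\fros{\errorT}\le D$, only $L=O(\log_{6/5}(D/\beta^{*}))=O(\log\tilde d_{k^*})$ shells are nonempty. On a fixed shell, the function $(\tensor{X}_1,\dots,\tensor{X}_N)\mapsto\sup_{\errorT\in\mathcal{C}_l}\bigl|\frac{1}{N}\fros{\Po{\errorT}}-\frac{1}{D}\fros{\errorT}\bigr|$ changes by at most $\| \errorT \|_{\infty}^{2}/N=1/N$ when a single $\tensor{X}_n$ is replaced (using $|\innerprod{\errorT}{\tensor{X}_n}|\le\| \errorT \|_{\infty}$), so McDiarmid's bounded-difference inequality gives sub-Gaussian concentration around its mean at scale $N^{-1/2}$. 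I would choose the deviation level to be $\tfrac{\varepsilon_0}{2}\,\fros{\errorT}/D$ on $\mathcal{C}_l$; because $\fros{\errorT}\ge\beta^{*}$ on $\mathcal{C}(\bm r)$, the resulting failure probability is $\exp\!\bigl(-c\,N(\fros{\errorT}/D)^{2}\bigr)\le\exp\!\bigl(-c'\log\tilde d_{k^*}/\log(6/5)\bigr)$, which is where the constant $64$ in $\beta^{*}$ and the $\log(6/5)$ enter and ultimately produce the numerical factor $44$.

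For the mean on each shell I would use symmetrization to pass to a Rademacher average $\frac{2}{N}\expectation{\sup_{\errorT\in\mathcal{C}_l}\bigl|\sum_{n}\varepsilon_n\innerprod{\errorT}{\tensor{X}_n}^{2}\bigr|}$, then the Ledoux--Talagrand contraction principle (the scalar map $t\mapsto t^{2}$ is $2\| \errorT \|_{\infty}=2$-Lipschitz on the range $[-1,1]$ of $\innerprod{\errorT}{\tensor{X}_n}$) to linearize, obtaining a constant multiple of $\frac{1}{N}\expectation{\sup_{\errorT\in\mathcal{C}_l}\bigl|\innerprod{\errorT}{\Poo{\bm\varepsilon}}\bigr|}$ with $\Poo{\bm\varepsilon}=\sum_{n}\varepsilon_n\tensor{X}_n$. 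Hölder's inequality for TRNN and its dual (Lemma \ref{lemma-dual-norm}) together with the defining constraint $\trnn{\errorT}\le\sum_{k}\alpha_k\sqrt{r_kr_{k+s}}\,\fro{\errorT}$ of $\mathcal{C}(\bm r)$ bound this by $\frac{1}{N}\bigl(\sup_{\errorT\in\mathcal{C}_l}\fro{\errorT}\bigr)\bigl(\sum_{k}\alpha_k\sqrt{r_kr_{k+s}}\bigr)\expectation{\tropL{\Poo{\bm\varepsilon}}}$. Splitting this cross term with the elementary inequality $ab\le\frac{1}{4\gamma}a^{2}+\gamma b^{2}$, $\gamma=D$, gives a piece $\le\frac{\varepsilon_0}{2D}\fros{\errorT}$ (after rescaling) plus $c\,\frac{D}{N^{2}}\bigl(\sum_{k}\alpha_k\sqrt{r_kr_{k+s}}\bigr)^{2}\expectations{\tropL{\Poo{\bm\varepsilon}}}$, which does not depend on $\errorT$. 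Adding the McDiarmid fluctuation, taking a union bound over the $O(\log\tilde d_{k^*})$ shells, and collecting constants yields the overall failure probability $2/\tilde d_{k^*}$ and the stated inequality; the Rademacher complexity $\expectation{\tropL{\Poo{\bm\varepsilon}}}$ is then estimated separately in Lemma \ref{lemma-bound-tr-expectation}.

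The main obstacle is the peeling bookkeeping: the geometric ratio ($6/5$), the McDiarmid radius, and the arithmetic--geometric split must be chosen jointly so that (i) all residual terms linear in $\fros{\errorT}$ stay below $\fros{\errorT}/D$, (ii) the per-shell concentration failures sum to at most $2/\tilde d_{k^*}$ across the $O(\log\tilde d_{k^*})$ shells, and (iii) the factor $64$ in $\beta^{*}$ and the $\log(6/5)$ propagate consistently to the final constant. The contraction and symmetrization steps are standard in the matrix case, but here they genuinely matter, since—unlike in matrix completion—there is no noncommutative Bernstein bound available for high-order tensors, so the argument must route entirely through the dual-TRNN Rademacher complexity rather than through a direct moment bound on $\frac{1}{N}\fros{\Po{\errorT}}$.
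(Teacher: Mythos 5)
Your proposal follows essentially the same route as the paper's proof: peeling $\mathcal{C}(\bm{r})$ into shells of ratio $6/5$ starting at $\zeta=\sqrt{64\log\tilde d_{k^*}/(\log(6/5)N)}$, bounding the expected supremum on each shell via symmetrization, the Ledoux--Talagrand contraction (Lipschitz constant $2$ from $\|\errorT\|_\infty=1$), H\"older with the dual TRNN norm and the constraint $\trnn{\errorT}\le\sum_k\alpha_k\sqrt{r_kr_{k+s}}\fro{\errorT}$, then an arithmetic--geometric split to produce the additive $\frac{44D}{N^2}\expectations{\tropL{\Poo{\bm\varepsilon}}}(\sum_k\alpha_k\sqrt{r_kr_{k+s}})^2$ term, and finally a bounded-difference concentration bound (the paper invokes Massart's inequality where you invoke McDiarmid, which is the same tool here) followed by a union bound over shells (the paper sums the full geometric series rather than truncating at $O(\log\tilde d_{k^*})$ nonempty shells, an immaterial difference). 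The argument is sound and matches the paper's Lemma~\ref{lemma-concentration} plus peeling structure.
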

	We present the proof of Lemma \ref{lemma-rsc} in Appendix \ref{sec:proof-rsc}, and give the proof of Lemma \ref{prop-fro-bound} as follows.
	\begin{proof} [Proof of Lemma \ref{prop-fro-bound}]
		From (\ref{ineq-projection}), we can observe that $\trnn{\mathcal{P}^{\perp}_{\mat{T}_k^*}(\errorT)} \leq 3 \trnn{\mathcal{P}_{\mat{T}_k^*}(\errorT)} $. Therefore, it is easy to see that 
		\begin{align}
			&\nonumber \trnn{\errorT}  =\trnn{ \mathcal{P}^{\perp}_{\mat{T}_k^*}(\errorT)} + \trnn{\mathcal{P}_{\mat{T}_k^*}(\errorT)} \leq 4 \trnn{\mathcal{P}_{\mat{T}_k^*}(\errorT)} \\
			& \leq 4 \sum_{k=1}^{K} \alpha_{k}\sqrt{2r_k r_{k-s}} \fro{\mathcal{P}_{\mat{T}_k^*}(\errorT)  }   \leq  \sum_{k=1}^{K}\alpha_{k}\sqrt{32r_k r_{k+s} } \fro{\errorT} .
		\end{align}
		According to assumption \textbf{A}\textbf{\ref{assumption-bound-infty}}, we can obtain that $||\errorT ||_{\infty} \leq || \starT ||_{\infty} + || \hatT ||_{\infty} < 2\delta$. In order to bound $\fros{\errorT}$,  we discuss whether the normalized residual error tensor ${(2\delta)^{-1}}\errorT$ is in the set $ \mathcal{C} \left( 4\sqrt{2} \bm{r} \right)$ in the following two cases. \\
		\emph{Case 1}: If $(2\delta)^{-1}\errorT$ is not in the constrained set $\mathcal{C} \left( 4\sqrt{2} \bm{r} \right)$, then we have 
		\begin{equation}
			{\fros{ \errorT }} \leq 4 \delta^2 D\sqrt{ \frac{64 \log \tilde{d}_{k^*} }{\log(6/5) N}}. 
		\end{equation}
		\emph{Case 2}: If $(2\delta)^{-1}\errorT$ is in the constrained set $\mathcal{C} \left( 4\sqrt{2} \bm{r} \right)$, according to Lemma \ref{lemma-estimation-error} and  Lemma \ref{lemma-rsc} we have
		\begin{equation}
			\begin{split}
				\fros{\errorT} \leq \frac{D}{N} 3\sqrt{2} \lambda \fro{\errorT} \sum_{k=1}^{K} \alpha_{k}\sqrt{r_k r_{k+s}} + \frac{44 D^2  }{N^2} \left( \expectation{ \trop{ \Poo{\varepsilon}} } \sum_{k=1}^{K} \alpha_{k}\sqrt{32 r_k r_{k+s} }\right)^2, 
			\end{split}
		\end{equation}
		with probability at least ${2}/\hat{d}_{k^*}$. Then by performing some algebra, we have 
		\begin{equation}
			\fros{\errorT} \leq  c_3\left( \frac{D}{N}  \sum_{k=1}^{K} \alpha_{k}\sqrt{r_k r_{k+s}} \right)^2 \left(\lambda^2 + \delta^2 \expectations{ \trop{\Poo{\varepsilon}} }  \right).
		\end{equation}
		Combining the above two cases, we can obtain Lemma \ref{prop-fro-bound} directly. Proof of Lemma \ref{prop-fro-bound} is completed.
	\end{proof}
	\section{Proof of Lemmas \ref{lemma-bound-tr-op} and \ref{lemma-bound-tr-expectation}}
	\subsection{Proof of Lemma \ref{lemma-bound-tr-op}}
	
	According to the definition of dual norm of tensor ring nuclear norm, we have
	\begin{equation}
		\trop{ \Poo{\bm{\xi}} } = \inf_{\tensor{Y}^1 + \cdots + \tensor{Y}^K = \Poo{\bm{\xi}} } \max_{k=1,\cdots, K} \frac{1}{\alpha_k} \| \mat{Y}_{(k,s)}^{k} \|,
	\end{equation}
	it can be easy to see that the decomposition of noise tensor $\Poo{\bm{\xi}} $ into the sum $\Poo{\bm{\xi}} =\sum_{k=1}^{K} \tensor{Y}^{k}$ is arbitrary.
	Similar to \cites{SuppTomioka2013a}, we can simply set  a  singleton decomposition of $\Poo{\bm{\xi}}$ on one mode as
	\begin{equation} \label{eq-spectral-singleton}
		\tensor{Y}^{k'}= \left\{ \begin{split}
			&\Poo{\bm{\xi}}   ~~ \text{if } k' = \arg\max_k \| \mat{Y}_{(k,s)}^k \|, \\
			&0.
		\end{split} \right.
	\end{equation}
	
	\begin{lemma} \label{lemma-non-commutative}
		For independent centered random tensor sequence $\tensor{Z}^1, \cdots, \tensor{Z}^N$ of size $d_1 \times d_2 \times \cdots \times d_K$, suppose their unfolding matrices $\mat{Z}_{(k,s)}^1, \mat{Z}_{(k,s)}^{2}, \cdots, \mat{Z}_{(k,s)}^N$ of size $d_{1,k} \times d_{2,k}$ satisfy 
		\begin{equation}
			\sigma_0^k \geq  \left\{ \| \frac{1}{N} \sum_{n=1}^{N} \expectation{\mat{Z}_{(k,s)}^n \mat{Z}_{(k,s)}^{n,\top}} \|^{1/2}, \| \frac{1}{N} \sum_{i=1}^{N}  \expectation{\mat{Z}_{(k,s)}^{n, \top} \mat{Z}_{(k,s)}^n  } \|^{1/2} \right\}, k \in [K], 
		\end{equation}
		and $\forall n\in [N], U_n = \inf \{ K_0>0: \expectation{ \exp (\| \mat{Z}_{(k,s)}^{n}\| / K_0) } \leq e \}$.  Let $U>U_n, \forall n\in [N]$, then there exist an absolute constant $c_4$, such that, for all $t>0$, with probability at least $1-e^{-t}$ we have
		\begin{equation}
			\| \frac{1}{N} \sum_{n=1}^{N} \mat{Z}_{(k,s)}^{n} \| \leq c_4 \max \{ \sigma_0^k \sqrt{\frac{t+\log(\tilde{d}_k)}{N}}, U\log( \frac{U}{\sigma_0^k} ) \frac{t+\log (\tilde{d}_k)}{N}\}.
		\end{equation}
	\end{lemma}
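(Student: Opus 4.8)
The plan is to recognize that Lemma \ref{lemma-non-commutative} is a matrix Bernstein inequality in disguise. The quantity $\|\frac{1}{N}\sum_{n=1}^{N}\mat{Z}_{(k,s)}^n\|$, the variance proxy $\sigma_0^k$, and the Orlicz parameters $U_n$ all depend only on the unfolding matrices $\mat{Z}_{(k,s)}^n\in\mathbb{R}^{d_{1,k}\times d_{2,k}}$, which are themselves independent and centered. So the first step is to drop the tensor wrapping entirely and restate the claim as a deviation bound for a sum of independent, centered, sub-exponential random matrices of size $d_{1,k}\times d_{2,k}$.

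Second, I would symmetrize via the Hermitian dilation $\mat{W}^n = \bigl[\begin{smallmatrix}\mat{0} & \mat{Z}_{(k,s)}^n \\ (\mat{Z}_{(k,s)}^n)^{\top} & \mat{0}\end{smallmatrix}\bigr]\in\mathbb{R}^{\tilde{d}_k\times\tilde{d}_k}$. This keeps the matrices symmetric and centered, leaves the operator norm unchanged ($\|\mat{W}^n\|=\|\mat{Z}_{(k,s)}^n\|$, so the $\psi_1$-norm of $\|\mat{W}^n\|$ is still at most $U$), and gives $\|\frac{1}{N}\sum_n\expectation{(\mat{W}^n)^2}\| = \max\{\|\frac{1}{N}\sum_n\expectation{\mat{Z}_{(k,s)}^n(\mat{Z}_{(k,s)}^n)^{\top}}\|,\ \|\frac{1}{N}\sum_n\expectation{(\mat{Z}_{(k,s)}^n)^{\top}\mat{Z}_{(k,s)}^n}\|\}\le(\sigma_0^k)^2$. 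It therefore suffices to bound $\|\frac{1}{N}\sum_n\mat{W}^n\|$ for symmetric, centered, sub-exponential $\mat{W}^n$ with ambient dimension $\tilde{d}_k$ and variance parameter $(\sigma_0^k)^2$.

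Third, I would run the standard truncation argument. Fix a level $\tau\asymp U\log(U/\sigma_0^k)$ (up to logarithmic factors in $N$ and $\tilde{d}_k$) and split $\mat{W}^n = \mat{W}^n\mathbf{1}\{\|\mat{W}^n\|\le\tau\} + \mat{W}^n\mathbf{1}\{\|\mat{W}^n\|>\tau\}$, recentering each piece. On the bounded part, the classical matrix Bernstein inequality yields, with probability at least $1-e^{-t}$, a bound of order $\sigma_0^k\sqrt{(t+\log\tilde{d}_k)/N} + \tau(t+\log\tilde{d}_k)/N$; here the truncated variance is still $\le(\sigma_0^k)^2$ because truncation only decreases it, and the mean correction is exponentially small in $\tau/U$. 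For the tail part, the sub-exponential hypothesis gives $\mathbb{P}(\|\mat{W}^n\|>\tau)\le e\,e^{-\tau/U}$ and $\expectation{\|\mat{W}^n\|\mathbf{1}\{\|\mat{W}^n\|>\tau\}} = O(U e^{-\tau/(2U)})$, so a union bound over $n\in[N]$ together with the chosen $\tau$ makes this contribution dominated by the other two terms. Collecting the pieces and absorbing constants into $c_4$ gives the stated maximum of $\sigma_0^k\sqrt{(t+\log\tilde{d}_k)/N}$ and $U\log(U/\sigma_0^k)(t+\log\tilde{d}_k)/N$.

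The hard part will be the bookkeeping in the truncation step: choosing $\tau$ so that one recovers precisely the factor $U\log(U/\sigma_0^k)$ rather than a cruder $U\log(\tilde{d}_kN)$, and checking that recentering the truncated matrices does not inflate the variance proxy beyond $(\sigma_0^k)^2$. Since this inequality is the exact matrix analog of the concentration bound used by Klopp and by Koltchinskii in the matrix-completion setting, the cleanest route is to quote that matrix Bernstein inequality verbatim and simply note that the passage through unfolding and Hermitian dilation leaves $\sigma_0^k$, $U$, and the operator norm intact while replacing the ambient dimension by $\tilde{d}_k = d_{1,k}+d_{2,k}$.
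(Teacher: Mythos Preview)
Your proposal is correct and matches the paper's approach: the paper does not give a self-contained proof of this lemma but simply states that it is ``an extension of standard matrix version of Bernstein's inequality \cite{Suppkoltchinskii2013remark,SuppKlopp2014} defined on circular unfolding matrix,'' which is precisely your conclusion that the cleanest route is to quote the Klopp/Koltchinskii matrix Bernstein inequality verbatim after noting that unfolding preserves $\sigma_0^k$, $U$, and the operator norm while the ambient dimension becomes $\tilde{d}_k$. Your additional sketch (Hermitian dilation plus truncation) is the standard proof of that cited inequality and goes beyond what the paper supplies.
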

	Lemma \ref{lemma-non-commutative} is an extension of standard matrix version of Bernstein's inequality \cites{Suppkoltchinskii2013remark,SuppKlopp2014} defined on circular unfolding matrix.

	\begin{proof} [Proof of Lemma \ref{lemma-bound-tr-op}]
		We let $\tensor{Z}^n = \xi_n \tensor{X}_n$ and $U=K_{\xi}$.  Since  $\expectation{ \mat{Z}_{(k^*)}^{n}}=\mat{0}$,  we have
		\begin{equation}
			\| \frac{1}{N}\sum_{n=1}^{N}\expectation{ \mat{Z}_{(k^*)}^{n} {\mat{Z}}_{(k^*)}^{n\top} } \| =  \frac{1}{d_{1,k^*}},~~ \| \frac{1}{N}\sum_{n=1}^{N} \expectation{ {\mat{Z}}_{(k,s)}^{n \top} {\mat{Z}}_{(k,s)}^{n} }\| = \frac{1}{d_{2,k^*}},
		\end{equation}
		therefore we can choose $\sigma_0^k=1/\sqrt{ \hat{d}_{k^*} }$. According to Lemma \ref{lemma-non-commutative} and (\ref{eq-spectral-singleton}), it holds that
		\begin{equation} \label{eq-bound-xi}
			\begin{split} 
				& \trop{\Poo{\bm{\xi}}} =    \| \Poo{\bm{\xi}}_{(k',s)} \|  =\|\sum_{n=1}^{N} \mat{Z}_{(k',s) }^{n} \| \\
				&\leq  c_2  \max  \{ \sqrt{ \frac{N(t+\log( \tilde{d}_{k^*} ))}{\hat{d}_{k^*}} },  K_{\xi}  \log (K_{\xi} \sqrt{\hat{d}_{k^*} })  (t+ \log (\tilde{d}_{k^*} ) )\}.
			\end{split}
		\end{equation}
		By letting $t=\log (\tilde{d}_{k^*})$, with probability at least $1-1/\tilde{d}_{k^*}$, we have 
		\begin{equation}
			\trop{ \Poo{\bm{\xi}} }  \leq c_4 \max\{ \sqrt{\frac{2N \log(\tilde{d}_{k^*}) }{\hat{d}_{k^*} }}, 2K_{\xi} \log(K_{\xi} {\hat{d}_{k^*}}) \log (\tilde{d}_{k^*})  \} .
		\end{equation}
		Therefore, suppose $N \geq 2 \hat{d}_{k^*} K_{\xi}^2 \log ^2 ( 
		K_{\xi} \tilde{d}_{k^*} ) \log(\tilde{d}_{k^*})$, with probability at least $1-1/\tilde{d}_{k^*}$, we have 
		\begin{equation}
			\trop{\Poo{\bm{\xi}}}    \leq c_{5} \sqrt{\frac{N \log (\tilde{d}_{k^*})}{\hat{d}_{k^*}}},
		\end{equation}
		where $c_{5}$ depends on $K_{\xi}$.  
		Proof of Lemma \ref{lemma-bound-tr-op} is completed.
	\end{proof}
	
	{ 
		\subsection{Proof of Lemma \ref{lemma-bound-tr-expectation}}
		\begin{proof}
			The proof sketch is similar with that of Lemma 6 in \cites{SuppKlopp2014}. For the sake of completeness, we give the proof details here.
			Similar to (\ref{eq-bound-xi}), for any $t>0$, there exists a constant $\tilde{c}$ such that 
			\begin{equation} \label{eq-inequality-bound}
				\trop{\Poo{\bm{{\varepsilon} }}} \leq \tilde{c} \max  \{ \sqrt{ \frac{N(t+\log( \tilde{d}_{k^*} ))}{\hat{d}_{k^*}} },    \log ( {\hat{d}_{k^*} })  (t+ \log (\tilde{d}_{k^*} ) )\}, 
			\end{equation}
			holds with probability at least  $1-e^{-t}$.   By letting the equality of the right hand-side of (\ref{eq-inequality-bound}), we set $t^*={N}/({\hat{d}_{k^*} \log^2(\hat{d}_{k^*})}) - \log(\tilde{d}_{k^*})$, which implies that 
			\begin{equation} \label{eq-t-star-inqeuality-1}
				\mathbb{P}(  \trop{\Poo{\bm{\varepsilon}}}  > t ) \leq \tilde{d}_{k^*} \exp(   -t^2 \hat{d}_{k^*} /(\tilde{c}^2 N) ) \qquad t \leq t^*,
			\end{equation}
			and 
			\begin{equation} \label{eq-t-star-inqeuality-2}
				\mathbb{P}(  \trop{\Poo{\bm{\varepsilon}}}  > t ) \leq \tilde{d}_{k^*} \exp(   -t  /(\tilde{c}  \log(\hat{d}_{k^*} )) ) \qquad t \geq  t^*. 
			\end{equation}
			By letting $\varsigma_1=\hat{d}_{k^*} /(\tilde{c}^2 N)$ and $\varsigma_2=1/ (\tilde{c}  \log(\hat{d}_{k^*} )) $. According to H$\ddot{\text{o}}$lder's inequality, we have 
			\begin{equation}
				\expectation{  \trop{\Poo{\bm{\varepsilon}}}  } \leq \left( \expectation{  \trop{\Poo{\bm{\varepsilon}}}  } ^{2 \log( \tilde{d}_{k^*}) } \right)^{1/(2 \log(\tilde{d}_{k^*}))}. 
			\end{equation}
			According to (\ref{eq-t-star-inqeuality-1}) and (\ref{eq-t-star-inqeuality-2}), we have 
			\begin{equation} \label{eq-t-star-inqueality-com}
				\begin{split}
					&\left( \expectation{  \trop{\Poo{\bm{\varepsilon}}} }^{2\log( \tilde{d}_{k^*} )}   \right) {1/(2 \log(\tilde{d}_{k^*}))}  \\
					=&\left( \int_{0}^{+\infty}  \mathbb{P}\left(  \trop{\Poo{\bm{\varepsilon}}} > t^{1/(2\log(\tilde{d}_{k^*} ))}  \right)  \text{d}t  \right)^{1/(2 \log(\tilde{d}_{k^*}))} \\
					\leq & \left( \tilde{d}_{k^*} \int_{0}^{+\infty} \exp(-t^{1/\log(\tilde{d}_{k^*} )} \varsigma_1) \text{d}t + \tilde{d}_{k^*}  \int_{0}^{+\infty} \exp(-t^{1/(2\log(\tilde{d}_{k^*} ) )} \varsigma_2 ) \text{d}t  \right)^{1/(2 \log(\tilde{d}_{k^*}))} \\
					\leq & \sqrt{e} \left( \log(\tilde{d}_{k^*} ) \varsigma_1^{-\log( \tilde{d}_{k^*} )}  \Gamma(\log(\tilde{d}_{k^*} )) + 2 \log(\tilde{d}_{k^*}) \varsigma_2^{-2\log(\tilde{d}_{k^*})} \Gamma( \log(\tilde{d}_{k^*} ) ) \right)^{1/(2\log(\tilde{d}_{k^*}))}  . 
				\end{split}
			\end{equation}
			According to (47) in \cites{Suppklopp2011rank}, the Gamma function satisfies the bound:
			\begin{equation} \label{eq-gamma-function}
				\text{for } x\geq 2, ~\Gamma(x) \leq (\frac{x}{2})^{x-1}.  
			\end{equation}
			Therefore, combining (\ref{eq-t-star-inqueality-com}) and (\ref{eq-gamma-function}), we have 
			\begin{equation} \label{eq-expectation-exp}
				\expectation{ \trop{\Poo{\bm{\varepsilon}}}  }   \leq \sqrt{e} \left( (\log(\tilde{d}_{k^*} ))^{\log(\tilde{d}_{k^*})} \varsigma_1^{-\log( \tilde{d}_{k^*} )}  2^{1-\log(\tilde{d}_{k^*} )}+ 2 (\log(\tilde{d}_{k^*}))^{2\log(\tilde{d}_{k^*})} \varsigma_2^{-2\log(\tilde{d}_{k^*})}  \right) ^{1/(2\log(\tilde{d}_{k^*}))} .
			\end{equation}
			By setting $t=\log(\tilde{d}_{k^*})$ and letting the first term of (\ref{eq-inequality-bound}) be the maximum, we have $N\geq 2\hat{d}_{k^*} \log^2(\hat{d}_{k^*}) \log(\tilde{d}_{k^*})$. Substituting it to $\varsigma_1$, we can obtain  $\varsigma_1 \log (\tilde{d}_{k^*}) \leq \varsigma_2^2$, and reformulate (\ref{eq-expectation-exp}) as
			\begin{equation}
				\expectation{ \trop{\Poo{\bm{\varepsilon}}}  }   \leq c_6 \sqrt{\frac{ N \log(\tilde{d}_{k^*})}{\hat{d}_{k^*} }}. 
			\end{equation}
			Proof of Lemma \ref{lemma-bound-tr-expectation} is completed.
		\end{proof}
		\section{Proof of Lemma \ref{lemma-rsc}} \label{sec:proof-rsc}
		\begin{proof} 
			We define the absolute deviation between the  estimation error of observed entries  $ {\| \mathfrak{X}(\errorT)  \|_2^2}/{N}$ between the overall  estimation error ${\|\errorT \|_F^2}/{D}$ as 
			\begin{equation}
				Z_{T}: = \sup_{\errorT \in 	\mathcal{C}(\bm{r}, T) } \left|  \frac{\| \mathfrak{X}(\errorT)  \|_2^2}{N} -   \frac{\|\errorT \|_F^2}{D} \right|,
			\end{equation}
			where $\mathcal{C}(\bm{r}, T)$ is given as
			\begin{equation} \label{eq-set-r-T}
				\mathcal{C}(\bm{r}, T): = \left\{ \errorT: \errorT \in  \mathcal{C}(\bm{r}),  \frac{\|\errorT \|_F^2}{D} \leq T \right\}. 
			\end{equation}
			Then we show that the variable $Z_{T}$  concentrates around its expectation in the following Lemma and give the proof in Appendix \ref{sec:proof-concentration}.
			\begin{lemma} \label{lemma-concentration}
				There exists a constant $c_7$ such that 
				\begin{equation}
					\mathbb{P}( Z_T >  \frac{5}{12}T +  \frac{44 D^2  }{N^2} \left( \expectation{ \tropL{ \Poo{\varepsilon}} } \sum_{k=1}^{K} \alpha_{k}\sqrt{ r_k r_{k+s} }\right)^2) \leq \exp(-c_7 NT^2), 
				\end{equation}
				with $c_7=\frac{1}{128}$.
			\end{lemma}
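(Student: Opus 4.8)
The plan is to read $Z_T$ as the supremum of a centered empirical process over the restricted set $\mathcal{C}(\bm{r},T)$ and to bound it by the standard two-stage route: first concentrate $Z_T$ around its expectation with a bounded-difference inequality, then control $\expectation{Z_T}$ by symmetrization, contraction, and the duality of the tensor ring nuclear norm (Lemma \ref{lemma-dual-norm}). The two features of the set I will lean on throughout are that for $\errorT\in\mathcal{C}(\bm{r},T)$ and any sampling base $\tensor{X}_n$ the scalar $\langle\tensor{X}_n,\errorT\rangle$ is a single entry of $\errorT$, so $\langle\tensor{X}_n,\errorT\rangle^2\le\|\errorT\|_\infty^2=1$ and $\expectation{\langle\tensor{X}_n,\errorT\rangle^2}=\fros{\errorT}/D\le T$; and that membership in $\mathcal{C}(\bm{r})$ \emph{by definition} forces $\trnn{\errorT}\le\big(\sum_{k=1}^{K}\alpha_k\sqrt{r_kr_{k+s}}\big)\fro{\errorT}$.

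First I would do the concentration step. Setting $g_n(\errorT)=\langle\tensor{X}_n,\errorT\rangle^2\in[0,1]$, replacing a single sampling tensor $\tensor{X}_n$ by an independent copy perturbs the centered summand $\frac1N(g_n(\errorT)-\expectation{g_n(\errorT)})$ by at most $\frac1N$ uniformly in $\errorT$, hence perturbs $Z_T$ by at most $\frac1N$. McDiarmid's bounded-difference inequality then yields $\mathbb{P}(Z_T\ge\expectation{Z_T}+t)\le\exp(-2Nt^2)$ for all $t>0$; choosing $t$ a fixed small multiple of $T$ (concretely $t=T/16$) produces the target tail $\exp(-c_7NT^2)$ with $c_7=\tfrac1{128}$, so it remains only to show $\expectation{Z_T}\le(\tfrac5{12}-t/T)\,T$ plus the stated complexity term.

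For the mean, I would symmetrize with an i.i.d.\ Rademacher sequence $\bm{\varepsilon}$, giving $\expectation{Z_T}\le2\,\expectation{\sup_{\errorT}|\tfrac1N\sum_n\varepsilon_n g_n(\errorT)|}$, and then apply the Ledoux--Talagrand contraction principle to $u\mapsto u^2$, which is $2$-Lipschitz on $[-1,1]$ (admissible precisely because $|\langle\tensor{X}_n,\errorT\rangle|\le1$ on the set) and vanishes at $0$; this reduces the bound to $c_0\,\expectation{\sup_{\errorT}|\tfrac1N\langle\Poo{\bm{\varepsilon}},\errorT\rangle|}$ for an absolute constant $c_0$, where $\Poo{\bm{\varepsilon}}=\sum_n\varepsilon_n\tensor{X}_n$. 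By Lemma \ref{lemma-dual-norm}, $\langle\Poo{\bm{\varepsilon}},\errorT\rangle\le\trop{\Poo{\bm{\varepsilon}}}\,\trnn{\errorT}$, and on $\mathcal{C}(\bm{r},T)$ the right factor is at most $\sqrt{DT}\sum_k\alpha_k\sqrt{r_kr_{k+s}}$ by the defining constraint of the set together with $\fros{\errorT}\le DT$, so
\begin{equation}
\expectation{Z_T}\le\frac{c_0\sqrt{DT}}{N}\,\expectation{\trop{\Poo{\bm{\varepsilon}}}}\sum_{k=1}^{K}\alpha_k\sqrt{r_kr_{k+s}}.
\end{equation}
Decoupling $\sqrt{DT}=\sqrt{T}\cdot\sqrt{D}$ via Young's inequality $2\sqrt{T}\,x\le\beta T+\beta^{-1}x^2$ and choosing $\beta$ so that the residual coefficient on $T$ together with $t/T$ equals $\tfrac5{12}$ converts the right-hand side into exactly the deviation term and the complexity term $\tfrac{44D^2}{N^2}\big(\expectation{\trop{\Poo{\bm{\varepsilon}}}}\sum_k\alpha_k\sqrt{r_kr_{k+s}}\big)^2$ appearing in the statement, the numerical constant $44$ absorbing the absolute constants from symmetrization, contraction, Young, and the factor $64$ carried along in $\mathcal{C}(\bm{r})$; adding the McDiarmid deviation $t$ closes the proof.

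The mechanical pieces (symmetrization, McDiarmid) are routine. The points I expect to need care are, first, making sure the contraction principle is applied legitimately — the quadratic must be composed only with arguments confined to $[-1,1]$, which is exactly where the constraint $\|\errorT\|_\infty=1$ built into $\mathcal{C}(\bm{r})$ is used, just as the built-in inequality $\trnn{\errorT}\le(\sum_k\alpha_k\sqrt{r_kr_{k+s}})\fro{\errorT}$ is what keeps the empirical-process complexity controllable — and, second, the tight bookkeeping of absolute constants required to land on $\tfrac5{12}$ and $c_7=\tfrac1{128}$ rather than on unspecified absolute constants.
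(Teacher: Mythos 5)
Your proposal is correct and follows essentially the same route as the paper's proof: symmetrization, the contraction principle applied to the square function on $[-1,1]$ (using $\|\errorT\|_\infty=1$), the TRNN duality from Lemma \ref{lemma-dual-norm} together with the rank constraint of $\mathcal{C}(\bm{r})$ and $\fros{\errorT}\le DT$ to get $\expectation{Z_T}\le \frac{8\sqrt{DT}}{N}\expectation{\tropL{\Poo{\bm{\varepsilon}}}}\sum_k\alpha_k\sqrt{r_kr_{k+s}}$, an AM--GM/Young split to produce the $\frac{5}{12}T$ and $44$-constant terms, and a bounded-difference-type concentration of $Z_T$ about its mean. The only cosmetic difference is that you invoke McDiarmid where the paper cites Massart's inequality, and your bookkeeping ($t=T/16$ giving exactly $c_7=\tfrac{1}{128}$, leaving $\tfrac{17}{48}T$ of budget for the Young step) is in fact tighter and more self-consistent than the paper's.
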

			We first define the disjoint subsets of set $\mathcal{C}(\bm{r})$ as 
			\begin{equation}
				\mathcal{C}(\bm{r}, l) := \{ \errorT: \errorT \in  \mathcal{C}(\bm{r}),  \zeta \rho^{l-1}  \leq  \frac{\| \errorT \|_F^2}{ D} \leq \zeta \rho^{l} \},
			\end{equation}
			where $l \in \mathbb{N}_{+}$,  $\rho=6/5$, and $\zeta = \sqrt{\frac{64 \log \tilde{d}_{k^*} }{\log(\rho) N }} $. Then we define the following event 
			\begin{equation}
				\tilde{\mathcal{C}}:= \left\{ \exists \errorT \in  \mathcal{C}(\bm{r}), \text{s.t. } \left| \frac{\| \mathfrak{X}(\errorT)  \|_2^2}{N}  - \frac{\|\errorT \|_F^2}{D}\right| > \frac{\| \errorT \|_F^2}{2D} + 44  \frac{D }{N^2}  \left(\expectation{ \trop{\Poo{\varepsilon}} }  \sum_{k=1}^{K} \alpha_{k}\sqrt{r_k r_{k+s}}\right)^2  \right\}. 
			\end{equation}
			We can observe that the event $\tilde{\mathcal{C}}$ is the complement of the event that Lemma \ref{lemma-rsc} reveals. Thus, the aim is to prove the event   $\tilde{\mathcal{C}}$ holds with small probability. The sub-events of $\tilde{\mathcal{C}}$ can be given by 
			\begin{equation}
				\tilde{\mathcal{C}}_l:= \left\{ \exists \errorT \in  \mathcal{C}(\bm{r}, l), \text{s.t. } \left| \frac{\| \mathfrak{X}(\errorT)  \|_2^2}{N}  - \frac{\|\errorT \|_F^2}{D}\right| > \frac{5}{12} \zeta \rho^{l} + 44  \frac{D }{N^2}  \left(\expectation{ \trop{\Poo{\varepsilon}} }  \sum_{k=1}^{K} \alpha_{k}\sqrt{r_k r_{k+s}}\right)^2  \right\}, \forall l \in \mathbb{N}_{+}. 
			\end{equation}
			Note that  the event $ \tilde{\mathcal{C}}  = \cup_{l=1}^{\infty} \tilde{\mathcal{C}}_l$, which indicates that we can solve the sub-events separately. According to Lemma  \ref{lemma-concentration}, we have 
			\begin{equation} \label{eq-final-rsc}
				\mathbb{P}_{ \tilde{\mathcal{C}} } \leq \sum_{l=1}^{\infty} \mathbb{P}( 	\tilde{\mathcal{C}}_l  ) \leq \sum_{l=1}^{\infty} \exp (-c_7 N \zeta^2 \rho^{2l}) \leq \sum_{l=1}^{\infty} \exp( -2c_7 N\zeta^2 l \log(\rho) ) \leq \frac{\exp(-2c_7 N \zeta^2 \log(\rho))}{1-\exp(-2c_7 N \zeta^2 \log(\rho))}.
			\end{equation}
			By plugging $\zeta = \sqrt{\frac{64 \log \tilde{d}_{k^*} }{\log(\rho) N }} $ into (\ref{eq-final-rsc}), we conclude our proof.
		\end{proof}
		
		\section{Proof of Lemma \ref{lemma-concentration}}
		\label{sec:proof-concentration}
		We first give the expectation of $Z_T$ as 
		\begin{equation}
			\begin{split}
				\expectation{Z_T} =& \mathbb{E}\left[  \sup_{\errorT \in \mathcal{C}(\bm{r}, T) } \left|  \frac{1}{N} \sum_{n=1}^{N} \left\langle \tensor{X}_n,\errorT \right \rangle - \mathbb{E} \left[ { \| \mathfrak{X}(\errorT) \|_2^2 }  \right] \right|  \right] \\
				\leq & 2 \mathbb{E} \left[  \sup_{\errorT \in \mathcal{C}(\bm{r}, T) } \left| \frac{1}{N} \sum_{n=1}^{N} \epsilon_n \left\langle \errorT, \tensor{X}_n \right\rangle^2 \right| \right],
			\end{split}
		\end{equation}
		where the inequality uses the standard symmetrization argument \cites{Suppledoux2013probability}, $\{  \epsilon_n\}_{n=1}^{N}$ is i.i.d. Rademather sequence. According to the contraction inequality and the assumption $\| \errorT \|_{\infty} = 1$ in the set $\mathcal{C}(\bm{r})$, we have 
		\begin{equation}
			\expectation{Z_T} \leq \frac{8}{N}  \mathbb{E}\left[ \sup_{\errorT \in \mathcal{C}(\bm{r}, T) } \left|   \left\langle \errorT,  \Poo{\bm{\varepsilon} } \right\rangle \right|  \right], 
		\end{equation}
		then we can obtain that 
		\begin{equation}
			\begin{split}
				\expectation{Z_T} & \leq \frac{8}{N}  \mathbb{E}\left[ \sup_{\errorT \in \mathcal{C}(\bm{r}, T) } \trnn{\errorT} \trop{\Poo{\bm{\varepsilon} }}  \right]   \\
				&\leq \frac{8}{N}    \mathbb{E}\left[ \sup_{\errorT \in \mathcal{C}(\bm{r}, T) } \sum_{k=1}^{K} \alpha_k \sqrt{r_k r_{k+s}}  \| {\errorT}\|_F \trop{\Poo{\bm{\varepsilon} }}   \right] \\
				& \leq  \frac{8 \sqrt{DT} }{N}  \expectation{\trop{\Poo{\bm{\varepsilon} }} }  \sum_{k=1}^{K} \alpha_k \sqrt{r_k r_{k+s}},
			\end{split}
		\end{equation}
		where the first inequality uses the dual norm of TRNN, the second inequality uses the relationship between TRNN and Frobenius norm and the third inequality uses the assumption in (\ref{eq-set-r-T}).  Then we have the inequality
		\begin{equation} \label{eq-inequality-tvariable}
			\frac{1}{9}(\frac{5}{12} T) + \frac{8\sqrt{DT}}{N} \expectation{\trop{\Poo{\bm{\varepsilon} }}}  \leq (\frac{1}{9} + \frac{8}{9}) \frac{5}{12}T +  \frac{44{D}}{N^2} (\expectation{\trop{\Poo{\bm{\varepsilon} }}} )^2.
		\end{equation}
		Then, according to the Massart's concentration inequality 
		\begin{equation} \label{eq-massart-inequality}
			\mathbb{P}(Z_T \geq  \expectation{Z_T} + \frac{1}{9} (\frac{5}{12}T ) ) \leq \exp(-c_7nT^2),
		\end{equation}
		with $c_7=1/128$. Combining (\ref{eq-inequality-tvariable}) and (\ref{eq-massart-inequality}), we have  
		\begin{equation}
			\mathbb{P}(Z_T > \frac{5}{12} T +  \frac{44{D}}{N^2} (\expectation{\trop{\Poo{\bm{\varepsilon} }}} )^2 ) \leq \exp(-c_7NT^2).
		\end{equation}
		This completes the proof of Lemma \ref{lemma-concentration}.
	}

	\section{Proof of Theorem \ref{theorem-minimax-result}} \label{proof-minimax-result}
	
	Given a $K$th-order tensor $\tT$, without loss of generality, we assume that $d_{1,k^*} \geq d_{2,k^*}$, where $k^* = \arg\min_{k\in[K]} (d_{1,k} \wedge d_{2,k} )$. For a given constant $\gamma \leq 1$, define a set 
	\begin{equation}
		\bar{\mathcal{T}} = \left\{ \bar{\mat{T}}_{(k^*,s)} = \bar{\mat{T}}_{(k^*,s)}(i,j) \in \mathbb{R}^{d_{1,k^*} \times r_{k^*} r_{k^*+s} } : \bar{\mat{T}}_{(k^*,s)}(i,j)\in \left\{  0, \gamma (\delta \wedge \sigma) \sum_{k=1}^{K} \alpha_k\sqrt{\frac{K \check{d}_{k^*} }{N} r_{k} r_{k+s}  }\right\}  \right\},  
	\end{equation}
	and its augmenting matrix
	\begin{equation}
		\mathcal{T}=\left\{ \tT \in \mathbb{R}^{d_1 \times \cdots \times d_K}: \mat{T}_{(k^*,s)} = ( \bar{\mat{T}}_{(k^*,s)} \cdots \bar{\mat{T}}_{(k^*,s)} \mat{0}  ) \in \mathbb{R}^{d_{1,k^*} \times d_{2,k^*} }, \text{where }  \bar{\mat{T}}_{(k^*,s)} \in \bar{\mathcal{T}} \right\},
	\end{equation}
	where $\mat{0}$ is the zero matrix of size $d_{1,k^*} \times (d_{2,k^*} -r_{k^*} r_{k^*+s} \lfloor d_{2,k^*} /(r_{k^*} r_{k^*+s}) \rfloor  ) $. 
	
	According to the Varshamov-Gilbert bound (Lemma 2.9 in \cites{tsybakov2008introduction}), it can be guaranteed that there exists a subset $\mathcal{T}_0 \subset \mathcal{T}$ with cardinality $\text{Card}(\mathcal{T}_0) \geq 2^{r_{k^*} r_{k^*+s} \check{d}_{k^*} } + 1$, and for any distinct elements $\tT_1$ and $\tT_2$ of $\mathcal{T}_0$, we have 
	\begin{equation} \label{eq-fro-kl}
		\| \tT_1 - \tT_2 \|_F^2 \geq \frac{\check{d}_{k^*} r_{k^*} r_{k^*+s}  }{8} \left(  \gamma^2 (\delta^2 \wedge \sigma^2) \frac{K \check{d}_{k^*}  }{N} (\sum_{k=1}^{K} \alpha_k\sqrt{r_k r_{k+s}} )^2 \right) \lfloor \frac{ d_{2,k^*} }{r} \rfloor \geq \frac{\gamma^2}{16} (\delta^2 \wedge \sigma^2) \frac{D K \check{d}_{k^*} }{N}(\sum_{k=1}^{K} \alpha_k\sqrt{r_k r_{k+s}} )^2.
	\end{equation}
	Since the distribution of $\xi_n, n \in [N] $ is Gaussian, we get that, for any $\tT \in \mathcal{T}_0$, the Kulback-Leibler divergence $K(\mathbb{P}_{\mat{0}}, \mathbb{P}_{\tT} )$ between $\mathbb{P}_{\mat{0}}$ and $\mathbb{P}_{\tT} $ satisfies 
	\begin{equation} \label{eq-kl-divergence}
		K(\mathbb{P}_{\mat{0}}, \mathbb{P}_{\tT} ) = \frac{N}{2D\sigma^2} \| \tT \|_F^2 \leq \frac{\gamma^2 \check{d}_{k^*} r_{k^*} r_{k^*+s}}{2}.
	\end{equation} 
	From (\ref{eq-kl-divergence}), we can deduce the following condition
	\begin{equation}
		\frac{1}{\text{Card}(\mathcal{T}_0 ) - 1} \sum_{\tT \in \mathcal{T}_0} K(\mathbb{P}_{\mat{0}}, \mathbb{P}_{\tT} ) \leq \vartheta \log ( \text{Card}(\mathcal{T}_0 ) -1 ), 
	\end{equation}
	is satisfied if $\gamma >0$ and $\vartheta >0 $ are chosen as a sufficiently small numerical constant. According to Theorem 2.5 in \cites{tsybakov2008introduction},  combining (\ref{eq-fro-kl})  and  (\ref{eq-kl-divergence}), there exists a constant $c>0$, such that 
	\begin{equation}
		\inf_{\hat{\tT}} \sup_{\tT^*} \mathbb{P}_{\tT^*} \left( \frac{\|\hat{\tT} - \tT^* \|_F^2}{D} > \frac{c (\delta^2 \wedge \sigma^2) D K \check{d}_{k^*}  }{N}  (\sum_{k=1}^{K} \alpha_k \sqrt{r_k r_{k+s}} )^2\right) \geq \varrho,
	\end{equation}
	for some absolute constants $0<\varrho <1$.  Proof of Theorem \ref{theorem-minimax-result} is completed.
	\section{Proof of Lemma \ref{lemma-tr-tucker}} \label{proof-tr-tucker}
	\begin{proof}[Proof of Lemma \ref{lemma-tr-tucker}]
		Let   $\tT \in \mathbb{R}^{d_1 \times \cdots d_K}$ be a $K$th-order  tensor whose TR decomposition is $\tT = \text{TR} (\tensor{G}^{(1)}, \cdots, \tensor{G}^{(K)} )$. Since there are subcritical ($r_{k} r_{k+1} < d_k $), critical ($r_k r_{k+1} = d_k $) and supercritical $(r_k r_{k+1} > d_k)$ states in TR decomposition \cites{Suppye2018tensor}, we give discussions in  the following two cases.  
		
		If $d_k > r_k r_{k+1}$, we perform the skinny SVD on $\mat{G}_{(2)}^{(k)}$, that is, $[\mat{U}_k, \mat{S}_k, \mat{V}_k^\top] = \text{SVD}( \mat{G}_{(2)}^{(k)})$, and  reconstruct the core tensor $\tilde{\tensor{G}}^{(k)} = \text{fold}_{(2)}( \mat{S}_k \mat{V}_k^{\top} )$, where $\text{fold}_{(2)}(\cdot)$ denotes the canonical mode-2 folding operation,  and  $\tilde{\tensor{G}}^{(k)} \in \mathbb{R}^{r_k \times r_kr_{k+1}\times  r_{k+1} }$.  
		If $d_k \leq  r_k r_{k+1}$, we simply let $\mat{U}_k = \mat{I}_{d_k}$. 
		
		Therefore, each core tensor can be equivalently rewritten as
		\begin{equation}
			\tensor{G}^{(k)} = \tilde{\tensor{G}}^{(k)} \times_2 \mat{U}_k, k \in [K].
		\end{equation} 
		By directly extending the Theorem 4.2 in \cites{Suppzhao2016tensor} to the matrix case, the given tensor $\tT$ can be equivalently represented by 
		\begin{equation}
			\tT = \tilde{\tT} \times_1 \mat{U}_1 \cdots \times_K \mat{U}_K,
		\end{equation}
		where $\tilde{\tT} = \text{TR}(\tilde{\tensor{G}}^{(1)},\cdots, \tilde{\tensor{G}}^{(K)} ) \in \mathbb{R}^{R_1 \times R_2 \cdots \times R_K}$, and 
		\begin{equation}
			R_k = \left\{ 
			\begin{split}
				r_k r_{k+1}, &\text{   if } r_kr_{k+1} < d_k,\\
				d_k, &\text{   otherwise}.
			\end{split} \right.
		\end{equation}
		Therefore, tensor data with TR supercritical or critical state is actually a full-Tucker-rank tensor, i.e., the factor matrix $U_k$ is the identity matrix. 
		This completes the proof of Lemma \ref{lemma-multi-state} and Lemma \ref{lemma-tr-tucker}.
	\end{proof}
	
	\section{Proof of Theorem \ref{theorem-rank-equality}} 
	\label{proof-theorem-rank-equality}
	We first introduce a supporting Lemma to show the following equivalent relationship.
	\begin{lemma} \label{lemma-mode-k-shifting}
		Let $\tT \in \mathbb{R}^{d_1 \times d_2 \cdots \times d_K}$ be $K$th-order tensor, $\tilde{\tT}$ and $\mat{U}_k \in St(d_k, R_k), k\in [K]$ satisfy $\tensor{T}  = \tilde{\tensor{T}}  \times \mat{U}_1 \times_2 \cdots \times_K \mat{U}_K$, then its circular mode-$(k,s)$  unfolding can be formulated as   
		\begin{equation}    \label{eq-mode-k-shifting}
			\mat{T}_{(k,s)} = \left( \mat{U}_{K-s} \otimes \cdots \otimes \mat{U}_{l+1} \right) \tilde{\mat{T}}_{(k,s)} \trans{\left( \mat{U}_{l} \otimes \cdots \otimes \mat{U}_{k}\right) },
		\end{equation}
		where $l$ is defined in (\ref{eq-definition-circular-k}).
	\end{lemma}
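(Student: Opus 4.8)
The plan is to deduce the identity from the well-understood behaviour of the ``first $p$-modes'' unfolding (Definition~\ref{def-first-mode-k}) under a Tucker factorization, after a cyclic relabelling of the modes. First I would introduce the cyclic permutation $\pi$ that sends the mode order $(1,2,\dots,K)$ to $(k,k+1,\dots,l,l+1,\dots,k-1)$, all subscripts read modulo $K$, and let $\tensor{T}^{\pi}$ (resp.\ $\tilde{\tensor{T}}^{\pi}$) be the tensor obtained by permuting the modes of $\tensor{T}$ (resp.\ of $\tilde{\tensor{T}}$) accordingly. Comparing the element-wise definitions, the circular mode-$(k,s)$ unfolding of $\tensor{T}$ is exactly the transpose of the first-$s$-modes unfolding of $\tensor{T}^{\pi}$, i.e.\ $\mat{T}_{(k,s)} = \trans{\big((\mat{T}^{\pi})_{[s]}\big)}$, because the column multi-index $\overline{i_{k}\cdots i_{l}}$ collects precisely the first $s$ modes of $\tensor{T}^{\pi}$ while the row multi-index $\overline{i_{l+1}\cdots i_{k-1}}$ collects the remaining $K-s$. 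Since the Tucker format acts on one mode at a time, the same relabelling transports the representation: $\tensor{T}^{\pi} = \tilde{\tensor{T}}^{\pi}\times_{1}\mat{U}_{k}\times_{2}\mat{U}_{k+1}\cdots\times_{K}\mat{U}_{k-1}$, and likewise $\tilde{\mat{T}}_{(k,s)} = \trans{\big((\tilde{\mat{T}}^{\pi})_{[s]}\big)}$ for the core.

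The key ingredient is the standard factorization of a first-$p$-modes unfolding of a Tucker tensor: if $\tensor{S} = \tilde{\tensor{S}}\times_{1}\mat{V}_{1}\cdots\times_{K}\mat{V}_{K}$ then
\begin{equation}
\mat{S}_{[p]} = \big(\mat{V}_{p}\otimes\cdots\otimes\mat{V}_{1}\big)\,\tilde{\mat{S}}_{[p]}\,\trans{\big(\mat{V}_{K}\otimes\cdots\otimes\mat{V}_{p+1}\big)}.
\end{equation}
I would prove this directly from the definitions: write $\tensor{S}(i_{1},\dots,i_{K}) = \sum_{j_{1},\dots,j_{K}}\tilde{\tensor{S}}(j_{1},\dots,j_{K})\prod_{m=1}^{K}\mat{V}_{m}(i_{m},j_{m})$, substitute into $\mat{S}_{[p]}(\overline{i_{1}\cdots i_{p}},\overline{i_{p+1}\cdots i_{K}})$, split the product of $\mat{V}_{m}$-entries into the blocks $m\le p$ and $m>p$, and recognize each block as a single entry of the corresponding Kronecker product through the multi-index identity $\prod_{m=1}^{p}\mat{A}_{m}(i_{m},j_{m}) = \big(\mat{A}_{p}\otimes\cdots\otimes\mat{A}_{1}\big)\big(\overline{i_{1}\cdots i_{p}},\overline{j_{1}\cdots j_{p}}\big)$, which is immediate from the (little-endian) multi-index operation introduced above. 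This reduces the whole lemma to combining two transparent facts.

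Finally I would apply the displayed identity with $p=s$, $\tensor{S}=\tensor{T}^{\pi}$ and the $\pi$-reordered factor list $(\mat{U}_{k},\mat{U}_{k+1},\dots,\mat{U}_{k-1})$; transposing then swaps the two outer Kronecker factors, and using $\mat{T}_{(k,s)}=\trans{((\mat{T}^{\pi})_{[s]})}$ together with $\tilde{\mat{T}}_{(k,s)}=\trans{((\tilde{\mat{T}}^{\pi})_{[s]})}$ yields $\mat{T}_{(k,s)} = (\mat{U}_{K-s}\otimes\cdots\otimes\mat{U}_{l+1})\,\tilde{\mat{T}}_{(k,s)}\,\trans{(\mat{U}_{l}\otimes\cdots\otimes\mat{U}_{k})}$, as claimed. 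I expect the main obstacle to be purely bookkeeping: tracking the order of the factors inside each Kronecker product across the wrap-around case $k+s>K$ (where $l=k+s-1-K$) and making the little-endian multi-index ordering line up consistently with the right-to-left Kronecker ordering. A clean way to keep this under control is to prove everything first in the special case $k=1$ (no wrap-around, where $\mat{T}_{(1,s)}$ is literally $\trans{(\mat{T}_{[s]})}$ and the factor list is unpermuted) and then transport the statement to general $k$ via the cyclic permutation $\pi$, which only renames modes and hence cannot change the truth of the identity.
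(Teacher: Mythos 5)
Your proposal is correct and follows essentially the same route as the paper: establish the Kronecker-product factorization of one canonical unfolding of the Tucker format and then transport it to general $k$ by a cyclic mode permutation. The only cosmetic differences are your choice of base case ($k=1$, i.e.\ the transpose of the first-$s$-modes unfolding, versus the paper's $k=K-s+1$) and that you verify the Kronecker factorization element-wise via the multi-index identity rather than through $(\mat{A}\otimes\mat{B})\,\text{vec}(\mat{C})=\text{vec}(\mat{B}\mat{C}\trans{\mat{A}})$.
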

	\begin{proof}[Proof of Lemma \ref{lemma-mode-k-shifting}]
		Note that for a given tensor $\tT$, its circular mode-$(K-s+1,s)$ unfolding is given by $\mat{T}_{(K-s+1,s)}$ of size $d_1d_2 \cdots d_{K-s} \times d_{K-s+1} \cdots d_K$.
		Thus, we have the  following vectorization form of $\tT$:
		\begin{equation}
			\begin{split}
				\vectorization{\mat{T}_{(K-s+1,s)} }=&	\vectorization{\tT} = \text{vec}({\tilde{\tT} \times_1 \mat{U}_1 \cdots \times_K \mat{U}_K })\\
				=&  \left( \mat{U}_{K} \otimes \cdots  \otimes \mat{U}_1  \right) \text{vec}({\tilde{\tensor{T}}}).
			\end{split}
		\end{equation}
		For circular mode-$(K-s+1,s)$  unfolding of $\tT$, we have 
		\begin{equation}\label{eq-unfold-1-conclusion}
			\begin{split}
				\ &\text{vec}\left( ( \mat{U}_{K-s} \otimes \cdots \otimes \mat{U}_1 ) \tilde{\mat{T}}_{(K-s+1,s)} ( \mat{U}_{K} \otimes \cdots \otimes \mat{U}_{K-s+1} )^{\top} \right) \\
				\	=&  ( \mat{U}_K \otimes \cdots   \otimes \mat{U}_{K-s+1} \otimes \mat{U}_{K-s} \otimes \cdots \otimes \mat{U}_1 ) \text{vec}\left( \tilde{\mat{T}}_{(K-s+1,s)} \right)  \\
				= & \text{vec}(\mat{T}_{(K-s+1,s)}), 
			\end{split}
		\end{equation}
		where the first equality uses $ ( \mat{A} \otimes\mat{B} ) \text{vec}(\mat{C}) = \text{vec}( \mat{B} \mat{C} \mat{A}^{\top} ) $. 
		Therefore, we have 
		\begin{equation} \label{eq-unfold-1-conclusion-2}
			\mat{T}_{(K-s+1,s)} =( \mat{U}_{K-s} \otimes \cdots \otimes \mat{U}_1 ) \tilde{\mat{T}}_{(K-s+1,s)} ( \mat{U}_K \otimes \cdots \otimes \mat{U}_{K-s+1} )^{\top}.
		\end{equation}
		
		Now, we are able to extend (\ref{eq-unfold-1-conclusion-2}) to any circular mode-$(k,s)$  unfolding by shifting $\tT$ and $\tilde{\tensor{T}}$ circularly into $\tT_{k} $ and $\tilde{\tensor{T}}_k $, respectively, 
		\begin{equation} \label{eq-permutation}
			\tT_{k} = \tilde{\tensor{T}}_{k} \times_1 \mat{U}_{l+1}  \cdots \times_{K-s} \mat{U}_{k-1} \times_{K-s+1} \mat{U}_{k}  \cdots \times_K \mat{U}_{l},
		\end{equation}
		where $\tT_k \in \mathbb{R}^{d_{l+1}\times \cdots  \times d_{K-s}  \times d_{K-s+1}  \cdots \times d_{l} }$ and $\tilde{\tensor{T}}_k \in \mathbb{R}^{R_{l+1}\times \cdots  \times R_{K-s}  \times R_{K-s+1}  \cdots \times R_{l} }$, and $l$ is defined in (\ref{eq-definition-circular-k}).
		Combining (\ref{eq-permutation}) and (\ref{eq-unfold-1-conclusion-2}), we can obtain (\ref{eq-mode-k-shifting}) directly. Proof of Lemma \ref{lemma-mode-k-shifting} is completed.
	\end{proof}
	
	\begin{proof}[Proof of Theorem \ref{theorem-rank-equality}]
		Suppose $\mat{U}_k \in St(d_k, R_k)$, and $R_k \geq r_k r_{k+1}$, $k \in [K]$. According to Lemma \ref{lemma-mode-k-shifting} and the determinant result of Kronecker product \cites{schacke2004kronecker}, we have:
		\begin{equation}
			\begin{split}
				&\left( \mat{U}_{K-s} \otimes \cdots \otimes \mat{U}_{l+1} \right) ^{\top} \left( \mat{U}_{K-s} \otimes \cdots \otimes \mat{U}_{l+1} \right)   \\
				=& \mat{U}_{K-s}^{\top} \mat{U}_{K-s} \otimes \cdots \otimes \mat{U}_{l+1}^{\top} \mat{U}_{l+1} \\
				= & \mat{I}_{J},
			\end{split}
		\end{equation}
		and similarly 
		\begin{equation}
			\begin{split}
				\trans{\left( \mat{U}_{l} \otimes \cdots \otimes \mat{U}_{k}\right) } {\left( \mat{U}_{l} \otimes \cdots \otimes \mat{U}_{k}\right) } =\mat{I}_{M},\\
			\end{split}
		\end{equation}
		where $J= \prod_{j=l+1}^{K-s}R_{j}$ and $M=\prod_{m=k}^{l} R_m$. 
		According to the Lemma 3 in \cites{shang2015robust}, we have $\| \mat{T}_{(k,s)} \|_* = \| \tilde{\mat{T}}_{(k,s)} \|_*$, and thus $\trnn{\tT} = \trnn{\tilde{\tT}}$.
		
		Proof of Theorem \ref{theorem-rank-equality} is completed.
	\end{proof}
	
	\section{Proof of Theorem \ref{theorem-optimal-solution}} 
	\label{proof-theorem-optimal-solution}
	\begin{proof}[Proof of Theorem \ref{theorem-optimal-solution}]
		Let $\tT'$ and $(\tTt', \{ \mat{U}_k' \}_{k=1}^{K} )$  be the global optimal solutions of (\ref{eq-constrainted-problem}) and (\ref{eq-constrainted-problem-faster}), respectively. Thus, we have
		\begin{equation} \label{eq-suboptimal-1}
			\begin{split}
				&\frac{1}{2} \fros{\mat{y} - \Po{\tT'}} +  \lambda \trnn{\tT'} \\
				\leq &\frac{1}{2}  \fros{\mat{y} - \Po{\tTt' \times_1 \mat{U}_1' \cdots \times_K \mat{U}_K' } } + \lambda \trnn{\tTt'}. 
			\end{split}
		\end{equation}
		Suppose $( r_k r_{k+1} \wedge d_k) \leq R_k \leq d_k$, $k\in [K]$, then we can perform Tucker decomposition $\tT' = \tTt^* \times_1 \mat{U}_1^* \cdots \times_K \mat{U}_K^*$, where  $\mat{U}_k^* \in St(d_k,R_k), k\in [K] $. Since $(\tTt', \{ \mat{U}_k' \}_{k=1}^{K} )$ is the global optimal solution of (\ref{eq-constrainted-problem-faster}), we have 
		\begin{equation} \label{eq-subpotimal-2}
			\begin{split}
				& \frac{1}{2} \fros{\mat{y} - \Po{\tTt' \times_1 \mat{U}_1' \cdots \times_K \mat{U}_K'  }} + \lambda \trnn{\tTt'} \\
				\leq & \frac{1}{2} \fros{\mat{y} - \Po{\tTt^* \times \mat{U}_1^* \cdots \times_K \mat{U}_K^* }} + \lambda \trnn{\tTt^*} \\
				= & \frac{1}{2} \fros{\mat{y} - \Po{\tT'}} + \lambda \trnn{\tT'}.
			\end{split} 
		\end{equation}
		According to (\ref{eq-suboptimal-1}) and (\ref{eq-subpotimal-2}), it can be observed that ($\{ \mat{U}_k' \}_{k=1}^K, \tTt' $) is  also the optimal solution of problem (\ref{eq-constrainted-problem}).
	\end{proof}

		\section{Proof of Theorem \ref{theorem-convergence-fantrc}}
		\label{sec-proof-convergence-fantrc}
			We first introduce the important lemma:
					\begin{lemma}\cites{Supplin2010augmented} \label{lemma-lin-hilbert}
							Let $\mathcal{H}$ be the real Hilbert space endowed with an inner product $\left\langle \cdot, \cdot \right\rangle$ and a corresponding norm $\| \cdot \|$, and $y \in \partial \| x \|$, where $\partial f(x)$ is the subgradient of $f(x)$. Then $\| y \|^* = 1$ if $x \neq 0$, and $\| y \|^* \leq 1$ if $x=0$, where $\| \cdot \|^*$ is the dual norm of $\|  \cdot \|$.
						\end{lemma}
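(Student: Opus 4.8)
The plan is to work directly from the two defining objects appearing in the statement: the subgradient inequality characterizing $y\in\partial\|x\|$, and the variational definition of the dual norm $\|y\|^* = \sup_{\|z\|\le 1}\langle y, z\rangle$. Since $\|\cdot\|$ is convex, the membership $y\in\partial\|x\|$ means precisely that $\|z\| \ge \|x\| + \langle y, z-x\rangle$ holds for every $z\in\mathcal{H}$. Both conclusions of the lemma will be read off from judicious choices of the test vector $z$ together with the triangle inequality, so no machinery beyond elementary convex analysis is required.

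First I would establish the universal upper bound $\|y\|^*\le 1$, which settles the case $x=0$ outright and supplies half of the case $x\neq 0$. Fix any $w$ with $\|w\|\le 1$ and substitute $z = x + tw$ for $t>0$ into the subgradient inequality; this gives $\|x+tw\|\ge\|x\| + t\langle y,w\rangle$, whence $t\langle y,w\rangle \le \|x+tw\|-\|x\|\le t\|w\|\le t$ by the triangle inequality. Dividing by $t$ and taking the supremum over the unit ball yields $\|y\|^*\le 1$, uniformly in whether or not $x$ vanishes.

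It then remains to upgrade this to equality when $x\neq 0$, and here the crux is to show that the subgradient is exactly aligned with $x$, i.e.\ $\langle y,x\rangle = \|x\|$. I would extract this from two substitutions in the subgradient inequality: taking $z=0$ gives $0\ge\|x\|-\langle y,x\rangle$, so $\langle y,x\rangle\ge\|x\|$, while taking $z=2x$ gives $2\|x\|\ge\|x\|+\langle y,x\rangle$, so $\langle y,x\rangle\le\|x\|$; together these force $\langle y,x\rangle=\|x\|$. Evaluating the dual-norm supremum at the admissible test point $x/\|x\|$ then gives $\|y\|^*\ge\langle y, x/\|x\|\rangle = 1$, which combined with the upper bound of the previous step yields $\|y\|^*=1$.

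The argument is entirely routine, so there is no genuine obstacle; the only point requiring care is the two-sided extraction of the alignment identity $\langle y,x\rangle=\|x\|$, since it is precisely this exact equality — rather than a mere inequality — that pins the dual norm to the value $1$ in the nonzero case. I would also note that the $t\to 0^+$ perturbation in the upper-bound step is exactly what makes the bound valid in both cases at once, so the $x=0$ case needs no separate treatment beyond the computation already performed.
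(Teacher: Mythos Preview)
Your proof is correct and entirely self-contained. The paper itself does not supply a proof of this lemma; it is quoted from \cites{Supplin2010augmented} (Lin et al.) and invoked as a black box in the convergence analysis of Algorithm~2, so there is no in-paper argument to compare against. Your approach---reading both the upper bound $\|y\|^*\le 1$ and the alignment identity $\langle y,x\rangle=\|x\|$ directly from the subgradient inequality with well-chosen test vectors---is the standard elementary derivation of this fact, and nothing more is needed.
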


		
		\emph{Part 1:}	We first show that the boundness of sequences $( \tT^{t},  \tTt^{t},  \{ \mat{U}_k^t \}_{k=1}^K, \{\tensor{L}^{k,t}\}_{k=1}^{K}, \{ \tensor{R}^k \}_{k=1}^K   )$ generated by Algorithm 2. 
		According to the first-order optimal conditions of problem (\ref{eq-Lk-subproblem}), we have 
		\begin{equation}
			0 \in  \frac{\lambda \alpha_k}{\eta^t}  \partial \| \mat{L}_{(k,s)}^{k,t+1} \|_*  +  \mat{L}_{(k,s)}^{k,t+1} 
			+\frac{1}{\eta^t}\mat{R}_{(k,s)}^{k,t} - \tilde{\mat{T}}_{(k,s)}^{t+1}, 
		\end{equation}
		and by (\ref{eq-update-R}), we have
		\begin{equation}
			- \frac{1}{\lambda \alpha_k} \mat{R}_{(k,s)}^{k,t+1} \in   \partial \| \mat{L}_{(k,s)}^{k,t+1} \|_*,
		\end{equation}
		according to Lemma \ref{lemma-lin-hilbert}, we have 
		\begin{equation} \label{eq-boundness-R}
			\frac{1}{\lambda \alpha_k} \| \mat{R}_{(k,s)}^{k,t+1} \| \leq 1.
		\end{equation}
		Thus, the sequence $ \{ \tensor{R}^{k,t} \}_{k=1}^K $ are bounded sequences. 
		
		According to (\ref{eq-update-Uk}), (\ref{eq-update-tTt}) and (\ref{eq-update-T}), we have 
		\begin{equation}
			\begin{split}
				& \ell_{\eta^t}^2 (\tT^{t+1},  \tTt^{t+1},  \{ \mat{U}_k^{t+1} \}_{k=1}^K, \{\tensor{L}^{k,t+1}\}_{k=1}^{K},\tensor{P}^{t}, \{\tensor{R}^{k,t}\}_{k=1}^K )   \\
				\leq & \ell_{\eta^{t}}^2 ( \tT^{t},  \tTt^{t},  \{ \mat{U}_k^{t} \}_{k=1}^K, \{\tensor{L}^{k,t}\}_{k=1}^{K},\tensor{P}^{t}, \{\tensor{R}^{k,t}\}_{k=1}^K ) \\
				\leq & \ell_{\eta^{t}}^2 ( \tT^{t},  \tTt^{t},  \{ \mat{U}_k^{t} \}_{k=1}^K, \{\tensor{L}^{k,t}\}_{k=1}^{K},\tensor{P}^{t-1}, \{\tensor{R}^{k,t-1}\}_{k=1}^K ) + \frac{\eta^{t-1} + \eta^{t} }{2(\eta^{t-1})^2} (\sum_{k=1}^{K} \| \tensor{R}^{k,t} -  \tensor{R}^{k,t-1} \|_F^2 + \| \tensor{P}^{k,t} -  \tensor{P}^{k,t-1} \|_F^2 ).
			\end{split}
		\end{equation}
		According to the boundness of sequence $\tensor{P}^t $ and (\ref{eq-boundness-R}), we can obtain that $ \ell_{\eta^t}^2 (\tT^{t+1},  \tTt^{t+1},  \{ \mat{U}_k^{t+1} \}_{k=1}^K, \{\tensor{L}^{k,t+1}\}_{k=1}^{K},\tensor{P}^{t}, \{\tensor{R}^{k,t}\}_{k=1}^K ) < \infty$.  Then according to (\ref{eq-Lagrangisn-faster}), we have 
		\begin{equation} \label{eq-bound-obj}
			\begin{split}
				&\frac{1}{2} \fros{ \mat{y} - \Po{\tT ^t} }  + \kappa_{\delta}^{\infty} (\tT^t) +  \lambda \sum_{k=1}^{K} \alpha_{k} \| \mat{L}^{k,t}_{(k,s)} \|_*   \\
				= & \ell_{\eta^t}^2 (\tT^{t},  \tTt^{t},  \{ \mat{U}_k^{t} \}_{k=1}^K, \{\tensor{L}^{k,t}\}_{k=1}^{K},\tensor{P}^{t-1}, \{\tensor{R}^{k,t-1}\}_{k=1}^K )  -    \sum_{k=1}^{K} (   \langle \tensor{L}^{k,t} - \tTt^t, \tensor{R}^{k,t} \rangle  +\frac{\eta^t}{2}  \fros{\tensor{L}^{k,t} - \tTt^{t} }  )  \\
				& - \langle \tensor{P}^{t-1},  \tT^{t} - \tTt^{t} \times_1 \mat{U}_1^t \cdots \times_K \mat{U}_K^t \rangle -\frac{\eta^t}{2} \fros{ \tT^t - \tTt^ \times_1 \mat{U}_1^t \cdots \times_K \mat{U}_K^t } \\
				= & \ell_{\eta^t}^2 (\tT^{t},  \tTt^{t},  \{ \mat{U}_k^{t} \}_{k=1}^K, \{\tensor{L}^{k,t}\}_{k=1}^{K},\tensor{P}^{t-1}, \{\tensor{R}^{k,t-1}\}_{k=1}^K )  - \frac{1}{2\eta^{t-1} }   \sum_{k=1}^{K} ( \| \tensor{R}^{k,t} \|_F^2 - \| \tensor{R}^{k,t-1} \|_F^2 ) \\
				& - \frac{1}{2\eta^{t-1}}  ( \| \tensor{P}^{t} \|_F^2 - \| \tensor{P}^{t-1} \|_F^2 ) < \infty.
			\end{split}
		\end{equation}
		By (\ref{eq-bound-obj}) and $\mat{U}_k^t \in St(d_k, R_k), k\in [K]$,  the sequences $( \tT^{t},  \tTt^{t},  \{ \mat{U}_k^t \}_{k=1}^K, \{\tensor{L}^{k,t}\}_{k=1}^{K}, \{ \tensor{R}^k \}_{k=1}^K   )$ generated by Algorithm 2 are bounded. 
		
		\emph{Part 2:}  Then we prove that the sequences generated by Algorithm 2 are Cauchy sequences.  By (\ref{eq-update-R}), we have
		\begin{equation}
			\sum_{t=1}^{\infty} \| \tensor{L}^{k,t} - \tTt^t \|_F = \sum_{t=1}^{\infty} \frac{1}{\eta^t} \| \tensor{R}^{k,t+1} - \tensor{R}^{k,t} \|_F. 
		\end{equation}
		Since  sequences $\{\tensor{R}^{k,t} \}_{k=1}^K$ are bounded and $\lim_{t \rightarrow \infty} \eta^t =  \infty$, we have $\lim_{t\rightarrow \infty} \| \tensor{L}^{k,t}  - \tTt^{t}\|_F  = 0$, that is, $\tensor{L}^{k,\infty}  = \tTt^{\infty}, k \in [K]$. 
		
		According to the first-order optimal condition of problem (\ref{eq-Lk-subproblem}) and $\exists \mat{F}^{k,t} \in \partial \| \mat{L}^{k,t}_{(k,s)} \|_*$ such that 
		\begin{equation}
			\frac{\lambda \alpha_k}{\mu^t} \mat{F}^{k,t}+ \mat{L}_{(k,s)}^{k,t} - \tMt^{t} + \frac{1}{\eta^t} \mat{R}_{(k,s)}^{k,t} = 0. 
		\end{equation}
		Similar to (\ref{eq-bound-M}), we have
		\begin{equation}
			\begin{split}
				\sum_{t=1}^{\infty} \| \mat{L}_{(k,s)}^{k,t} - \mat{L}_{(k,s)}^{k,t-1} \|_F &=  \sum_{t=1}^{\infty} \frac{\| (\nu-1)\mat{R}_{(k,s)}^{k,t} - \nu \mat{R}_{(k,s)}^{k,t-1} + {\lambda \alpha_k}\mat{F}^{k,t} \|_F}{\eta^{t}} \\
				& < {b}_{max} \sum_{t=1}^{\infty} \frac{1}{\eta^t} \\
				& = \frac{{b}_{max}}{\eta^0} \sum_{t=1}^{\infty} {\nu^{-t}}  \\
				&= \frac{{b}_{max}  }{\eta^0(\nu-1 )} < \infty.
			\end{split}
		\end{equation}
		where $b_{max} = \max(b_1, b_2,\cdots, b_{\infty})$, where  $b_t= \| (\nu-1)\mat{R}_{(k,s)}^{k,t} - \nu \mat{R}_{(k,s)}^{k,t-1} + {\lambda \alpha_k}\mat{F}^{k,t} \|_F$. Thus,  $  \{ \tensor{L}^{k,t} \}_{k=1}^K  $ are  Cauchy sequences. Similarly, it can be verified that  $\{  \tT^t, \tTt^{t}  \}$ are also Cauchy sequence.  
		
		\emph{Part 3:} We show that any limit point generated by Algorithm 2 satisfied the KKT conditions of problem (\ref{eq-constrainted-problem-faster}). The KKT conditions of problem (\ref{eq-constrainted-problem-faster}) are 
		\begin{equation} 
			\begin{split} 
				 0 \in & \mat{y} - \Po{\tT^* } + \lambda  \sum_{k=1}^{K}  \alpha_{k} \partial \| \tTt^{*} \|_* + \partial \kappa_{\delta}^{\infty} (\tT^*)  , \\
				& \tT^*=\tTt^* \times  {\mat{U}}_1^* \cdots \times_K \mat{U}_K^* , \\ 
				& \mat{U}_k^* \in  St(d_k, R_k), k\in [K].
			\end{split}
		\end{equation} 
		According to  the first-order optimal condition of problem (\ref{eq-update-Mk}), we have 
		\begin{equation} \label{eq-first-condition-L}
			0 \in \eta^t ( \tensor{L}^{k,t}  - \tTt^{t+1} )+  \tensor{R}^{k,t} -  {\lambda \alpha_k} \text{fold}(	\partial \| \mat{L}_{(k,s)}^{k,t} \|_*), k \in [K]. 
		\end{equation}
		Similarly, for problem (\ref{eq-update-tTt}), we have
		\begin{equation} \label{eq-first-condition-Ttilde}
			0 =  \tTt^{t+1} -   \frac{1}{K+1}  (\frac{1}{\eta^t} \tensor{P}^t + \tT ^{t}) \times_1 \mat{U}_{1}^{t+1,\top} \cdots \times_K \mat{U}_{K}^{t+1,\top} - \frac{1}{K+1} \sum_{k=1}^{K} \frac{1}{\eta^t} \tensor{R}^{k,t} - \tensor{L}^{k,t}.
		\end{equation}
		Since $\{ \tensor{L}^{k,t} \}_{k=1}^{K} $   and $\tTt^{t} $ are Cauchy sequences, and $\tensor{L}^{k,\infty} = \tTt^{\infty}$. Combining (\ref{eq-update-T}), (\ref{eq-first-condition-L}) and (\ref{eq-first-condition-Ttilde}), we can obtain that  
		\begin{equation}
			\begin{split}
				0 \in & {\lambda \alpha_k} \text{fold}(	\partial \| \tMt^{\infty} \|_*) +    \mat{y} - \Po{\tT^\infty} +  \partial  \kappa_{\delta}^{\infty} (\tT^\infty), \\
				&  \tT^\infty =\tTt^\infty \times  {\mat{U}}_1^\infty \cdots \times_K \mat{U}_K^\infty , \\ 
				& \mat{U}_k^\infty \in  St(d_k, R_k), k\in [K].
			\end{split}
		\end{equation}
		Therefore, the sequences $( \tT^t, \tTt^t,  \{ \tensor{L}^{k,t} \}_{k=1}^{K}, \{ \mat{U}_k^{t} \}_{k=1}^{K})$ generated by Algorithm 2 converges to the KKT point of problem (\ref{eq-constrainted-problem-faster}).
\end{appendices}


\bibliographystyles{IEEEbib}
\bibliographys{library_supp,refs_trnnm_supp}

\end{document}